\setlist[itemize,1]{label=$\bullet$}
\setlist[itemize,2]{label=$\bullet$}
\setlist[itemize,3]{label=$\bullet$}
\setlist[itemize,4]{label=$\bullet$}
\setlist[itemize,5]{label=$\bullet$}
\setlist[itemize,6]{label=$\bullet$}
\setlist[itemize,7]{label=$\bullet$}
\setlist[itemize,8]{label=$\bullet$}
\setlist[itemize,9]{label=$\bullet$}
\theoremstyle{theorem}
\newtheorem{theorem}{Theorem}
\newtheorem{corollary}[theorem]{Corollary}
\newtheorem{lemma}[theorem]{Lemma}
\newtheorem{proposition}[theorem]{Proposition}
\theoremstyle{definition}
\newtheorem{definition}[theorem]{Definition}
\newtheorem{example}[theorem]{Example}
\begin{document}

\title{A Framework for the construction of upper bounds on the number of affine linear regions of ReLU feed-forward neural networks}
\author{Peter Hinz\\
  Sara van de Geer
}

\maketitle
\begin{abstract}
  We present a framework to derive upper bounds on the number of regions that feed-forward neural networks with ReLU activation functions are affine linear on. It is based on an inductive analysis that keeps track of the number of such regions per dimensionality of their images within the layers. More precisely, the information about the number regions per dimensionality is pushed through the layers starting with one region of the input dimension of the neural network and using a recursion based on an analysis of how many regions per output dimensionality a subsequent layer with a certain width can induce on an input region with a given dimensionality. The final bound on the number of regions depends on the number and widths of the layers of the neural network and on some additional parameters that were used for the recursion. It is stated in terms of the $L1$-norm of the last column of a product of matrices and provides a unifying treatment of several previously known bounds: Depending on the choice of the recursion parameters that determine these matrices, it is possible to obtain the bounds from Mont\'{u}far~\cite{Montufar:2014:NLR:2969033.2969153} (2014),~\cite{Montufar17} (2017) and Serra et. al.~\cite{DBLP:BoundingCounting} (2017) as special cases. For the latter, which is the strongest of these bounds, the formulation in terms of matrices provides new insight. In particular, by using explicit formulas for a Jordan-like decomposition of the involved matrices, we achieve new tighter results for the asymptotic setting, where the number of layers of the same fixed width tends to infinity. 
\end{abstract}
\section{Introduction}
In recent time, artificial neural networks get increasingly important in state-of-the-art machine-learning technology. Their success as a machine-learning algorithm is based partly on their flexibility that allows a myriad of possible architectures and on efficient training algorithms and specialized hardware. The theoretical properties of these functions that are currently a field of active study. 

In this work, we focus a special type of feed-forward neural networks. They are a functions that are a composition of \emph{layer functions}. These layer functions map a real vector to another real vector of possibly different length by first applying an affine linear map and then sending each coordinate through a function, called \emph{activation function}. If the \emph{Rectifier Linear Unit} (ReLU) that maps $x\in\mathbb{R}$ to $\max(0,x)$ is used as activation function, the overall composition of the layer functions is a \emph{ReLU feed-forward neural network} and these are the functions we will study in this work.
Such a function has the interesting property that it is piece-wise affine linear. More precisely, there exists a finite number of convex subsets of the input space such that it is represented as an affine linear function on each of these subsets. The goal of this work is to find upper bounds on the number of these subsets in terms of the number $L$ of layer functions used in the composition and the \emph{widths}, i.e. the dimensions that the layer functions map from and to. 

Such bounds could be of potential use in the context of approximation theory. For example, they could be used to prove that a target piece-wise affine linear function cannot be exactly represented by a ReLU feed-forward neural network if a resulting upper bound on the number of affine linear regions is smaller than necessary for the target function. More generally, these bounds could be combined with results from approximation theory~\cite{piecewiseQuadratic} about how well a piece-wise affine linear function with a certain number of affine linear regions can approximate other functions of specific properties at best. This way one could derive theoretical lower bounds on approximation errors of these neural networks, which in turn could be of interest for the analysis of generalization bounds for empirical risk minimization whose derivations usually involve such approximability properties \cite{Bartlett2006}. 

We present an abstract framework that allows the construction of such bounds. Its core idea is to push information about affine linear regions through the layers. This information consists of a sequence of numbers which represent a histogram of the dimensionalities of the images of the affine linear regions up to the current layer. For example, let the input width of the network be $n_0=2$ and let the first layer function $h_1$ have output dimension $n_1=3$. Then it could be that this layer function induces 7 regions in the input space $\mathbb{R}^2$ and on every of these regions, the first layer function $h_1$ is affine linear. Therefore, we can consider the dimensionality of the image or equivalently the rank of $h_1$ on every of these regions. This information can be represented by a sequence of natural numbers. Since we start with the input space $\mathbb{R}^2$, the initial histogram would be $(0,0,1,0,\dots)$ because the indexing starts with $0$ and there is only one region of dimension $2$ which is the whole input space $\mathbb{R}^2$. The histogram after $h_1$ was applied could then be $(1,3,3,0,\dots)$. 

Of course, this histogram depends on the weights for the affine linear map of the layer function $h_1$ but one can do a worst-case analysis. To do so, one needs to introduce an order relation on the set of these histograms. Given worst-case histogram bounds for ReLU layer functions of every input and output dimension, one can push these worst-case bounds on the histograms of the region image dimensionalities through the layers of the network. In a final step we sum up all the entries, i.e. take the $L1$ norm of the last histogram to obtain a bound on the number of affine linear regions. It turns out that the transition of a worst-case histogram from one layer to the next can be written as a linear map such that our main result can be written in terms of matrices: For a ReLU feed-forward neural network with input dimension $n_0$ and $L$ layers of output widths $n_1,\dots,n_L$, the number of affine linear regions is bounded by
\begin{equation}
  \label{eq:boundIntroduction}
 \|B^{(\gamma)}_{n_L}M_{n_{L-1},n_L}\dots B^{(\gamma)}_{n_1}M_{n_{0},n_{1}}e_{n_{0}+1}\|_1,
\end{equation}
where $B^{(\gamma)}_{n_1},\dots,B^{(\gamma)}_{n_L}$ are square upper triangular matrices of dimensions $n_1,\dots,n_L$ specific to a parameterization $\gamma$ related to the worst-case histogram bounds above and the $n_{i-1}\times n_{i}$ matrices $M_{n_{i-1},n_i}$, $i\in\left\{ 1,\dots,L \right\}$ serve the purpose of connecting inputs and outputs of different dimensionality. The vector $e_{n_0+1}$ is the unit vector in $\mathbb{R}^{n_0+1}$ with zeros at indices $1,\dots,n_0$ and value $1$ at index $n_0+1$. 

It turns out that all known concrete bounds from \cite{Montufar:2014:NLR:2969033.2969153}, \cite{Montufar17} and~\cite{DBLP:BoundingCounting} can be derived as special cases of our bound~\eqref{eq:boundIntroduction} for appropriate parametrization $\gamma$. The matrix representation that our framework yields is different from the existing representation and is very useful because the eigenvalues of the involved matrices can be read directly from the diagonal. For the strongest of the above bounds from~\cite{DBLP:BoundingCounting}, it is even possible to find a Jordan-like decomposition for the matrices $(B^{(\gamma)}_{n'})_{n'\in\mathbb{N}_+}$. This fact allows us to enter asymptotic settings. As an illustration, we consider the case where the input dimension $n_0\in\mathbb{N}_{+}$ is arbitrary and the dimensions of the other layers $n_1,\dots,n_L$ are fixed to be equal to $n\in\mathbb{N}_{+}$. The number of layers $L$ is variable. We provide a new explicit analytical formulae where previously only a weaker bound based on a Stirling approximation was known, see~\cite{DBLP:BoundingCounting}. In particular, when the input dimension is also equal to $n$, i.e. all $L$ layers have the same input and output width, for odd $n$ and $L\to\infty$ we achieve an asymptotic order $\mathcal{O}(2^{L (n-1)})$ compared to the order $ \mathcal{O}\left( 2^{L\left( n-1/2+\log_2\left( 1+1/\sqrt{\pi n} \right)/2 \right)} \right)$ from~\cite{DBLP:BoundingCounting}. This means that our new bound gains a half dimension in each layer in this setting. 

However, the use of our framework is not limited to the above results. We also explain how our theory can be exploited to derive further stronger bounds. For this, a combinatorial and geometrical problem needs to be solved to find a specific parametrization $\gamma$. 

This article is structured as follows. Section~\ref{sec:preliminaries} states basic definitions and results needed for the construction of our framework. Section~\ref{sec:existingBounds} gives an overview of existing bounds on the number of linear regions of ReLU feed-forward neural networks. In Section~\ref{sec:framework}, we derive and explain our main result and show how the bounds from Section~\ref{sec:existingBounds} can be derived as special cases. Furthermore, we compare them in an asymptotic setting and obtain a new tighter result. Finally, we note how our framework can be used to derive new stronger results. Section~\ref{sec:summary} summarizes our findings. The proofs are deferred to the Appendix~\ref{sec:appendix}.

\section{Preliminaries}
\label{sec:preliminaries}
In this section, we will provide definitions and explain their motivation. First, we will focus on a single layer and later on multiple composed layers. As a convention, we will write $\mathbb{N}$ for the nonnegative integers and $\mathbb{N}_+$ for $\mathbb{N}\setminus\left\{ 0 \right\}$.
Furthermore, let $\textnormal{diag}( n_1,\dots,n_k)$ be the $k\times k$ diagonal matrix with values $n_1,\dots,n_k$ on its diagonal and $I_{n}$ be the $n\times n$ identity matrix. The indicator function of a set $A$ will be denoted by $\mathds{1}_A$.
\subsection{One layer}
\label{sec:AnalysisOneLayer}
The \emph{ReLU activation function} is $\sigma: \mathbb{R}\to\mathbb{R},\; x\mapsto \max(0,x)$. For $n,n'\in\mathbb{N}_{+}$, we will call $h:\mathbb{R}^{n}\to \mathbb{R}^{n'}$ a \emph{ReLU Layer function} with weight matrix $W^{(h)}\in\mathbb{R}^{n'\times n}$ and bias vector $b^{(h)}\in\mathbb{R}^{n'}$ if it has the form 
\begin{equation}
  h:\mathbb{R}^{n}\to\mathbb{R}^{n'}, x\mapsto \left( \sigma(\langle x,w^{(h)}_{i}\rangle+b_i^{(h)}) \right)_{i\in\left\{ 1,\dots,n' \right\}},
\end{equation}
  where $w_i^{(h)}$ is the $i$-th row of $W^{(h)}$ for $i\in\left\{ 1,\dots,n' \right\}$. We define the set of such functions by
  \begin{figure}
  \centering
\begin{tikzpicture}[
plain/.style={
  draw=none,
  fill=none,
  },
net/.style={
  matrix of nodes,
  nodes={ draw,
    circle,
    inner sep=10pt
    },
  nodes in empty cells,
  column sep=-0.5cm,
  row sep=-9pt
  },
>=latex
]
\matrix[net] (mat)
{
  |[plain]|  $\mathbb{R}^{n}$ & |[plain]|$\overset{h}{\longrightarrow}$&|[plain]|$\mathbb{R}^{n'}$ \\
													& |[plain]|& 			\\
													|[plain]|$\vdots$& |[plain]|&|[plain]|$\vdots$\\
													& |[plain]|& 		\\    };
\foreach \ai in {2,4}
{\foreach \aii in {2,4}
{
  \draw[->] (mat-\ai-1) -- (mat-\aii-3);
}
}
\end{tikzpicture}
\caption{A function $h\in \textnormal{RL}(n,n')$ maps between the spaces $\mathbb{R}^{n}$ and $\mathbb{R}^{n'}$. This corresponds to a fully connected layer.}
  \label{fig:oneLayer}
\end{figure}
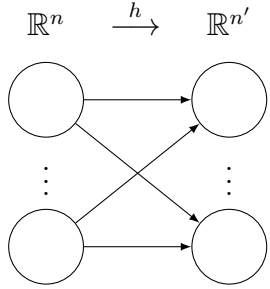
  \begin{equation}
	\label{eq:RL}
	\text{RL}(n,n'):=\left\{ h:\mathbb{R}^{n}\to\mathbb{R}^{n'}\mid h\text{ is a ReLU Layer function} \right\}.
  \end{equation}
In the sequel, we will assume $n,n'\in\mathbb{N}_{+}, h\in\textnormal{RL}(n,n')$.
For $x\in \mathbb{R}^{n}$ and $i\in\left\{ 1,\dots,n' \right\}$, we say that the $i$-th unit of $h\in\textnormal{RL}(n,n')$ is \emph{active} if $\langle x, w_i\rangle +b_i>0$. In the input domain $\mathbb{R}^{n}$, the subsets where a unit is active and inactive are separated by the $(n-1)$-dimensional hyperplanes
  \begin{equation}
	H_i^{(h)}:=\left\{ x\in\mathbb{R}^{n}\mid \langle x,w_i^{(h)}\rangle+b^{(h)}_i=0  \right\}\subset \mathbb{R}^{n}\quad \textnormal{ for } i\in\left\{ 1,\dots,n' \right\}.
  \end{equation}
Obviously, these $n'$ hyperplanes partition the space $\mathbb{R}^{n}$ into at most $2^{n'}$ regions $R_{h}(s), s\in\left\{ 0,1 \right\}^{n'}$ as defined below.  
\begin{definition}
  For $x\in\mathbb{R}^{n}$, we define the \emph{signature} $S_h(x)\in\left\{ 0,1 \right\}^{n'}$ of $x$ by
  \label{def:Signat}
  \begin{equation*}
	S_h(x)_i=
	\begin{cases}
	  1\text{ if } \langle x, w_i^{(h)}\rangle +b_i>0\\
	  0\text{ if } \langle x, w_i^{(h)}\rangle +b_i\le0
	\end{cases}\quad \text{ for }i\in\left\{ 1,\dots,n' \right\}
  \end{equation*}
\end{definition} The signature $S_h(x)$ tells us which units of $h$ are active for a specific input value $x\in\mathbb{R}^{n}$. Now we define the set of inputs that have a specific signature.
\begin{definition}
  \label{def:SigSReg}
For $s\in\left\{ 0,1 \right\}^{n'}$, the \emph{region $R_h(s)\subset\mathbb{R}^{n}$} corresponding to the signature $s$ is the set
\bibliographystyle{ieeetr}
  \begin{equation*}
	R_h(s):=\left\{ x\in\mathbb{R}^{n} \;\vert\;S_h(x)=s \right\}.
  \end{equation*}
\end{definition} Lemma~\ref{lem:Convex} from the appendix shows that these regions are always convex subsets of the input domain $\mathbb{R}^{n}$.
The Figures~\ref{fig:ex2a} and \ref{fig:ex2b} illustrate the definitions for two different $h\in\textnormal{RL}(2,3)$.
\begin{figure}[htbp]
  \centering
  \includegraphics[width=0.985\textwidth]{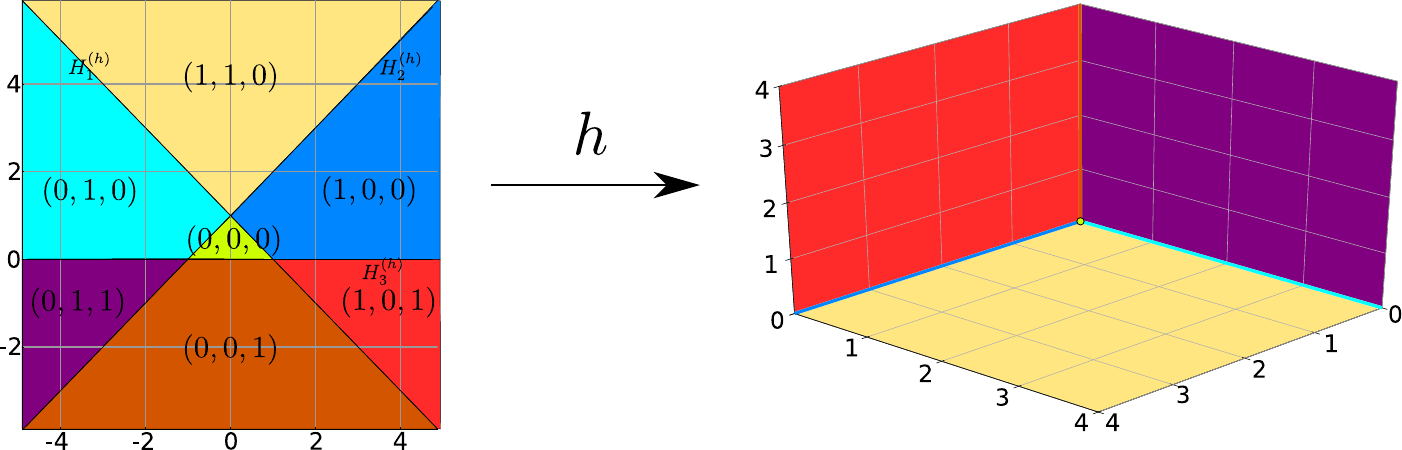}
  \caption{For $h\in\text{RL}(2,3)$ with
	$W^{(h)}=\tiny\protect\begin{pmatrix} 
	  \tfrac{1}{\sqrt{2}}&\tfrac{1}{\sqrt{2}}\\
	  -\tfrac{1}{\sqrt{2}}&\tfrac{1}{\sqrt{2}}\\
	   0&-1
\protect\end{pmatrix}$
and 
$b^{(h)}= 
\tiny
  \protect\begin{pmatrix}
	-\tfrac{1}{\sqrt{2}}\\
	-\tfrac{1}{\sqrt{2}}\\
	0
  \protect\end{pmatrix}
  $ we get seven regions in the input space on the left. Each such region is labeled with its signature $s\in\left\{ 0,1 \right\}^{3}$ and colored for better visibility. The same colors are used in the projection of the image of $h$ on the right side. For $s
  \in\left\{ 0,1 \right\}^3$, the region $R_{h}(s)$ on the input space is mapped to a point, a subset of a line or a subset of a plane in $\mathbb{R}^{3}$, depending on the number of active units $|s|$.
}
  \label{fig:ex2a}
\end{figure}
\begin{figure}[htbp]
  \centering
  \includegraphics[width=0.985\textwidth]{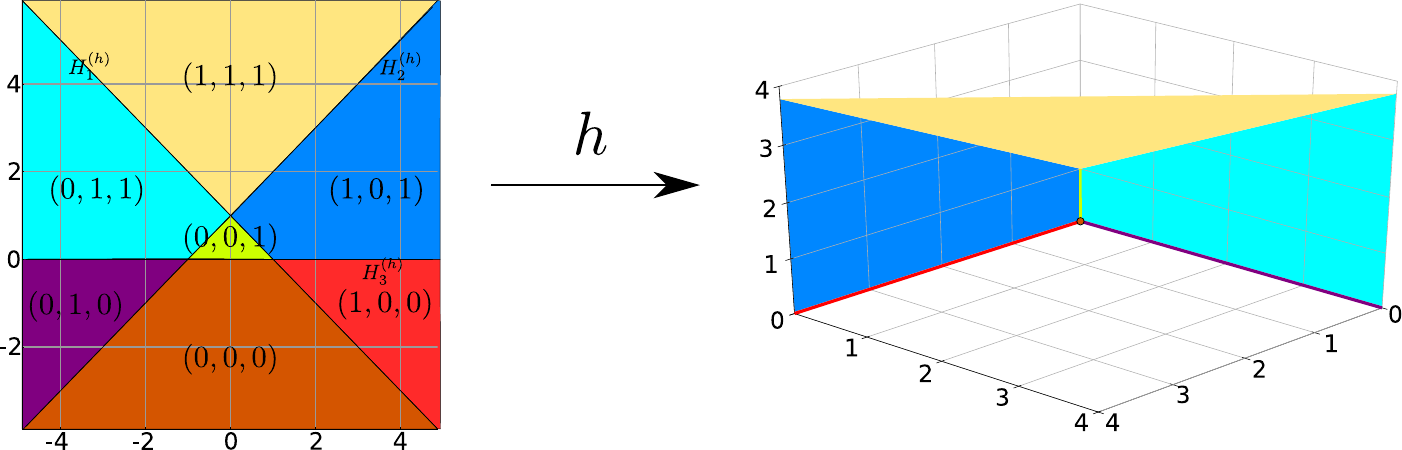}
  \caption{For $h\in\text{RL}(2,3)$ with 
	$W^{(h)}=\tiny\protect\begin{pmatrix} 
	   \tfrac{1}{\sqrt{2}}&\tfrac{1}{\sqrt{2}}\\
	   -\tfrac{1}{\sqrt{2}}&\tfrac{1}{\sqrt{2}}\\
	   0&1
\protect\end{pmatrix}$ and 
$b^{(h)}= 
\tiny
  \protect\begin{pmatrix}
	-\tfrac{1}{\sqrt{2}}\\
	-\tfrac{1}{\sqrt{2}}\\
	0
\protect\end{pmatrix}$, the image looks very different from Figure~\ref{fig:ex2a}
. Even though the yellow area $R_h(s)$ corresponds to $s=(1,1,1)$ with $|s|=3$ active neurons, the dimension of its image $h(R_h(s))$ is $2$ because it is bounded by the input dimension $2$.
}
  \label{fig:ex2b}
\end{figure}
\begin{definition}
  \label{def:AttainedSig}
We denote the \emph{attained signatures} of $h$ by $\mathcal{S}_h:=\left\{ S_h(x)\in\left\{ 0,1 \right\}^{n'}\;\vert\;x\in\mathbb{R}^{n} \right\}$.
\end{definition}This can also be written as the set of all signatures such that the corresponding region is non-empty, i.e. $ \mathcal S_h=\left\{ s\in\left\{ 0,1 \right\}^{n'}|R_h(s)\neq\left\{  \right\} \right\}$.
\subsection{Multiple Layers}
\label{sec:multipleLayers}

We will denote the number of layers excluding the input layer by $L\in\mathbb{N}_{+}$ and the dimension of the input layer by $n_0$ . The dimensions of the other layers are denoted by $n_1,\dots,n_L\in\mathbb{N}_{+}$. For convenience, we will write $\mathbf{n}=(n_1,\dots,n_L)\in\mathbb{N}_{+}^{L}$ and define
  \begin{equation}
	\text{RL}(n_0,\mathbf{n}):=\text{RL}(n_0,n_1)\times\text{RL}(n_1,n_2)\times\dots\times\text{RL}(n_{L-1},n_L).
  \end{equation} 
  Usually, the last layer of a feed-forward neural network is an affine linear map without activation function. However, since such a final additional affine linear map obviously does not increase the number of regions that a neural network with ReLU activation functions is affine linear on, they are not important for the construction of corresponding upper bounds and hence, we define our networks of interest as above without a final affine linear map.
Throughout this section, we will assume
\begin{equation}
  n_0\in\mathbb{N}_{+}, L\in\mathbb{N}_+, \mathbf{n}=(n_1,\dots,n_L)\in\mathbb{N}_+^{L}, \mathbf{h}\in\textnormal{RL}(n_0,\mathbf{n}).
  \label{eq:MultiLayerSetting}
\end{equation}
  
  Note that for $\mathbf{h}=(h_1,\dots,h_L)\in\text{RL}(n_0,\mathbf{n})$, the functions $h_1,\dots,h_L$ can be composed. We will denote this composition by $f_{\mathbf{h}}$, i.e.
  \begin{equation}
	\label{eq:compositionf}
	f_{\mathbf{h}}:
	\begin{cases}
	  \mathbb{R}^{n_0}&\to\mathbb{R}^{n_L}\\
	  x&\mapsto {h}_L\circ\dots\circ {h}_1 (x).
	\end{cases}
  \end{equation}
  Such functions are the multilayer feed-forward neural networks we are analyzing in this article.
  Figure~\ref{fig:multi} visualizes this setting. For $l\in\left\{ 1,\dots,L \right\}$, the matrix and bias vector corresponding to $h_l$ are denoted by $W^{(h_l)}$ with rows $w^{(h_l)}_1,\dots,w^{(h_l)}_{n_l}$ and $b^{(h_l)}$.
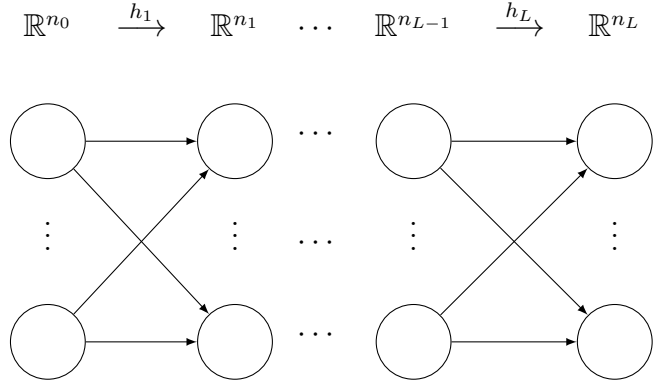
\begin{figure}[h]
  \centering
\begin{tikzpicture}[
plain/.style={
  draw=none,
  fill=none,
  },
net/.style={
  matrix of nodes,
  nodes={
    draw,
    circle,
    inner sep=10pt
    },
  nodes in empty cells,
  column sep=-0.5cm,
  row sep=-9pt
  },
>=latex
]
\matrix[net] (mat)
{
  |[plain]|  $\mathbb{R}^{n_0}$ & |[plain]|$\overset{h_1}{\longrightarrow}$&|[plain]| \centering $\mathbb{R}^{n_1}$ & |[plain]|$\cdots$&|[plain]| $\mathbb{R}^{n_{L-1}}$ & |[plain]|$\overset{h_{L}}{\longrightarrow}$&|[plain]| $\mathbb{R}^{n_L}$ 	\\
													& |[plain]|& 				&|[plain]|$\cdots$&	        &|[plain]|&								\\
|[plain]|$\vdots$ 									& |[plain]|&|[plain]|$\vdots$	&|[plain]|$\cdots$&|[plain]|$\vdots$&|[plain]|&|[plain]|$\vdots$ \\
													& |[plain]|& 				&|[plain]|$\cdots$&		    &|[plain]|&							\\    };

\foreach \ai in {2,4}
{\foreach \aii in {2,4}
{
  \draw[->] (mat-\ai-1) -- (mat-\aii-3);
  \draw[->] (mat-\ai-5) -- (mat-\aii-7);
}
}
\end{tikzpicture}
\caption{The functions $h_1,\dots,h_L$ that map between the spaces $\mathbb{R}^{n_0},\dots,\mathbb{R}^{n_L}$ are the fully-connected layer functions of their composition $f_{\mathbf{h}}=h_L\circ\dots\circ h_1$. }
  \label{fig:multi}
\end{figure}
For each $x$ in the input space $\mathbb{R}^{n_0}$, there are some units active in the neural network $f_{\mathbf{h}}$. We formalize this idea in the following definition.

\begin{definition}
  \label{def:MultiSignat}
  For $x\in\mathbb{R}^{n_0}$, we define the \emph{multi signature} $S_{\mathbf{h}}(x)\in \left\{ 0,1 \right\}^{n_1}\times\cdots\times\left\{ 0,1 \right\}^{n_L}$ of $x$ by
  \begin{equation*}
	S_{\mathbf{h}}(x)=\left( S_{{h}_1}(x),S_{{h}_2}\left( {h}_1(x) \right),\dots,S_{{h}_L}\left( {h}_{L-1}\circ\dots\circ {h}_1(x) \right) \right).
  \end{equation*}
\end{definition} Note that this multi signature is just a $L$-tuple of the signatures of the $L$ single layers from Definition~\ref{def:Signat} evaluated at their respective input. In analogy to Definition~\ref{def:SigSReg}, the input space $\mathbb{R}^{n_0}$ can be divided into regions indexed by multi signatures.
\begin{definition}
  \label{def:MultiSignatReg}
  For $s\in \left\{ 0,1 \right\}^{n_1}\times\cdots\times\left\{ 0,1 \right\}^{n_L}$, define the \emph{multi signature $s$ region $R_{\mathbf{h}}(s)\subset \mathbb{R}^{n_0}$} by
  \begin{equation*}
	\label{eq:FunF}
	R_{\mathbf{h}}(s):=\left\{ x\in\mathbb{R}^{n_0}\;\vert\; S_{\mathbf{h}}(x)=s\right\}.
  \end{equation*}
\end{definition}
\begin{definition}
  \label{def:AttainedMultiSig}
  We define the \emph{attained multi signatures of $\mathbf{h}$} as
  \begin{equation*}
	\mathcal{S}_{\mathbf{h}}:=\left\{ S_{\mathbf{h}}(x)\in \left\{ 0,1 \right\}^{n_1}\times\cdots\times\left\{ 0,1 \right\}^{n_L}\;\vert\;x\in \mathbb{R}^{n_0}\right\}.	
  \end{equation*}
\end{definition}As before, we can write this as
$\mathcal{S}_{\mathbf{h}}=\left\{ s\in \left\{ 0,1 \right\}^{n_1}\times\cdots\times\left\{ 0,1 \right\}^{n_L} \;\vert\; R_{\mathbf{h}}(s)\neq\left\{  \right\}\right\}$. We are interested in this quantity because bounds on the number of attained multi signatures $|\mathcal{S}_{\mathbf{h}}|$ will also be bounds on the number of affine linear regions as we will see in the next section.

\subsection{The number of affine linear regions}
Let $\mathcal{P}$ denote the set of all possible partitions of $\mathbb{R}^{n_0}$ into connected subsets. For $n_0,n\in\mathbb{N}_+$ and a general function $f:\mathbb{R}^{n_0}\to\mathbb{R}^{n}$ we now define the number of affine linear regions of $f$. Let $\mathcal{P}_{f}$ be the partitions $P\in\mathcal P$ such that $f$ is affine linear on each element of $P$, i.e.
  \begin{equation*}
	\mathcal{P}_{f}=\left\{ P\in\mathcal{P}\;\vert\; \forall R\in P\;f \text{ is affine linear on }R \right\}.
  \end{equation*}
  \begin{definition}
	\label{def:NumberAffineRegions}
The \emph{number of affine linear regions} $N_{f}$ of $f$ is
  \begin{equation*}
	N_{f}=\inf_{P\in\mathcal{P}_{f}}\;|P|.
  \end{equation*}This is the smallest number of elements a partition of $\mathbb{R}^{n_0}$ into connected subsets can have such that $f_{\mathbf{h}}$ is affine linear on every element of that partition. Note that by convention, $\inf\left\{  \right\}=\infty$ such that $N_f=\infty$ for a function that is not piece-wise affine linear.
\end{definition}
In the appendix, Corollary~\ref{cor:RepMultiLayers} and Lemma~\ref{lem:Convex}, we show that 
\begin{equation}
  f_{\mathbf{h}}=\sum_{s\in\mathcal{S}_\mathbf{h}}^{}\mathds{1}_{R_{\mathbf{h}}(s)}\tilde f_{\mathbf{h},s}
\end{equation}for affine linear functions $\tilde f_{\mathbf{h},s}$, $s\in\mathcal{S}_\mathbf{h}$ and convex, hence in particular connected, sets $R_{\mathbf{h}}(s)$, $s\in\mathcal{S}_{\mathbf{h}}$. But this shows that $\left\{ R_{\mathbf{h}}(s)|s\in\mathcal{S}_{\mathbf{h}} \right\}$ is such a partition in $\mathcal{P}_{f_{\mathbf{h}}}$ and therefore
\begin{equation}
  N_{f_\mathbf{h}}\le |\mathcal{S}_\mathbf{h}|.
\end{equation} In other words, the number of affine linear connected regions of $f_h$ is bounded by the number of attained multi signatures of $\mathbf{h}$ from Definition~\ref{def:AttainedMultiSig}.  We will use this fact and find bounds on $|\mathcal S_{\mathbf{h}}|$ which will also be bounds on $N_{f_\mathbf{h}}$.

\section{Existing bounds on the number of regions}
\label{sec:existingBounds}
Let $n_0, L, \mathbf{n}=(n_1,\dots,n_L), \mathbf{h}=(h_1,\dots,h_L)$ be as in equation~\eqref{eq:MultiLayerSetting}.  The most basic upper bound on the number of regions $N_{f_{\mathbf{h}}}$ of the ReLU feed forward neural network $f_{\mathbf{h}}=h_L\circ\dots\circ h_1$, is based on the sum of the layer widths $n_1,\dots,n_L$, c.f.~\cite{Montufar:2014:NLR:2969033.2969153}, Proposition~3:
\begin{equation}
  N_{f_h}\le2^{\sum_{l=1}^{L}n_l}
  \label{eq:MostBasicBound}
\end{equation} It is based on the idea that each of the $\sum_{l=1}^{L}n_l$ ReLU units can at most double the number of affine linear regions in the input space $\mathbb{R}^{n_0}$.
A result from 2017, \cite{pmlr-v70-raghu17a}, cf. Theorem~1 states that for fixed equal layer widths $n_1=\dots=n_L=:n\in\mathbb{N}_+$ and variable number of layers $L$ and input width $n_0\in\mathbb{N}_+$:
\begin{equation}
  \label{eq:RaguBound}
  N_{f_{\mathbf{h}}}=\mathcal{O}({n}^{Ln_0})
\end{equation}
Later in 2017, \cite{Montufar17} Proposition 3, showed the following upper bound for $n_0, \mathbf{n}, \mathbf{h}$, as in equation~\eqref{eq:MultiLayerSetting}:
\begin{equation}
  \label{eq:Montufarbound}
  N_{f_{\mathbf{h}}}\le\prod_{l=1}^L\sum_{j=0}^{\min(n_0,\dots,n_{l-1})}{n_{l}\choose j}
\end{equation}
It was noted there that this result can be improved by using a more detailed dimensionality analysis. This was done in the preprint~\cite{DBLP:BoundingCounting} from 2017 resulting in the bound
\begin{equation}
  \label{eq:BoundingCounting}
  N_{f_{\mathbf{h}}}\le\sum_{j_1,\dots,j_L\in J}^{}\prod_{l=1}^L{n_{l}\choose j},
\end{equation}where $J=\left\{ (j_1,\dots,j_L)\in\mathbb{N}^L\vert\;\forall l\in\left\{ 1,\dots,L \right\}:\;j_l\le\min(n_0,n_1-j_1,\dots,n_{L-1}-j_{L-1},n_L)\right\}$.
The above bounds are in the following hierarchy: The bound~\eqref{eq:BoundingCounting} implies~\eqref{eq:Montufarbound}, which implies both,~\eqref{eq:RaguBound} and~\eqref{eq:MostBasicBound}.
Using our framework, we can derive the bounds from equations~\eqref{eq:MostBasicBound}, \eqref{eq:Montufarbound} and~\eqref{eq:BoundingCounting} as special cases. 
The strongest of these bounds~\eqref{eq:BoundingCounting} as stated above is in a form that is not well-suited for explicit evaluations and asymptotic considerations. For example, the authors consider the special case where $L$ is variable but $n_1=\dots=n_L=:n\in\mathbb{N_+}$ and $n_0\in\mathbb{N}_+$ are fixed. In their Proposition 15, they lose precision by first setting $n_0=n$ and then using a Stirling approximation, arriving at
\begin{equation}
  \label{eq:BoundingCoundingWeakened}
  N_{f_{\mathbf{h}}}\le 2^{Ln}\left( \frac{1}{2}+ \frac{1}{2\sqrt{\pi n}}\right)^{L/2}\sqrt{2}.
\end{equation}
In contrast, using our framework makes it possible to find an analytic expression in this setting without the need to weaken the bound. This is possible because of the bound formulation in terms of matrices which can be decomposed for explicit evaluation of arbitrary powers that arise for equal fixed layer widths $n_1=\dots=n_L=n$, see Section~\ref{sec:Asymptotic}.

There also exist lower bounds on the number of affine linear regions that can be achieved. Usually, they are derived by explicit constructions of ReLU neural networks. The work~\cite{Montufar:2014:NLR:2969033.2969153} presents the bound
\begin{equation}
  \label{eq:lowerBound}
  \max_{\mathbf{h}\in\textnormal{RL}(n_0,\mathbf{n})}N_{f_\mathbf{h}}\ge \left( \prod_{l=1}^{L-1}\Bigl\lfloor \frac{n_l}{n_0}\Bigr\rfloor^{n_0} \right) \sum_{j=0}^{n_0}{n_L\choose j}
\end{equation}and it was noted in~\cite{Montufar17} that for fixed $n_0,L\in\mathbb{N}_+$ the quotient of the upper and the lower bounds from equations~\eqref{eq:Montufarbound} and~\eqref{eq:lowerBound} satisfies
\begin{equation}
  \label{eq:quotientOfBounds}
  \limsup_{n_1,\dots,n_L\to\infty}
  \frac{
  \prod_{l=1}^L\sum_{j=0}^{\min(n_0,\dots,n_{l-1})}{n_{l}\choose j}
}{
\left( \prod_{l=1}^{L-1}\Bigl\lfloor \frac{n_l}{n_0}\Bigr\rfloor^{n_0} \right) \sum_{j=0}^{n_0}{n_L\choose j}
}\le
  \limsup_{n_1,\dots,n_L\to\infty}
\prod_{l=1}^{L-1} 
  \frac{
	\sum_{j=0}^{n_0}{n_{l}\choose j}
}{
  \left( \frac{n_l}{n_0}\right)^{n_0}
}\le e^{n_0 (L-1)}.
\end{equation}
A similar lower bound can be found in~\cite{DBLP:BoundingCounting}. One of the main goals of the analysis of the number of affine linear regions is to improve the results such that upper and lower bounds match or at least are of the same order for some asymptotic scenarios because otherwise, it is not clear if the bounds are sharp. Our framework is not concerning such lower bounds but by improving the understanding of such upper bounds makes a further step towards this goal.

	\section{Framework for the construction of upper bounds}
	\label{sec:framework}
\subsection{Intuitive Motivation}
\label{sec:Motivation}

To illustrate the ideas and problems involved in the construction of the bounds in Section~\ref{sec:existingBounds}, we reprove the bound~\eqref{eq:Montufarbound} of Mont\'ufar,
\begin{equation}
	\label{eq:firstUpperBound}
	|\mathcal{S}_{\mathbf{h}}| \le\prod_{l=1}^L\sum_{j=0}^{\min(n_0,\dots,n_{l-1})}{n_{l}\choose j}.
\end{equation}
It is important to understand the ideas behind this bound because our results are derived from a more fine-grained analysis starting from the same basis. This way, we can point out clearly, where our analysis differs. Here, we only state the ideas, the proofs are deferred to Appendices~\ref{app:firstBound} and~\ref{app:secondBound}.

We introduce definitions for the first $l$ entries of elements of the set of attained multi signatures $\mathcal{S}_{\mathbf{h}}$. Recall that these entries are tuples of lengths $n_1,\dots,n_l$.
\begin{definition}
  \label{def:Signatures}Let
  \begin{eqnarray*}
	\mathcal{S}^{(1)}_{\mathbf{h}}&:=&\left\{ (s_1)\in\left\{ 0,1 \right\}^{n_1}\;\vert\;(s_1,\dots,s_L):=s\in\mathcal{S}_{\mathbf{h}}\right\}\\
	\mathcal{S}^{(2)}_{\mathbf{h}}&:=&\left\{ (s_1,s_2)\in\left\{ 0,1 \right\}^{n_1}\times\left\{ 0,1 \right\}^{n_2}\;\vert\;(s_1,\dots,s_L):=s\in\mathcal{S}_{\mathbf{h}}\right\}\\
	&\vdots&\\
	\mathcal{S}^{(L)}_{\mathbf{h}}&:=&\left\{ (s_1,\dots,s_{L})\in\left\{ 0,1 \right\}^{n_1}\times\dots\times\left\{ 0,1 \right\}^{n_{L}}\;\vert\;(s_1,\dots,s_L):=s\in\mathcal{S}_{\mathbf{h}}\right\}=\mathcal{S}_{\mathbf{h}}.
  \end{eqnarray*}
\end{definition}

The Mont\'ufar bound is based on an anchor inequality and a recursion inequality which is weakened and then unpacked to obtain~\eqref{eq:firstUpperBound}. 
\begin{itemize}
  \item The anchor inequality is based on the well-known fact by Zaslavsky from 1975, that the $n_1$ hyperplanes which are induced by the first layer function $h_1$ can partition the input space $\mathbb{R}^{n_0}$ into at most $\sum_{j=0}^{n_0}{n_1\choose j}$ regions, see \cite{1975Zaslavsky}. We state this result in the appendix in Lemma~\ref{lem:main}:
  \begin{equation}
	\label{eq:MainAnchor}
	\vert\mathcal{S}_{\mathbf{h}}^{(1)}|\le\sum_{j=0}^{n_0}{n_1\choose j}.\\
  \end{equation}
  This result gives a bound on the number of attained signatures of the function $h_1$ which maps the input layer to the first layer. 

  \item 
   The recursion inequality is stated in Theorem~\ref{thm:main}: 
  \begin{equation}
	\label{eq:MainRecursive}
	|\mathcal{S}^{(l+1)}_{\mathbf{h}}|\le\sum_{\left( s_1,\dots,s_l \right)\in\mathcal{S}^{(l)}_{\mathbf{h}}}^{}\sum_{j=0}^{\min\left( n_0,|s_1|,\dots,|s_l| \right)} {n_{l+1}\choose j}\quad \textnormal{ for } l\in\left\{ 1,\dots,L-1 \right\},
\end{equation}using the notation $|s_l|=\sum_{i=0}^{n_l} (s_l)_i$, $l \in\left\{1,\dots,L  \right\}$. This kind of recursive relation is not well-suited to unpack because it only relates the \emph{number} of attained multi signatures in $\mathcal{S}_{\mathbf{h}}^{(l+1)}$ with the \emph{type} of multi signatures in $\mathcal{S}^{(l)}_\mathbf{h}$ for $l\in\left\{ 1,\dots,L-1 \right\}$. More precisely, the summation range of the inner sum depends on the activations of the neurons in the layers $1$ to $l$. 
  If we weaken the above recursive relation~\eqref{eq:MainRecursive} by replacing $\min(n_0,|s_1|,\dots,|s_l|)$ by $\min(n_0,\dots,n_l)$, the inner sum does not depend on the outer sum anymore such that we get a relation $|\mathcal{S}^{(l+1)}_{\mathbf{h}}|\le|\mathcal{S}^{(l)}_{\mathbf{h}}|\sum_{j=0}^{n_l^*}{n_{l+1}\choose j}$ with $n_l^*=\min(n_0,\dots,n_l)$ for $l\in\left\{ 1,\dots,L-1 \right\}$, see Corollary~\ref{cor:firstUpperBound}. Now equation~\eqref{eq:firstUpperBound} follows by induction.
\end{itemize}

In our framework we want to avoid the above described required weakening of the recursive relation. We find a recursive relation similar to those in equations~\eqref{eq:MainAnchor} and~\eqref{eq:MainRecursive} with the important difference that we do not bound the number of elements in $\mathcal{S}^{(l)}_{\mathbf{h}}$ but a histogram of the set $\left\{ \min\left( n_0,|s_1|,\dots,|s_l| \right)\mid\vert s\in\mathcal{S}^{(l)}_{\mathbf{h}}\right\}$ denoted by $\tilde{\mathcal{H}}^{(l)}( \mathcal{S}_{\mathbf{h}}^{(l)})$, $l\in\left\{ 1,\dots,L \right\}$ which we will refer later as the \emph{dimension histogram}. We call these quantities dimension histograms because for every region $R_{\mathbf{h}}(s)$ with multi signature $s\in\mathcal{S}_{h}$ the dimension or rank of $h_l\circ\dots\circ h_1$ is bounded by $\min\left( n_0,|s_1|,\dots,|s_l| \right)$. With a particular order relation ``$\preceq$'' on the set of histograms, we then prove the following anchor and recursion relations:
  \begin{align}
	\tilde{\mathcal{H}}^{(1)}\left( \mathcal{S}_{\mathbf{h}}^{(1)} \right)&\preceq\varphi^{(\gamma)}_{n_1}({\rm e}_{n_0})\\
	\tilde{\mathcal{H}}^{(l+1)}\left(\mathcal{S}_{\mathbf{h}}^{(l+1)}\right)&\preceq \varphi^{(\gamma)}_{n_{l+1}}(\tilde{\mathcal{H}}^{(l)}(\mathcal{S}_{\mathbf{h}}^{(l)}))\quad\textnormal{ for } l\in\left\{ 1,\dots,L-1 \right\}.
  \end{align}
  In the above inequalities, ${\rm e}_{n_0}$ is the histogram that contains a $1$ at index $n_0$ and $0$ else and represents the input space $\mathbb{R}^{n_0}$. The functions $\varphi^{(\gamma)}_{n_{l}}$, $l\in\left\{ 1,\dots,L \right\}$ are transitions from an input histogram to an output histogram. These transition functions have a monotonicity property such that smaller inputs yield smaller outputs. Therefore, we can directly unpack the recursive relation without the need to weaken it as above and conclude $\tilde{\mathcal{H}}^{(L)}(\mathcal{S}_{\mathbf{h}}^{(L)}  )\preceq \varphi^{(\gamma)}_{n_L}\circ\dots\circ\varphi^{(\gamma)}_{n_1}(e_{n_0})$ from which we derive a bound on $|\mathcal{S}_{\mathbf{h}}|$. The above described ideas are made more formal in the next sections.
  
	\subsection{Formal description}
	\label{sec:Definitions}
	We first define the notion of a ``histogram'' formally. They are elements of the set defined below.
\begin{definition}
  \label{def:V}
  Let $V=\left\{ x\in\mathbb{N}^{\mathbb{N}}\;\middle|\;\|x\|_1=\sum_{j=0}^{\infty}x_j<\infty \right\}$.
\end{definition}
A histogram is therefore a sequence of natural numbers with indices starting with $0$ with only finitely many values different from zero. We give names ${\rm e}_i\in V$, $i\in\mathbb{N}$ to the histograms defined by the property $({\rm e}_i)_j=\delta_{ij}$ for $i,j\in\mathbb{N}$. They shall not be confused with the unit vectors $e_i\in\mathbb{R}^j$ for $i\le j\in \mathbb{N}$.
 Obviously, the element-wise sum $v_1+v_2$ of two histograms $v_1, v_2\in V$ and element-wise scalar multiplication $a v_1$ for $a\in \mathbb{N}$, $v_1\in V$ are again histograms in $V$.

 As explained above, to state our anchor inequality and recursive relation we need to be able to compare two histograms. We therefore introduce the following order relation. 
\begin{definition}
  \label{def:preceq}
  For $v,w\in V$, let
  \begin{equation*}
	v \preceq w\quad :\iff \forall J\in\mathbb{N}: \sum_{j=J}^{\infty}v_j\le\sum_{j=J}^{\infty}w_j
  \end{equation*}
\end{definition} Figure~\ref{fig:preceq} gives an intuition of this order relation. 
\begin{figure}[htbp]
  \centering
  \includegraphics[width=0.5\textwidth]{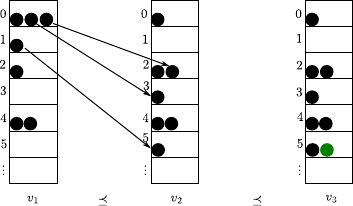}
  \caption{Intuitively, if we imagine a histogram $v\in V$ as balls in boxes indexed by $\mathbb{N}$ as above, for $v,w\in V$, $v\preceq w$ if and only if $w$ can be obtained from $v$ by moving balls into boxes with higher indices and adding new balls. Above, $v_1=3{\rm e}_0+{\rm e}_1+{\rm e}_2+2{\rm e}_4$, $v_2={\rm e}_0+2{\rm e}_2+{\rm e}_3+2{\rm e}_4+{\rm e}_5$ and $v_3=v_2+{\rm e}_5$. }
  \label{fig:preceq}
\end{figure}
Lemma \ref{lem:partialOrder} states that $\preceq$ is a partial order on $V$, which follows immediately from the definitions. Two histograms $u,v\in V$ are not always comparable, i.e. it does necessarily hold that either $u\preceq v$ or $v\preceq u$. However, we can always find a third histogram $w\in V$ that dominates both, i.e. that satisfies $u\preceq w$ and $v\preceq w$. The maximum of $u$ and $v$ as defined below is such an element.
\begin{definition}
  \label{def:maxpreceq}
  For a collection $(v^{(i)})_{i\in I}$ of histograms in $V$ indexed by a finite set $I$, define $\max_{i\in I}(v^{(i)})\in V$ on its entries by
  \begin{equation}
	\label{eq:maxPreceqFinite}
	\max_{i\in I}(v^{(i)})_J=\max_{i\in I}(\sum_{j=J}^{\infty}v^{(i)}_j)-\max_{i\in I}(\sum_{j=J+1}^{\infty}v^{(i)}_j)\quad \textnormal{ for }J\in \mathbb{N}
  \end{equation}
\end{definition} This is well-defined and in $V$ because only finitely many summands in these sums are non-zero by Definition~\ref{def:V}. Obviously, 
equation \eqref{eq:maxPreceqFinite} implies 
  \begin{equation*}
	\forall J\in\mathbb{N}: \max_{i\in I}(\sum_{j=J}^{\infty}v^{(i)}_j)= \sum_{j=J}^{\infty}\max_{i\in I}(v^{(i)})_j.
  \end{equation*}
  Hence, $v^{(i')}\preceq\max_{i\in I}(v^{(i)})$ for every $i'\in I$ and for any other $w\in V$ that also satisfies $v^{(i')}\preceq w$ for all $i'\in I$, it holds that $\max_{i\in I}(v^{(i)})\preceq w$. Therefore, the maximum of some histograms in $V$ is their smallest dominating histogram in $V$.  Figure~\ref{fig:preceqmax} relates Definition~\ref{def:maxpreceq} to the intuition from Figure~\ref{fig:preceq}.
\begin{figure}[htbp]
  \centering
  \includegraphics[width=0.5\textwidth]{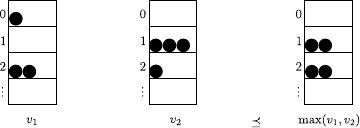}
  \caption{The visualization corresponding to the intuition from Figure~\ref{fig:preceq} for the maximum $\max(v_1,v_2)=2{\rm e}_1+2{\rm e}_2$ of $v_1={\rm e}_0+2{\rm e}_2$ and $v_2=3{\rm e}_1+{\rm e}_2$.}
  \label{fig:preceqmax}
\end{figure}

In the sequel, we will use histograms to represent signatures and multi signatures in a way that is useful for the construction of our framework.

\begin{definition}
  \label{def:Hist}
  For $n'\in \mathbb{N}_+$, define the \emph{activation histogram}
  \begin{equation*}
	\mathcal{H}_{n'}:
	\begin{cases}
	  \mathcal{P}(\left\{ 0,1 \right\}^{n'})&\to V\\
	\mathcal{S}&\mapsto\left( \sum_{s\in \mathcal{S}}^{}\mathds{1}_{ \left\{ j \right\}}(|s|) \right)_{j\in\mathbb{N}}
	\end{cases}
  \end{equation*}
\end{definition}
The above function $\mathcal{H}_{n'}$ maps a set of signatures $\mathcal{S}\subset \left\{ 0,1 \right\}^{n'}$ to a histogram of $|s|=\sum_{i=1}^{n'}s_i$ for $s\in\mathcal{S}$. For $n,n'\in\mathbb{N}_+$, and a single layer $h\in\textnormal{RL}(n,n')$ the activation histogram $\mathcal{H}_{n'}(\mathcal{S}_{h})$ tells us how often a certain number of neurons is active. The reason why we are interested in the number of active neurons is the fact that the rank of $h$ on a region with $k$ active neurons cannot exceed $k$.

For the sets of multi signatures $\mathcal{S}^{(1)}_{\mathbf{h}},\dots,\mathcal{S}^{(L)}_{\mathbf{h}}$ from Definition~\ref{def:Signatures} we also need similar functions that map to a histogram.
In the sequel, let $\mathcal{P}(A)$ denote the power set of a set $A$, i.e. the set of all subsets of $A$.
\begin{definition}
  For $l\in\left\{ 1,\dots,L \right\}$, define the \emph{dimension histogram} by
  \begin{equation*}
	\tilde{\mathcal{H}}^{(l)}:
	\begin{cases}
	\mathcal{P}\left( \left\{ 0,1 \right\}^{n_1}\times \dots\times \left\{ 0,1 \right\}^{n_l} \right)
	  &\to V\\
	  U&\mapsto\left( \sum_{(s_1,\dots,s_l)\in U}^{}\mathds{1}_{ \left\{ j \right\}}\left( \min(n_0,|s_1|,\dots,|s_l|) \right) \right)_{j\in \mathbb{N}_{0}}
	\end{cases}.
  \end{equation*}
\end{definition}The above functions depend implicitly on $n_0,\mathbf{n}$ and map a set of multi signatures to a histogram in $V$. Note that $\tilde{\mathcal{H}}^{(l)}( \mathcal{S}_{\mathbf{h}}^{(l)} )$ is a histogram of $\left\{ \min(n_0,|s_1|,\dots,|s_l|)\vert (s_1,\dots,s_l)\in \mathcal{S}^{(l)}_{\mathbf{h}} \right\}$ for $l\in\left\{ 1,\dots,L \right\}$. For every $s\in\mathcal{S}^{(l)}_{\mathbf{h}}$ the minimum $\min(n_0,|s_1|,\dots,|s_l|)$ is an upper bound of the rank of the affine linear map that is computed by $h_l\circ\dots\circ h_1$ on input region with multi signature $s$. Therefore $\tilde{\mathcal{H}}^{(l)}( \mathcal{S}_{\mathbf{h}}^{(l)} )$ is an upper bound histogram for the histogram of the image dimensions of the induced regions for $h_l\circ\dots\circ h_1$, hence the name ``dimension histogram''.
These dimension histograms should not be confused with the activation histograms $\mathcal{H}_{n'}, n'\in\mathbb{N}_+$ from Definition~\ref{def:Hist} which act on single signatures and do not involve $n_0$.

The idea of our framework is to start with the histogram ${\rm e}_{n_0}$ representing the input space $\mathbb{R}^{n_0}$, which is one region with dimension $n_0$ and push this histogram through transition functions $\varphi^{(\gamma)}_{n_1},\dots,\varphi^{(\gamma)}_{n_L}$ corresponding to the layers of the network. The main ingredient for the recursion in Mont\'ufar's bound was Zaslavsky's result from Lemma~\ref{lem:OneLayer1975}, which bounds for every input and output dimension $n,n'\in\mathbb{N}$ the number of the attained signatures $\mathcal{S}_{h}$ of every $h\in\textnormal{RL}(n,n')$. In contrast, for our transition function we need to bound for every $n,n'\in\mathbb{N}$ and every $h\in\textnormal{RL}(n,n')$ the activation histogram of the attained signatures. This is formalized in the following definition.

\begin{definition}
  \label{def:boundCondition}
  We say that a collection $(\gamma_{n,n'})_{n'\in\mathbb{N}_+,n\in\left\{ 0,\dots,n' \right\}}$ of elements in $V$ satisfies the \emph{bound condition} if the following statements are true:
  \begin{enumerate}
	\item $\forall n'\in\mathbb{N}_+,n\in\left\{ 0,\dots,n' \right\}\quad \max\left\{ \mathcal{H}_{n'}(\mathcal{S}_h)\mid h\in \textnormal{RL}(n,n') \right\}\preceq \gamma_{n,n'}$
	\item $\forall n'\in \mathbb{N}_+,n,\tilde n\in\left\{ 0,\dots,n' \right\}\quad n\le \tilde n\implies \gamma_{n,n'}\preceq \gamma_{\tilde n,n'}$
  \end{enumerate}
  Here, we use the convention that for $n'\in\mathbb{N}_+$, $\textnormal{RL}(0,n')=\left\{ h:\left\{ 0 \right\}\to\mathbb{R}^{n'}, x\mapsto c |c\in\mathbb{R}\right\}$ and for $h\in\textnormal{RL}\left( 0,n' \right)$, $\mathcal{H}_{n'}(\mathcal{S}_h)={\rm e}_0$. 
  The set of all such $\gamma$ is denoted by 
  \begin{equation}
	\label{eq:Gamma}
	\Gamma:=\left\{ (\gamma_{n,n'})_{n'\in\mathbb{N}_+,n\in\left\{ 0,\dots,n' \right\}}\mid \gamma \text{ satisfies the bound condition}  \right\}
  \end{equation}
\end{definition}
Such collections exist, i.e. $\Gamma\neq \left\{  \right\}$ because the maximum in the first criterion of the bound condition is a maximum over finitely many activation histograms since $\mathcal{S}_h\subset\left\{ 0,1 \right\}^{n'}$. Hence, it is a well-defined element in $V$ by Definition~\ref{def:maxpreceq}. If the second criterion does not hold for a given collection $\gamma$, we can easily construct another collection $\gamma'$ via $\gamma'_{n,n'}=\max_{\tilde n\le n}\left(\gamma_{\tilde n,n'}  \right)$ for $n,n'$ as in Definition~\ref{def:boundCondition}. For this new collection $\gamma'$, the second criterion of the bound condition holds. We will see below why we need this second criterion. A collection $\gamma\in\Gamma$ gives a worst-case bound for all possible activation histograms of ReLU layer functions $h\in\textnormal{RL}(n,n')$ for input dimensions $n$ output dimensions $n'$ with $n\in\left\{ 0,\dots,n' \right\}$. For $n>n'$ we can reuse the histogram bound of $n=n'$ because for $n\ge n'$ the maximum $\max\left\{ \mathcal{H}_{n'}(\mathcal{S}_h)\mid h\in \textnormal{RL}(n,n') \right\}$ in $V$ is the same since all possible signatures can occur, i.e. there exists $h\in\textnormal{RL}(n,n')$ with $\mathcal{S}_h=\left\{ 0,1 \right\}^{n'}$, see Lemma~\ref{lem:gammaMin}. 

Note that for a function $h\in\textnormal{RL}(n,n')$ and an attained signature $s\in\mathcal{S}_{h}$, the dimension of $h$ on the corresponding region $R_h(s)$ is not only bounded by the number of active neurons but also by the input dimension $n$. To account for this fact we need a clipping function that acts on histograms by putting all weight from entries with index greater than a threshold $i^*$ to the index $i^*$.
\begin{definition}
  \label{def:alphafunction}
  For $i^*\in\mathbb{N}$ define the \emph{clipping function}
	$\textnormal{cl}_{i^*}:
	V\to V$,
	  $\left( v_i \right)_{i\in\mathbb{N}_n}\mapsto\left( \textnormal{cl}_{i^*}(v)_i \right)_{i\in\mathbb{N}_{0}}$ by 
\begin{equation*}
  \quad \textnormal{cl}_{i^*}(v)_{i}=
	\begin{cases}
	  v_i&\textnormal{ for }i<i^*\\
	  \sum_{j=i^*}^{\infty}v_{j}&\textnormal{ for }i=i^*\\
	  0&\textnormal{ for }i>i^*\\
	\end{cases}
\end{equation*}
\end{definition}Figure~\ref{fig:clipping} provides intuition on the clipping function. 
\begin{figure}[htbp]
  \centering
  \includegraphics[width=0.3\textwidth]{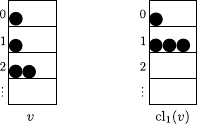}
  \caption{The histogram $v={\rm e}_0+{\rm e}_1+2{\rm e}_2$ is clipped at index $1$ by $\textnormal{cl}_1$ causing the entries with index greater than $1$ to be added to the entry with index $1$. The result is $\textnormal{cl}_1(v)={\rm e}_0+3{\rm e}_1$.}
  \label{fig:clipping}
\end{figure}
As described above, for an output dimension $n'\in\mathbb{N}$, an input dimension $n\in\left\{ 0,\dots,n' \right\}$ and a collection $\gamma\in\Gamma$, if we clip the bound $\gamma_{n,n'}$ on the activation histograms $\mathcal{H}_{n'}(\mathcal{S}_{h})$, $h\in \textnormal{RL}(n,n')$ at the input dimension $n$, the clipped version $\textnormal{cl}_{n}(\gamma_{n,n'})$ will bound the histogram corresponding to the dimensions of the output regions of $h$, i.e.
\begin{equation*}
  \left(\sum_{s\in\mathcal{S}_{h}}^{}\mathds{1}_{\left\{ i \right\}}(\textnormal{rank of $h$ on $R_h(s)$}) \right)_{i\in \mathbb{N}_{+}}\preceq \textnormal{cl}_{n}(\gamma_{n,n'})
\end{equation*}In other words $\textnormal{cl}_n(\gamma_{n,n'})$ is an upper bound on the histograms of output region dimensionalities that can arise from $h\in\textnormal{RL}(n,n')$. It makes therefore sense to define the transition function for output dimension $n'\in\mathbb{N}_+$ as follows.
 \begin{definition}
   \label{def:PhiFunc}
For $n'\in\mathbb{N}_+$, $\gamma\in\Gamma$ let
   \begin{equation*}
	 \varphi^{(\gamma)}_{n'}:
	 \begin{cases}
	   V&\to V\\
	   v&\mapsto \sum_{n=0}^{\infty}v_n \textnormal{cl}_{\min(n,n')}(\gamma_{\min(n,n'),n'})
	 \end{cases}.
   \end{equation*}
 \end{definition}
 The sum in the above definition is actually a finite sum because by Definition~\ref{def:V}, only finitely many entries of $v\in V$ are different from zero.
 Furthermore note that we reused $\gamma_{n',n'}$ for the case $n>n'$ as described above. 

 The intuition of such a transition function is as follows. For every input dimensionality we have a worst-case bound of what histograms can arise for the output region dimensions from an input region with that dimensionality when a ReLU layer function with $n'$ neurons is applied. If we sum up these bounds for several input regions of possibly different dimensions, what we get is exactly the map $\varphi^{(\gamma)}_{n'}$. Therefore, these maps allow to transform a worst-case bound on the dimension histogram $\tilde{\mathcal{H}}^{(l)}(\mathcal{S}^{(l)}_{\mathbf{h}})$ corresponding to region image dimensionalities of $h_l\circ\dots\circ h_1$ to a dimension histogram $\tilde{\mathcal{H}}^{(l+1)}(\mathcal{S}^{(l+1)}_{\mathbf{h}})$ corresponding to $h_{l+1}\circ\dots\circ h_1$ for $l\in\left\{ 1,\dots,L-1 \right\}$. Proposition~\ref{prop:main2} in the appendix states that for $\gamma\in\Gamma$
  \begin{align}
	\tilde{\mathcal{H}}^{(l+1)}\left(\mathcal{S}_{\mathbf{h}}^{(l+1)}\right)&\preceq \varphi^{(\gamma)}_{n_{l+1}}(\tilde{\mathcal{H}}^{(l)}(\mathcal{S}_{\mathbf{h}}^{(l)}))\quad\textnormal{ for } l\in\left\{ 1,\dots,L-1 \right\}.
  \end{align}
  Now the second criterion of the bound condition allows to conclude that $\varphi^{(\gamma)}_{n'}(v_1)\preceq \varphi^{(\gamma)}_{n'}(v_2)$ whenever $v_1\preceq v_2$ for $n'\in \mathbb{N}_+$ and $v_1,v_2\in V$, see Lemma~\ref{lem:monotonicityPhi}. We can thus derive 
  \begin{equation*}
  \tilde{\mathcal{H}}^{(L)}\left(\mathcal{S}_{\mathbf{h}}^{(L)}\right)\preceq \varphi^{(\gamma)}_{n_{L}}\circ\dots\circ\varphi^{(\gamma)}_{n_{2}}\circ\tilde{\mathcal{H}}^{(1)}(\mathcal{S}_{\mathbf{h}}^{(1)})).
\end{equation*} Together with Lemma~\ref{lem:main2}, which states $\tilde{\mathcal{H}}^{(1)}\left( \mathcal{S}_{\mathbf{h}}^{(1)} \right)\preceq\varphi^{(\gamma)}_{n_1}({\rm e}_{n_0})$ it follows that
\begin{equation}
  \label{eq:mainBound}
  |\mathcal{S}_{\mathbf{h}}|=\|\tilde{\mathcal{H}}^{(L)}\left(\mathcal{S}_{\mathbf{h}}^{(L)}\right)\|_1\le\|\varphi^{(\gamma)}_{n_{L}}\circ\dots\circ\varphi^{(\gamma)}_{n_{1}}\left( {\rm e}_{n_0} \right)\|_1\quad \textnormal{ for }\gamma\in\Gamma
\end{equation} because $v_1\preceq v_2$ implies $\|v_1\|_1\le \|v_2\|_1$ by Lemma~\ref{lem:preceqNorm}. This is our main result and it can also be stated in a matrix formulation. Despite the fact that histograms are sequences of infinite length, one can use vectors of finite length instead for the bound. For $n'\in\mathbb{N}_{+}$ the relevant information of the mapping $\varphi^{(\gamma)}_{n'}$ is in the clipped histograms $\textnormal{cl}_{0}(\gamma_{0,n'}),\dots,\textnormal{cl}_{n'}(\gamma_{n',n'})$ which have non-zero entries only in the first indices $0$ to $n'$. Therefore, they fit perfectly as columns in a $n'+1$ times $n'+1$ square matrix as defined below

 \begin{definition}
	 \label{def:boundMatrix}
    For $\gamma\in\Gamma$ and $n'\in\mathbb{N}_{+}$ define the matrix $B^{(\gamma)}_{n'}\in\mathbb{N}^{(n'+1)\times (n'+1)}$ as
   \begin{equation*}
	 (B^{(\gamma)}_{n'})_{i,j}=\left(\varphi_{n'}^{(\gamma)}({\rm e}_{j-1})\right)_{i-1}=\left( \textnormal{cl}_{j-1}(\gamma_{j-1,n'}) \right)_{i-1}\quad \textnormal{ for }i,j\in\left\{ 1,\dots,n'+1 \right\}.
   \end{equation*}
 \end{definition} Note that the matrix indexing starts with $1$ whereas the indexing of $V$ begins with $0$. This is the reason for the shift of $i$ and $j$ by one in the above definition.  Because of the involved clipping functions the matrices $(B^{(\gamma)}_{n'})_{n'\in\mathbb{N}_+}$ are upper triangular. This implies two things:
  \begin{enumerate}
	\item The eigenvalues $\lambda^{(\gamma)}_{n',1},\dots\lambda^{(\gamma)}_{n',n'+1}$ of $B^{(\gamma)}_{n'}$, $n'\in\mathbb{N}_+$ are its diagonal entries.
	\item For fixed $n'\in\mathbb{N}$, $i\in\left\{ 1,\dots,n'+1 \right\}$ and increasing $k\in\mathbb{N}_{+}$, the norm $\|(B^{^{(\gamma})}_{n'})^{k}v\|_1$ is of order $\mathcal{O}(\max(\vert\lambda^{(\gamma)}_{n',1}\vert,\dots,\vert\lambda^{(\gamma)}_{n',i}\vert)^k)$ when $v\in\mathbb{N}^{n'+1}$ contains $0$ in its entries with index greater than $i$, given that this maximum is greater than 1.
  \end{enumerate}
  To state the bound from equation~\eqref{eq:mainBound} in a matrix version we furthermore have to connect matrices $B^{(\gamma)}_{n_{l+1}}$ and $B^{(\gamma)}_{n_l}$ for $l\in\left\{ 1,\dots,L-1 \right\}$ of possibly different sizes. To make matrix multiplication possible, an additional matrix defined below is inserted in between.
	\begin{definition}
	  \label{def:connMatrix}
	  For $n,n'\in\mathbb{N}$, define the \emph{connector matrix} $M_{n,n'}$ by
  \begin{equation*}
	M_{n,n'}\in\mathbb{R}^{n'+1\times n+1},\quad M_{i,j}=\delta_{i,\min(j,n'+1)} \text{ for }i\in\left\{ 1,\dots,n'+1 \right\},j\in\left\{ 1,\dots,n+1 \right\}.
  \end{equation*}
	\end{definition}
  \begin{example} The matrices $M_{4,2}$ and $M_{2,4}$ are 
	\begin{equation*}
	M_{4,2}= 
	\begin{pmatrix}
	  1&0&0&0&0\\
	  0&1&0&0&0\\
	  0&0&1&1&1
	\end{pmatrix},\quad
	M_{2,4}=
	\begin{pmatrix}
	  1&0&0\\
	  0&1&0\\
	  0&0&1\\
	  0&0&0\\
	  0&0&0\\
	\end{pmatrix}.
	\end{equation*}
  \end{example}
  The Definition~\ref{def:PhiFunc} of $\varphi^{(\gamma)}_{n'}$, $n'\in\mathbb{N}_+$ $\gamma\in\Gamma$ involves ``$\min(n,n')$'' to reuse $\textnormal{cl}_{n'}(\gamma_{n',n'})$ for entries in the input histogram with index $n$ greater than $n'$. The same is achieved by the above matrices. For $n>n'$ the matrix $M_{n,n'}$ has more columns than rows and the additional columns have a $1$ in the last row. The matrix version of equation~\eqref{eq:mainBound} is stated below in equation~\eqref{eq:MainRecursiveMatrix}. 

  The Appendix~\ref{app:secondBound} contains formal proofs of the results of this section. In Table~\ref{tab:ComparisonMontufarVsFramework} we give an overview over similar concepts of our framework and their counterpart in the bound of Mont\'ufar. 
\begin{table}[htpb]
  \centering
  \begin{tabular}{|p{3cm}|p{5cm}|p{6.4cm}|}
	\hline
	&Mont\'ufar's bound& Our framework\\
	\hline
	ordered set&$\mathbb{N}$& V \\
	\hline
	underlying\newline inequality& $\forall n,n'\in\mathbb{N}_+, h\in\textnormal{RL}(n,n')$\newline
	$|\mathcal{S}_{h}|\le\sum_{j=0}^{n}{n'\choose j}$
	&$\forall n,n'\in\mathbb{N}_+, h\in\textnormal{RL}(n,n')$\newline
	$\mathcal{H}_{n'}(\mathcal{S}_{h})\preceq \gamma_{n,n'}$
	\\
	\hline
	derived \newline recursion&$|\mathcal{S}^{(l+1)}_{\mathbf{h}}|\le|\mathcal{S}^{(l)}_{\mathbf{h}}|\sum_{j=0}^{n_l^*}{n_{l+1}\choose j}$,\newline 
	$n_l^*=\min(n_0,\dots,n_{l})$
	&
	$\tilde{\mathcal{H}}^{(l+1)}(\mathcal{S}_{\mathbf{h}}^{(l+1)})\preceq \varphi^{(\gamma)}_{n_{l+1}}(\tilde{\mathcal{H}}^{(l)}(\mathcal{S}_{\mathbf{h}}^{(l)}))$\\
	\hline
  \end{tabular}
  \caption{Comparison of the central ideas and quantities involved in Mont\'ufar's bound and our framework.}
  \label{tab:ComparisonMontufarVsFramework}
\end{table}
 
\subsection{Main Result}
\label{sec:MainResult}
In the Appendix \ref{app:secondBound} we show that the following statements are true.
\begin{theorem}Assume $\gamma\in\Gamma$, i.e. $\gamma$ is a collection of elements in $V$ that satisfies the bound condition from Definition~\ref{def:boundCondition}.  For the number of elements $|\mathcal{S}_{\mathbf{h}}|$ we have the following upper bound:
   \label{thm:MainResultPhi}
 \begin{equation}
   |\mathcal{S}_{\mathbf{h}}|\le\|\varphi^{(\gamma)}_{n_L}\circ\dots\circ\varphi^{(\gamma)}_{n_1}({\rm e}_{n_0})\|_1.
   \label{eq:MainRecursivePhi}
 \end{equation}
 \end{theorem}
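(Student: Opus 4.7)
The plan is to combine the four ingredients assembled in the discussion leading up to the theorem: the anchor inequality $\tilde{\mathcal{H}}^{(1)}(\mathcal{S}_{\mathbf{h}}^{(1)}) \preceq \varphi^{(\gamma)}_{n_1}(\mathrm{e}_{n_0})$ from Lemma~\ref{lem:main2}, the layer-wise recursion $\tilde{\mathcal{H}}^{(l+1)}(\mathcal{S}_{\mathbf{h}}^{(l+1)}) \preceq \varphi^{(\gamma)}_{n_{l+1}}(\tilde{\mathcal{H}}^{(l)}(\mathcal{S}_{\mathbf{h}}^{(l)}))$ from Proposition~\ref{prop:main2}, the $\preceq$-monotonicity of $\varphi^{(\gamma)}_{n'}$ (Lemma~\ref{lem:monotonicityPhi}), which in turn relies essentially on the second clause of the bound condition in Definition~\ref{def:boundCondition}, and finally the implication $v_1 \preceq v_2 \Rightarrow \|v_1\|_1 \le \|v_2\|_1$ from Lemma~\ref{lem:preceqNorm}.

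First I would prove by induction on $l \in \{1,\dots,L\}$ the intermediate claim
\begin{equation*}
  \tilde{\mathcal{H}}^{(l)}(\mathcal{S}_{\mathbf{h}}^{(l)}) \preceq \varphi^{(\gamma)}_{n_l} \circ \dots \circ \varphi^{(\gamma)}_{n_1}(\mathrm{e}_{n_0}).
\end{equation*}
The base case $l=1$ is precisely the anchor inequality from Lemma~\ref{lem:main2}. For the induction step from $l$ to $l+1$, I would first apply $\varphi^{(\gamma)}_{n_{l+1}}$ to both sides of the inductive hypothesis and invoke monotonicity (Lemma~\ref{lem:monotonicityPhi}) to get $\varphi^{(\gamma)}_{n_{l+1}}(\tilde{\mathcal{H}}^{(l)}(\mathcal{S}_{\mathbf{h}}^{(l)})) \preceq \varphi^{(\gamma)}_{n_{l+1}} \circ \dots \circ \varphi^{(\gamma)}_{n_1}(\mathrm{e}_{n_0})$, and then chain this with the recursion inequality from Proposition~\ref{prop:main2} using the transitivity of the partial order $\preceq$ (Lemma~\ref{lem:partialOrder}).

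To conclude, specialize the intermediate claim to $l = L$ and take $\|\cdot\|_1$ on both sides. By Lemma~\ref{lem:preceqNorm} the left-hand side satisfies $\|\tilde{\mathcal{H}}^{(L)}(\mathcal{S}_{\mathbf{h}}^{(L)})\|_1 \le \|\varphi^{(\gamma)}_{n_L} \circ \dots \circ \varphi^{(\gamma)}_{n_1}(\mathrm{e}_{n_0})\|_1$. It remains to note that $\mathcal{S}_{\mathbf{h}}^{(L)} = \mathcal{S}_{\mathbf{h}}$ by Definition~\ref{def:Signatures}, and that $\|\tilde{\mathcal{H}}^{(L)}(\mathcal{S}_{\mathbf{h}})\|_1 = |\mathcal{S}_{\mathbf{h}}|$ because each multi signature $s \in \mathcal{S}_{\mathbf{h}}$ contributes exactly one unit to the histogram (the entry indexed by $\min(n_0,|s_1|,\dots,|s_L|)$), so summing all histogram entries counts each multi signature once. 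Combining these observations yields the claimed bound.

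Given that the substantive geometric and combinatorial content has been packed into Lemma~\ref{lem:main2} and Proposition~\ref{prop:main2}, the proof of the theorem itself is essentially an assembly: the only work is a short induction plus the passage from the $\preceq$-inequality to the $\|\cdot\|_1$-inequality. Accordingly, the main obstacle is not in this theorem but in setting up the right order relation $\preceq$ and transition maps $\varphi^{(\gamma)}_{n'}$ so that both monotonicity (Lemma~\ref{lem:monotonicityPhi}) and the histogram recursion (Proposition~\ref{prop:main2}) hold simultaneously; one must be careful that the second clause of the bound condition is exactly what is needed to keep $\varphi^{(\gamma)}_{n'}$ order-preserving, which is what allows the recursion to be unpacked directly without the weakening step that cost Mont\'ufar's argument its precision.
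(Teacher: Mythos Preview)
Your proposal is correct and follows essentially the same route as the paper's proof: both assemble the anchor (Lemma~\ref{lem:main2}), the recursion (Proposition~\ref{prop:main2}), the monotonicity of $\varphi^{(\gamma)}_{n'}$ (Lemma~\ref{lem:monotonicityPhi}), and the passage from $\preceq$ to $\|\cdot\|_1$ (Lemma~\ref{lem:preceqNorm}) via an induction on the layers, together with the observation $|\mathcal{S}_{\mathbf{h}}|=\|\tilde{\mathcal{H}}^{(L)}(\mathcal{S}_{\mathbf{h}}^{(L)})\|_1$. The paper's write-up is just a terser chain of inequalities, but the content is identical.
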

 As explained in the previous section, we can derive a matrix formulation of the above result.
 \begin{corollary}
   \label{cor:MainResultMatrix}
   For a collection $\gamma\in\Gamma$,  $|\mathcal{S}_{\mathbf{h}}|$ is bounded by
   \begin{equation}
	 \label{eq:MainRecursiveMatrix}
	 |\mathcal{S}_{\mathbf{h}}|\le \|B^{(\gamma)}_{n_L}M_{n_{L-1},n_L}\dots B^{(\gamma)}_{n_1}M_{n_{0},n_{1}}e_{n_0+1}\|_1,
   \end{equation}where
   $e_{n_0+1}$ is the unit vector in $\mathbb{R}^{n_0+1}$ that has value $0$ at the indices $1$ to $n_0$ and value $1$ at index $n_0+1$.
\end{corollary}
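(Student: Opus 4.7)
The plan is to pass from histograms in $V$ to finite-dimensional vectors and then verify that one step of the $\varphi$-recursion corresponds to one matrix product. For each $n \in \mathbb{N}$, I would define the bijection $\iota_n$ from $\{v \in V \mid v_i = 0 \text{ for } i > n\}$ to $\mathbb{N}^{n+1}$ by $(\iota_n(v))_{i+1} = v_i$ for $i \in \{0,\ldots,n\}$. This map is additive and preserves the $1$-norm, so $\|\iota_n(v)\|_1 = \|v\|_1$. Moreover $\iota_{n_0}({\rm e}_{n_0}) = e_{n_0+1}$ by construction, so the starting objects on the histogram side of Theorem~\ref{thm:MainResultPhi} and on the vector side of \eqref{eq:MainRecursiveMatrix} are in correspondence.

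The crux is a single-step claim: for all $n, n' \in \mathbb{N}_+$, every $\gamma \in \Gamma$, and every $v \in V$ vanishing outside $\{0,\ldots,n\}$, the histogram $\varphi^{(\gamma)}_{n'}(v)$ vanishes outside $\{0,\ldots,n'\}$ and
\[
\iota_{n'}\bigl(\varphi^{(\gamma)}_{n'}(v)\bigr) \;=\; B^{(\gamma)}_{n'}\, M_{n,n'}\, \iota_n(v).
\]
I would prove this by observing that $\varphi^{(\gamma)}_{n'}$ is $\mathbb{N}$-linear by Definition~\ref{def:PhiFunc} and both matrices are linear, so it suffices to check the identity for $v = {\rm e}_k$ with $k \in \{0,\ldots,n\}$. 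For such $v$, Definition~\ref{def:connMatrix} gives $M_{n,n'}\, e_{k+1} = e_{\min(k+1,\,n'+1)} = e_{\min(k,n')+1}$, and multiplying by $B^{(\gamma)}_{n'}$ extracts its column with index $\min(k,n')+1$, which by Definition~\ref{def:boundMatrix} is exactly $\iota_{n'}\bigl(\textnormal{cl}_{\min(k,n')}(\gamma_{\min(k,n'),\,n'})\bigr)$. On the other hand, Definition~\ref{def:PhiFunc} directly gives $\varphi^{(\gamma)}_{n'}({\rm e}_k) = \textnormal{cl}_{\min(k,n')}(\gamma_{\min(k,n'),\,n'})$, and since $\min(k,n') \le n'$ this histogram is supported on $\{0,\ldots,n'\}$, proving the claim.

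Applying the single-step claim $L$ times in succession, starting from $\iota_{n_0}({\rm e}_{n_0}) = e_{n_0+1}$, yields
\[
\iota_{n_L}\bigl(\varphi^{(\gamma)}_{n_L} \circ \cdots \circ \varphi^{(\gamma)}_{n_1}({\rm e}_{n_0})\bigr) \;=\; B^{(\gamma)}_{n_L}\, M_{n_{L-1},n_L} \cdots B^{(\gamma)}_{n_1}\, M_{n_0,n_1}\, e_{n_0+1}.
\]
Taking $1$-norms on both sides, using $\|\iota_{n_L}(\cdot)\|_1 = \|\cdot\|_1$, and invoking Theorem~\ref{thm:MainResultPhi} produces~\eqref{eq:MainRecursiveMatrix}. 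The only real difficulty is bookkeeping: keeping straight the off-by-one between histogram indices (starting at $0$) and matrix/vector indices (starting at $1$), and verifying that $M_{n,n'}$ correctly implements the $\min(n,n')$ appearing inside $\varphi^{(\gamma)}_{n'}$ in both cases $k \le n'$ and $k > n'$. Both are settled uniformly by the identity $\min(k+1,n'+1) = \min(k,n') + 1$.
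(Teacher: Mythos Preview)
Your proof is correct and follows essentially the same approach as the paper: both start from Theorem~\ref{thm:MainResultPhi}, identify the finitely supported histograms with vectors, verify that a single application of $\varphi^{(\gamma)}_{n'}$ corresponds to left multiplication by $B^{(\gamma)}_{n'}M_{n,n'}$ (with the connector matrix implementing the $\min(\cdot,n')$ in Definition~\ref{def:PhiFunc}), and then iterate. Your presentation is arguably cleaner---naming the bijection $\iota_n$ explicitly and reducing to basis vectors via linearity---but the underlying argument is identical to the paper's.
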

   
 \subsection{Applications}
 In the next section we use different collections $\gamma\in \Gamma$  to show how concrete bounds can be obtained from our main result~\eqref{eq:MainRecursiveMatrix}. To this end, we first give a particular collection $\gamma$, then show that it satisfies the bound conditions from Definition~\ref{def:boundCondition} and finally state the corresponding matrices $B^{(\gamma)}_{n'}$, $n'\in\mathbb{N}$.
 
 First, we derive the three bounds from equations~\eqref{eq:MostBasicBound}, \eqref{eq:Montufarbound} and~\eqref{eq:BoundingCounting} in this way. Then we consider an asymptotic setting and discuss the results. Finally, we explain how even stronger results may be obtained. 
 \subsubsection{Naive bound}
 Define the collection $(\gamma_{n,n'})_{n'\in\mathbb{N}_+,n\in\left\{ 0,\dots,n' \right\} }$ of elements in $V$ by
 \begin{equation}
   \label{eq:naiveGamma}
   \gamma_{n,n'}=2^{n'}{\rm e}_{n'}\quad \textnormal{ for }n'\in\mathbb{N}, n\in\left\{ 0,\dots,n' \right\}.
 \end{equation}
 \begin{lemma}
   \label{lem:naiveProof}
   With $\gamma_{n,n'}=2^{n'}{\rm e}_{n'}$, the collection $(\gamma_{n,n'})_{n'\in\mathbb{N}_+,n\in\left\{ 0,\dots,n' \right\}}$ satisfies the bound condition from Definition~\ref{def:boundCondition}.
   \begin{proof}
	The first property of the bound condition requires that for all $n'\in\mathbb{N}_+$ and $n\in\left\{ 0,\dots,n' \right\}$ 
	 \begin{equation*}
	   \max\left\{ \mathcal{H}_{n'}(\mathcal{S}_h)|h\in\textnormal{RL}(n,n') \right\}\preceq \gamma_{n,n'}.
	 \end{equation*} To prove this let $n,n'$ as above and $h\in\textnormal{RL}(n,n')$. Then for all $J\in\mathbb{N}$,
	 $\sum_{j=J}^{\infty}\left( \mathcal{H}_{n'}(\mathcal{S}_{h}) \right)_j\le|\mathcal{S}_h|\le 2^{n'}=\sum_{j=J}^{\infty}(\gamma_{n,n'})_j$. The second property of the bound condition requires for all $n'\in\mathbb{N}_+, n,\tilde n\in\left\{ 0,\dots,n' \right\}$ that $\gamma_{n,n'}\preceq \gamma_{\tilde n,n'}$ whenever $n\le \tilde n$. But this monotonicity in the first index is clearly fulfilled.
   \end{proof} 
 \end{lemma}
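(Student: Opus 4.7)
The approach is to verify the two properties in Definition~\ref{def:boundCondition} directly, using only the elementary fact that $\mathcal{S}_h \subset \{0,1\}^{n'}$ has cardinality at most $2^{n'}$ for any $h \in \textnormal{RL}(n,n')$.

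For property (1), I would fix $n' \in \mathbb{N}_+$, $n \in \{0, \ldots, n'\}$, and $h \in \textnormal{RL}(n,n')$, and then unfold Definition~\ref{def:preceq} to check $\sum_{j \ge J}(\mathcal{H}_{n'}(\mathcal{S}_h))_j \le \sum_{j \ge J}(2^{n'}{\rm e}_{n'})_j$ for every $J \in \mathbb{N}$. The right-hand side equals $2^{n'}$ when $J \le n'$ and $0$ when $J > n'$. By Definition~\ref{def:Hist}, the left-hand side counts those $s \in \mathcal{S}_h$ with $|s| \ge J$, so it is bounded by $|\mathcal{S}_h| \le 2^{n'}$ in the first case, and vanishes in the second because $|s| \le n' < J$ for every $s \in \{0,1\}^{n'}$. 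This establishes $\mathcal{H}_{n'}(\mathcal{S}_h) \preceq \gamma_{n,n'}$ for each individual $h$. Since a common upper bound for a finite family of histograms also dominates their $\preceq$-maximum (by the characterization stated right after Definition~\ref{def:maxpreceq}), the same inequality passes to the maximum over $h \in \textnormal{RL}(n,n')$.

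Property (2) is essentially free: the definition $\gamma_{n,n'} = 2^{n'} {\rm e}_{n'}$ does not depend on $n$, so whenever $n \le \tilde n$ we have $\gamma_{n,n'} = \gamma_{\tilde n,n'}$, and reflexivity of $\preceq$ (Lemma~\ref{lem:partialOrder}) yields the required inequality. No substantive obstacle is expected here; the entire proof reduces to unpacking Definitions~\ref{def:Hist}, \ref{def:preceq} and combining them with the crude cardinality bound $|\mathcal{S}_h| \le 2^{n'}$, which is exactly the uninformative input that motivates the "naive" label attached to this collection.
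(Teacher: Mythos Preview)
Your proof is correct and follows essentially the same approach as the paper: verify Definition~\ref{def:preceq} directly via the crude bound $|\mathcal{S}_h|\le 2^{n'}$, and note that $\gamma_{n,n'}$ is constant in $n$ for the monotonicity. If anything, you are slightly more careful than the paper, which writes $2^{n'}=\sum_{j=J}^{\infty}(\gamma_{n,n'})_j$ without separating out the case $J>n'$ (where both sides are zero).
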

 According to Definition~\ref{def:boundMatrix}, the corresponding bound matrices are $B^{(\gamma)}_{n'}=2^{n'} I_{n'+1}$, $n'\in\mathbb{N}_+$. Hence, our main result becomes $|\mathcal{S}_{\mathbf{h}}|\le 2^{n_1+\dots+n_L}$, which is the bound from equation~\eqref{eq:MostBasicBound}.
 \subsubsection{Using Zaslavsky's result yields Mont\'ufar's bound}
 \label{sec:Zaslavsky}
 Note that by Zaslavsky's result (Lemma~\ref{lem:OneLayer1975}), $|\mathcal{S}_{h}|\le\sum_{i=0}^{n}{n' \choose n}$ for $n,n'\in\mathbb{N}_+$, $h\in\textnormal{RL}(n,n')$. This motivates the definition
 \begin{equation}
   \label{eq:zaslavskyGamma}
   \gamma_{n,n'}=\sum_{j=0}^{n}{n'\choose j}{\rm e}_{n'}\quad \textnormal{ for }n'\in\mathbb{N}_{+}, n\in\left\{ 0,\dots,n' \right\}.
 \end{equation}
 \begin{lemma}
   \label{lem:zaslavskyProof}
   With $\gamma_{n,n'}=\sum_{j=0}^{n}{n'\choose j}{\rm e}_{n'}$, the collection $(\gamma_{n,n'})_{n'\in\mathbb{N}_{+},n\in\left\{ 0,\dots,n' \right\}}$ satisfies the bound condition from Definition~\ref{def:boundCondition}.
   \begin{proof}
	 Let $n'\in\mathbb{N}_{+},n\in\left\{ 0,\dots,n' \right\}$ and $h\in\textnormal{RL}(n,n')$. By Lemma~\ref{lem:OneLayer1975}, $|\mathcal{S}_h|\le\sum_{j=0}^{n}{n'\choose j}$. For all $J\in\mathbb{N}$ note that
	 $\sum_{j=J}^{\infty}\left( \mathcal{H}_{n'}(\mathcal{S}_{h}) \right)_j=\sum_{j=J}^{\infty}\mid\left\{ s\in\mathcal{S}_h| |s|=j \right\}|$. If $J>n'$, this is zero. If $J\le n'$, then 
	   \begin{equation*}
		 \sum_{j=J}^{\infty}(\mathcal{H}_{n'}(\mathcal{S}_h))_j\le\vert\mathcal{S}_{h}\vert\le \sum_{j=0}^{n}{n'\choose j}=(\gamma_{n,n'})_{n'}=\sum_{j=J}^{\infty}(\gamma_{n,n'})_j.
	   \end{equation*} This means that $\mathcal{H}_{n'}(\mathcal{S}_{h})\preceq \gamma_{n,n'}$ according to Definition~\ref{def:preceq}. The second bound property of the bound condition is fulfilled because $\sum_{j=0}^{n}{n'\choose j}{\rm e}_{n'}\preceq \sum_{j=0}^{\tilde n}{n'\choose j}{\rm e}_{n'}$ for $n\le \tilde n$.
   \end{proof} 
 \end{lemma}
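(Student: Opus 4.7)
The plan is to verify the two clauses of the bound condition directly from the definitions, using Zaslavsky's result as the sole nontrivial ingredient.

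For the first clause, fix $n'\in\mathbb{N}_{+}$, $n\in\{0,\dots,n'\}$, and $h\in\textnormal{RL}(n,n')$. I want to show $\mathcal{H}_{n'}(\mathcal{S}_h)\preceq \gamma_{n,n'}=\sum_{j=0}^{n}\binom{n'}{j}\,{\rm e}_{n'}$. Unpacking Definition~\ref{def:preceq}, this requires $\sum_{j=J}^{\infty}\bigl(\mathcal{H}_{n'}(\mathcal{S}_h)\bigr)_j \le \sum_{j=J}^{\infty}(\gamma_{n,n'})_j$ for every $J\in\mathbb{N}$. The right-hand side equals $\sum_{j=0}^{n}\binom{n'}{j}$ when $J\le n'$ and equals $0$ when $J>n'$. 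For $J>n'$, the left-hand side is also $0$ because each attained signature lives in $\{0,1\}^{n'}$, so $|s|\le n'$ and no mass of $\mathcal{H}_{n'}(\mathcal{S}_h)$ sits at indices strictly above $n'$. For $J\le n'$, the left-hand side is bounded crudely by the total mass $|\mathcal{S}_h|$, and Zaslavsky's theorem (the hyperplane arrangement bound already recorded in the appendix as Lemma~\ref{lem:OneLayer1975}) gives $|\mathcal{S}_h|\le \sum_{j=0}^{n}\binom{n'}{j}$, matching the right-hand side.

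For the second clause, I need $\gamma_{n,n'}\preceq \gamma_{\tilde n,n'}$ whenever $n\le\tilde n\le n'$. Since both $\gamma_{n,n'}$ and $\gamma_{\tilde n,n'}$ are supported at the single index $n'$, the $\preceq$ comparison reduces to comparing their scalar coefficients, and $\sum_{j=0}^{n}\binom{n'}{j}\le \sum_{j=0}^{\tilde n}\binom{n'}{j}$ is immediate.

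There is essentially no obstacle here: the only substantive input is Zaslavsky's arrangement bound, which is already available; the rest is bookkeeping with the tail-sum characterization of $\preceq$ and the observation that $\gamma_{n,n'}$ concentrates all its mass at index $n'$, which makes the tail sums constant for $J\le n'$ and zero thereafter. The mild subtlety worth stating explicitly is the edge case $n=0$, handled by the convention in Definition~\ref{def:boundCondition} that $\mathcal{H}_{n'}(\mathcal{S}_h)={\rm e}_0$ for $h\in\textnormal{RL}(0,n')$, which satisfies $\sum_{j=J}^{\infty}{\rm e}_0\in\{0,1\}\le\binom{n'}{0}=1$, so the inequality still holds.
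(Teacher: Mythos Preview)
Your proof is correct and follows essentially the same approach as the paper's: both verify the tail-sum inequality defining $\preceq$ by splitting into the cases $J>n'$ (where both sides vanish) and $J\le n'$ (where the left side is crudely bounded by $|\mathcal{S}_h|$ and Zaslavsky's Lemma~\ref{lem:OneLayer1975} finishes), and both dispatch the second clause by the trivial monotonicity of the partial sums of binomial coefficients. Your explicit remark on the $n=0$ edge case is a minor addition not spelled out in the paper but consistent with its conventions.
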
 With this definition for $\gamma$, the bound matrices $(B^{(\gamma)}_{n'})_{n'\in\mathbb{N}_{+}}$ from Definition~\ref{def:boundMatrix} that appear in equation~\eqref{eq:MainRecursiveMatrix} are diagonal. We will call these matrices \emph{Zaslavsky bound matrices} and denote them by $(D_n)_{n\in\mathbb{N}_{+}}$. Following the construction from Definition~\ref{def:boundMatrix}, we obtain
	\begin{equation}
	  D_n=\textnormal{diag}\left( \sum_{j=0}^{0}{ n\choose j },\sum_{j=0}^{1}{ n\choose j },\dots,\sum_{j=0}^{n}{ n\choose j } \right) \quad\textnormal{ for }n\in\mathbb{N}_{+}
	\end{equation}
  \begin{example}
	The Zaslavsky bound matrices $D_1$ to $D_4$ are
	\begin{eqnarray*}
	D_1&=& 
	{\tiny \begin{pmatrix}
	  1&0\\
	  0&2\\
  \end{pmatrix}}\\
	D_2&=& 
	\tiny
	\begin{pmatrix}
	  1&0&0\\
	  0&3&0\\
	  0&0&4
	\end{pmatrix}\\
	D_3&=& 
	\tiny
	\begin{pmatrix}
	  1&0&0&0\\
	  0&4&0&0\\
	  0&0&7&0\\
	  0&0&0&8
	\end{pmatrix}\\
	D_4&=& 
	\tiny
	\begin{pmatrix}
	  1&0&0&0&0\\
	  0&5&0&0&0\\
	  0&0&11&0&0\\
	  0&0&0&15&0\\
	  0&0&0&0&16
	\end{pmatrix}
	\end{eqnarray*}
  \end{example}
	Now, equation~\eqref{eq:MainRecursiveMatrix} becomes
	$ |\mathcal{S}_{\mathbf{h}}|\le \|D_{n_L}M_{n_{L-1},n_L}\dots D_{n_1}M_{n_{0},n_{1}}e_{n_0+1}\|_1$,
	which may also be written in the form~\eqref{eq:Montufarbound}:
	\begin{equation}
	  \label{eq:MainRecursiveMatrixDAlternativeForm}
	  |\mathcal{S}_{\mathbf{h}}\vert\le\prod_{l=1}^{L}\sum_{j=0}^{\min(n_0,\dots,n_{l-1})}{n_{l}\choose j}.
	\end{equation}
  We will call this bound the \emph{Mont\'ufar bound}.

	\subsubsection{Using binomial coefficients in combination with Zaslavsky's result}
	\label{sec:BinomialCoefficients}
 Now, we define the collection $(\gamma_{n,n'})_{n'\in\mathbb{N}_{+},n\in\left\{ 0,\dots,n' \right\} }$ of elements in $V$ by
 \begin{equation}
   \label{eq:binomialGamma}
 \gamma_{n,n'}=\sum_{j=0}^{n}{n'\choose j}{\rm e}_{n'-j}\quad \textnormal{ for }n'\in\mathbb{N}_{+}, n\in\left\{ 0,\dots,n' \right\}
 \end{equation}
 \begin{lemma}
   With $\gamma_{n,n'}=\sum_{j=0}^{n}{n'\choose j}{\rm e}_{n'-j}$, the collection $(\gamma_{n,n'})_{n'\in\mathbb{N}_{+},n\in\left\{ 0,\dots,n' \right\}}$ satisfies the bound condition from Definition~\ref{def:boundCondition}.
\label{lem:binomialProof}
\begin{proof}
  We proceed as in the proof of Lemma~\ref{lem:zaslavskyProof}. Let $n'\in\mathbb{N}_{+},n\in\left\{ 0,\dots,n' \right\}$ and $h\in\textnormal{RL}(n,n')$. For all $J\in\mathbb{N}$,  $\sum_{j=J}^{\infty}\left( \mathcal{H}_{n'}(\mathcal{S}_{h}) \right)_j=\sum_{j=J}^{\infty}\mid\left\{ s\in\mathcal{S}_h| |s|=j \right\}|$. If $J>n'$, then this is zero. If $J\in\left\{ n'-n,\dots,n' \right\}$, 
  \begin{equation*}
	\sum_{j=J}^{\infty}(\mathcal{H}_{n'}(\mathcal{S}_h))_j=
  \sum_{j=J}^{\infty}\mid\left\{ s\in\mathcal{S}_h\mid |s|=j \right\}\mid
  \le\sum_{j=J}^{n'}{n'\choose j}=
  \sum_{j=J}^{\infty}(\gamma_{n,n'})_j.	
\end{equation*} For $J\in\left\{ 0,\dots,n'-n-1 \right\}$, $\sum_{j=J}^{\infty}(\mathcal{H}_{n'}(\mathcal{S}_h))_j\le |\mathcal{S}_h|\le\sum_{j=0}^{n}{n'\choose j}=\sum_{j=J}^{\infty}(\gamma_{n,n'})_j$. Hence, the first property of the bound condition is satisfied, but also the second condition is clearly met. 
\end{proof}
 \end{lemma}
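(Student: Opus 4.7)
The plan is to verify the two conditions of Definition~\ref{def:boundCondition} directly from the definition of $\preceq$. Fix $n'\in\mathbb{N}_+$ and $n\in\left\{ 0,\ldots,n' \right\}$. Since $\gamma_{n,n'}=\sum_{j=0}^{n}{n'\choose j}{\rm e}_{n'-j}$, its only nonzero entries are $(\gamma_{n,n'})_{i}={n'\choose n'-i}$ for $i\in\left\{ n'-n,\ldots,n' \right\}$. A short computation (substitute $k=n'-i$) therefore gives the tail-sum formula
\begin{equation*}
  \sum_{i=J}^{\infty}(\gamma_{n,n'})_i = \sum_{k=0}^{\min(n'-J,\,n)}{n'\choose k}
\end{equation*}
for $0\le J\le n'$, and $0$ for $J>n'$. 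In particular, the formula collapses to $\sum_{k=0}^{n'-J}{n'\choose k}$ on the regime $n'-n\le J\le n'$ and to $\sum_{k=0}^{n}{n'\choose k}$ on the regime $0\le J<n'-n$.

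For the first bound condition, I would show $\sum_{i=J}^{\infty}(\mathcal{H}_{n'}(\mathcal{S}_h))_i\le \sum_{i=J}^{\infty}(\gamma_{n,n'})_i$ for every $h\in\textnormal{RL}(n,n')$ and $J\in\mathbb{N}$. The left-hand side counts attained signatures $s\in\mathcal{S}_h$ with $|s|\ge J$. Split into three cases: (i) if $J>n'$, both sides vanish; (ii) if $n'-n\le J\le n'$, bound the left-hand side by the total number of binary strings of length $n'$ with at least $J$ ones, which by the symmetry ${n'\choose k}={n'\choose n'-k}$ equals $\sum_{k=J}^{n'}{n'\choose k}=\sum_{k=0}^{n'-J}{n'\choose k}$, matching the tail of $\gamma_{n,n'}$ exactly; (iii) if $J<n'-n$, bound the left-hand side crudely by $|\mathcal{S}_h|$ and invoke Zaslavsky's hyperplane bound (Lemma~\ref{lem:OneLayer1975}) to get $|\mathcal{S}_h|\le\sum_{k=0}^{n}{n'\choose k}$, which again matches the tail of $\gamma_{n,n'}$ in that regime.

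For the second bound condition, suppose $n\le\tilde n\le n'$. The difference $\gamma_{\tilde n,n'}-\gamma_{n,n'}=\sum_{j=n+1}^{\tilde n}{n'\choose j}{\rm e}_{n'-j}$ has only non-negative entries, so every tail sum of $\gamma_{\tilde n,n'}$ dominates the corresponding tail sum of $\gamma_{n,n'}$. This is exactly the inequality $\gamma_{n,n'}\preceq\gamma_{\tilde n,n'}$ required by Definition~\ref{def:preceq}.

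There is no real obstacle here: the proof is essentially a careful bookkeeping argument. The only conceptual subtlety is the case split at $J=n'-n$: for indices at the top of the histogram one exploits the trivial combinatorial count of high-weight binary strings, while for indices below the top one must fall back on Zaslavsky's global count of attained signatures. Recognizing that the definition of $\gamma_{n,n'}$ is engineered precisely so that these two regimes glue together tightly is the main content of the argument.
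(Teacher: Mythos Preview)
Your proof is correct and follows essentially the same route as the paper's: both compute the tail sums of $\gamma_{n,n'}$, split the verification of the $\preceq$-inequality into the three regimes $J>n'$, $n'-n\le J\le n'$ (trivial count of high-weight binary strings), and $J<n'-n$ (Zaslavsky's bound on $|\mathcal{S}_h|$), and then observe that the second monotonicity condition is immediate. Your write-up is slightly more explicit about the tail-sum formula and the symmetry ${n'\choose k}={n'\choose n'-k}$, but the argument is the same.
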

 The previous lemma shows that we can use $\gamma$ as defined in equation~\eqref{eq:binomialGamma} in the main results stated above in equations~\eqref{eq:MainRecursivePhi} and~\eqref{eq:MainRecursiveMatrix}. We will denote the corresponding matrices $(B^{(\gamma)}_{n'})_{n'\in\mathbb{N}_{+}}$ by $(B_{n'})_{n'\in\mathbb{N}_{+}}$ and call them \emph{binomial bound matrices}. With this definition, equation~\eqref{eq:MainRecursiveMatrix} becomes
	\begin{equation}
	  \label{eq:MainRecursiveMatrixB}
	  |\mathcal{S}_{\mathbf{h}}|\le \|B_{n_L}M_{n_{L-1},n_L}\dots B_{n_1}M_{n_{0},n_{1}}e_{n_0+1}\|_1,
	\end{equation}which we will call the \emph{Binomial bound}.
	The Tables~\ref{tab:gammas} and~\ref{tab:phigammas} illustrate how the matrix $B_5$ can be obtained. 
	\begin{table}
  \centering
  {\def\arraystretch{1.3}\tabcolsep=3pt
  \begin{tabular}[htb]{|l||c|c|c|c|c|c|c|c}
	\hline
	\backslashbox{i}{n}&0&1&2&3&4&5 \\
	\hline\hline
	0&0&0&0&0&0&${5\choose 5}$\\
	\hline
	1&0&0&0&0&$5\choose 4$&${5\choose 4}$\\
	\hline
	2&0&0&0&$5\choose 3$&$5\choose 3$&${5\choose 3}$\\
	\hline
	3&0&0&$5\choose 2$&$5\choose 2$&$5\choose 2$&${5\choose 2}$\\
	\hline
	4&0&$5\choose 1$&$5\choose 1$&$5\choose 1$&$5\choose 1$&${5\choose 1}$\\
	\hline
	5&$5\choose 0$&$5\choose 0$&$5\choose 0$&$5\choose 0$&$5\choose 0$&${5\choose 0}$\\
	\hline
	6&$0$&$0$&$0$&$0$&$0$&${0}$\\
	\hline
	$\vdots$&$0$&$0$&$0$&$0$&$0$&${0}$\\
  \end{tabular}
}
\caption{The values of $(\gamma_{n,5})_i$ and different values for $i$ and $n$.}
  \label{tab:gammas}
	\end{table}

	\begin{table}
	  \centering
	  {\def\arraystretch{1.3}\tabcolsep=3pt
	  \begin{tabular}[htb]{|l||c|c|c|c|c|c|c|c}
		\hline
		\backslashbox{i}{n}&0&1&2&3&4&5&6&$\dots$ \\
		\hline\hline
		0&$5\choose 0$&0&0&0&0&${5\choose 5}$&$5\choose 5$&$5\choose 5$\\
		\hline
		1&0&${5\choose 0}+{5\choose 1}$&0&0&$5\choose 4$&${5\choose 4}$&$5\choose 4$&$5\choose 4$\\
		\hline
		2&0&0&${5\choose0}+{5\choose 1}+{5\choose 2}$&$5\choose 3$&$5\choose 3$&${5\choose 3}$&$5\choose 3$&$5\choose 3$\\
		\hline
		3&0&0&0&${5\choose0}+{5\choose 1}+{5\choose 2}$&$5\choose 2$&${5\choose 2}$&$5\choose 2$&$5\choose 2$\\
		\hline
		4&0&0&0&0&${5\choose 0}+{5\choose 1}$&${5\choose 1}$&$5\choose 1$&$5\choose 1$\\
		\hline
		5&0&0&0&0&0&${5\choose 0}$&$5\choose 0$&$5\choose 0$\\
		\hline
		6&$0$&$0$&$0$&$0$&$0$&${0}$&$0$&$0$\\
		\hline
		$\vdots$&$0$&$0$&$0$&$0$&$0$&${0}$&$0$&$\ddots$\\
	  \end{tabular}
	}
	\caption{The values of $(\varphi^{(\gamma)}_5\left( {\rm e}_n \right))_i=\textnormal{cl}_n(\gamma_{n,5})_i$ for different values of $i$ and $n$. The entries for $i,n\in\left\{ 0,\dots,5 \right\}$ represent the matrix $B^{(\gamma)}_{5}\in\mathbb{R}^{6\times 6}$. It is upper triangular because of the $\textnormal{cl}$ function involved in Definition~\ref{def:PhiFunc}. To illustrate the intuition behind this, take for example an input region with dimension $n=1$. The output of the five layers on this region cannot have dimension greater than $1$.}
\label{tab:phigammas}
  \end{table}
  \begin{example}
  The binomial bound matrices $B_1$ to $B_4$ are
  \begin{eqnarray*}
	B_1&=& 
	{\tiny
	\begin{pmatrix}
	  1\choose 0&1\choose 1\\
	  0&1\choose 0\\
	\end{pmatrix}=
	\begin{pmatrix}
	  1 &1\\
	  0&1
  \end{pmatrix}}\\
	B_2&=& 
	{\tiny
	\begin{pmatrix}
	  2\choose 0&0& 2\choose 2\\
	  0&{2\choose 0}+{2\choose 1}&2\choose 1\\
	  0&0&2\choose 0\\
	\end{pmatrix}=
	\begin{pmatrix}
	  1 &0&1\\
	  0&3&2\\
	  0&0&1
  \end{pmatrix}}\\
	B_3&=& 
	{\tiny
	\begin{pmatrix}
	  3\choose 0&0&0& 3\choose 3\\
	  0&{3\choose 0}+{3\choose 1}&3\choose 2& 3\choose 2\\
	  0&0&{3\choose 0}+{3\choose 1}& 3\choose 2\\
	  0&0&0&3\choose 0\\
	\end{pmatrix}=
	\begin{pmatrix}
	  1 &0&0&1\\
	  0&4&3&3\\
	  0&0&4&3\\
	  0&0&0&1
  \end{pmatrix}}\\
	B_4&=& 
	{\tiny
	\begin{pmatrix}
	  4\choose 0&0&0&0& 4\choose 4\\
	  0&{4\choose 0}+{4\choose 1}&0&4\choose 3& 4\choose 3\\
	  0&0&{4\choose 0}+{4\choose 1}+{4\choose 2}&4\choose 2& 4\choose 2\\
	  0&0&0&{4\choose 0}+{4\choose 1}& 4\choose 1\\
	  0&0&0&0& 4\choose 0\\
	\end{pmatrix}=
	\begin{pmatrix}
	  1 &0&0&0&1\\
	  0&5&0&4&4\\
	  0&0&11&6&6\\
	  0&0&0&5&4\\
	  0&0&0&0&1
  \end{pmatrix}}\\
  \end{eqnarray*}
  \end{example}
  In the Appendix~\ref{app:decompBound}, we give an explicit formula for these bound matrices and analyze them. It turns out that an explicit closed-form formula for a Jordan-like decomposition exists such that arbitrary powers can easily be computed. We have preferred this representation because in contrast to an ordinary Jordan decomposition, we get corresponding basis transformation matrices that are upper triangular.
  \begin{example}
	The binomial bound matrices $B_1$ to $B_4$ have the following decomposition;
	\begin{eqnarray*}
	B_1&=& 
	{\tiny
	\begin{pmatrix}
	  1&0\\
	  0&1
	\end{pmatrix}
	\begin{pmatrix}
	1&1\\
	0&1
	\end{pmatrix}
	\begin{pmatrix}
	  1&0\\
	0&1
	\end{pmatrix}^{-1}
  }\\
	B_2&=& 
	{\tiny
	\begin{pmatrix}
	  1&0&0\\
	  0&1&-1\\
	  0&0&1
	\end{pmatrix}
	\begin{pmatrix}
	  1&0&1\\
	  0&3&0\\
	  0&0&1
	\end{pmatrix}
	\begin{pmatrix}
	  1&0&0\\
	  0&1&-1\\
	  0&0&1
  \end{pmatrix}^{-1}}\\
	B_3&=& 
	{\tiny
	\begin{pmatrix}
	  1&0&0&0\\
	  0&3&0&0\\
	  0&0&1&-1\\
	  0&0&0&1
	\end{pmatrix}
	\begin{pmatrix}
	  1&0&0&1\\
	  0&4&1&0\\
	  0&0&4&0\\
	  0&0&0&1
	\end{pmatrix}
	\begin{pmatrix}
	  1&0&0&0\\
	  0&3&0&0\\
	  0&0&1&-1\\
	  0&0&0&1
  \end{pmatrix}^{-1}}\\
	B_4&=& 
	{\tiny
	\begin{pmatrix}
	  1&0&0&0&0\\
	  0&4&0&0&0\\
	  0&0&1&-1&0\\
	  0&0&0&1&-1\\
	  0&0&0&0&1
	\end{pmatrix}
	\begin{pmatrix}
	  1&0&0&0&1\\
	  0&5&0&1&0\\
	  0&0&11&0&0\\
	  0&0&0&5&0\\
	  0&0&0&0&1
	\end{pmatrix}
  \begin{pmatrix}
	  1&0&0&0&0\\
	  0&4&0&0&0\\
	  0&0&1&-1&0\\
	  0&0&0&1&-1\\
	  0&0&0&0&1
  \end{pmatrix}^{-1}}
	\end{eqnarray*}
	Such a decomposition can be useful for the analysis of the bound ~\eqref{eq:MainRecursiveMatrixB} in the case when several stacked layers are of the same dimension because $M_{n,n'}$ is the identity matrix for $n=n'\in\mathbb{N}_+$.
  \end{example}
  We want to note that our bound~\eqref{eq:MainRecursiveMatrixB} coincides with the bound from equation~\eqref{eq:BoundingCounting}. This can be seen from the recursion used in the proof of Theorem~1, \cite{DBLP:BoundingCounting} which is represented by the matrix multiplication $M_{n,n'}B_{n}$, $n,n'\in\mathbb{N}_+$ in~\eqref{eq:MainRecursiveMatrixB}.
 \subsubsection{Asymptotic setting}
  \label{sec:Asymptotic}
  We will now analyze the asymptotic behaviour of these bounds in the setting where the number of layers $L$ is varying but each of them has the same width $n\in\mathbb{N}_{+}$ except for the input, which is of arbitrary fixed dimension $n_0\in\mathbb{N}_+$.

  \begin{example}
	Assume that the input layer has dimension $n_{0}\le 4$, $L\in\mathbb{N}_{+}$ and all other layers are 4-dimensional, i.e. $n_1=n_2=\dots=n_L=4$. For $\mathbf{n}=(n_1,\dots,n_L)$ and $\mathbf{h}\in\text{RL}(n_0,\mathbf{n})$, the Mont\'{u}far bound from equation~\eqref{eq:MainRecursiveMatrixDAlternativeForm} states that
  \begin{equation}
	\label{eq:example4Naive}
	|\mathcal{S}_{\mathbf{h}}|\le\left( \sum_{j=0}^{n_0}{4\choose j}\right)^{L},
  \end{equation}whereas the better upper bound using binomial coefficients from equation~\eqref{eq:MainRecursiveMatrixB} implies
  \begin{equation}
	\label{eq:example4Binomial}
	|\mathcal{S}_{\mathbf{h}}|\le\|B_4^Le_{n_0+1}\|_1.
  \end{equation}
  With the theory from Appendix~\ref{app:decompBound}, we can compute this matrix power for arbitrary $L$:
  \begin{align*}
	B_4^L=
	{\tiny
	\begin{pmatrix}
	  1&0&0&0&1\\
	  0&5&0&4&4\\
	  0&0&11&6&6\\
	  0&0&0&5&4\\
	  0&0&0&0&1
  \end{pmatrix}}^L
	={
	\tiny
	\begin{pmatrix}
	  1&0&0&0&0\\
	  0&4&0&0&0\\
	  0&0&1&-1&0\\
	  0&0&0&1&-1\\
	  0&0&0&0&1
	\end{pmatrix}
	\begin{pmatrix}
	  1&0&0&0&L\\
	  0&5^L&0&L5^{L-1}&0\\
	  0&0&11^L&0&0\\
	  0&0&0&5^L&0\\
	  0&0&0&0&1
	\end{pmatrix}
  \begin{pmatrix}
	  1&0&0&0&0\\
	  0&\tfrac{1}{4}&0&0&0\\
	  0&0&1&1&1\\
	  0&0&0&1&1\\
	  0&0&0&0&1
	\end{pmatrix}
  }.
  \end{align*}
  This allows us to explicitly evaluate the two bounds, see Table~\ref{tab:exampleTwoBounds}. For $n_0\le2$, the bounds coincide because the upper left quarter of the matrices $B_n$ and $D_n$ are the same. However, for $n_0\ge 3$, for the asymptotic order of the bound~\eqref{eq:example4Binomial} is always $\mathcal{O}(11^L)$, whereas~\eqref{eq:example4Naive} is of order $\mathcal{O}( (\sum_{j=0}^{n_0}{4\choose j} )^L)$.
  \begin{table}
	\centering
	\begin{tabular}{|r|l|l|l|l|}
	  \hline
	  $n_0$&1&2&3&4\\ 
	  \hline
	 Mont\'{u}far bound&$5^L$&$11^L$&$15^L$&$16^L$\\ 
	 \hline
	 binomial bound&$5^L$&$11^L$&$11^L+4L5^{L-1}$&$11^L+4L5^{L-1}+L$\\ 
	 \hline
	\end{tabular}
	\caption{Explicit computation of the bounds in equations~\eqref{eq:example4Naive} and~\eqref{eq:example4Binomial}}
	\label{tab:exampleTwoBounds}
  \end{table}
  \end{example}
  We can exploit our theory to generalize the above example. Assume that $n_0, n\in\mathbb{N}_{+}$, $n_1=\dots=n_L=n$, $\mathbf{n}=(n_1,\dots,n_L)$ and $\mathbf{h}\in\textnormal{RL}(n_0,\mathbf{n})$. In this case, the Mont\'{u}far bound from equation~\eqref{eq:MainRecursiveMatrixDAlternativeForm} implies
  \begin{equation}
	|\mathcal{S}_{\mathbf{h}}|\le\left( \sum_{j=0}^{\min(n_0,n)}{n\choose j} \right)^L.
  \end{equation}
  In contrast, the binomial bound yields $|\mathcal{S}_{\mathbf{h}}|\le \|B^{L}_{n}M_{n_0,n}e_{n_0+1}\|_{1}=\|B^{L}_{n}e_{\min(n_0,n)+1}\|_1$, which is evaluated explicitly in Corollary~\ref{cor:powerBNorm}. This result is part of our contribution. Table~\ref{tab:asymptoticbounds} compares the results.  Note that the results in Table~\ref{tab:exampleTwoBounds} are a special case. In particular, we can conclude the asymptotic orders of the two bounds for the specified setting as the number of layers $L$ tends to infinity, see Table~\ref{tab:boundOrders}. 
  \begin{table}
	\centering
	\begin{tabular}{|r|c|l|}
	  \hline
	  &$n_0\le \lfloor n/2\rfloor$& $n_0>\lfloor n/2\rfloor$\\
	  \hline
  Mont\'{u}far bound&
  \multicolumn{2}{c|}{
$ (\sum_{j=0}^{\min(n_0,n)}{n\choose j})^L $} \\
\hline
equation \eqref{eq:BoundingCoundingWeakened}&
  \multicolumn{2}{c|}{
$
   2^{Ln}\left( \frac{1}{2}+ \frac{1}{2\sqrt{\pi n}}\right)^{L/2}\sqrt{2} $} \\

	  \hline
	  binomial bound&
$ (\sum_{j=0}^{n_0}{n\choose j})^L $
	  &
$\left( \sum_{j=0}^{\lfloor n/2\rfloor}{n\choose j} \right)^{L}+L\sum_{s=\lfloor n/2\rfloor +1}^{n_0}\left( \sum_{j=0}^{n-s}{n\choose j} \right)^{L-1}{n\choose n-s}$\\
	  \hline
	\end{tabular}
	\caption{Comparison of explicit bounds for the considered asymptotic setting. The tightest bound in the last row is our contribution. In contrast to the bound from the second row, it is a direct evaluation of~\eqref{eq:MainRecursiveMatrixB} without using approximations.}
	\label{tab:asymptoticbounds}
  \end{table}

  \begin{table}
	\centering
	\begin{tabular}{|r|c|l|}
	  \hline
	  &$n_0\le \lfloor n/2\rfloor$& $n_0>\lfloor n/2\rfloor$\\
	  \hline
  Mont\'{u}far bound&
  \multicolumn{2}{c|}{
$\mathcal{O}\left( (\sum_{j=0}^{\min(n_0,n)}{n\choose j})^L \right)$} \\
\hline
equation \eqref{eq:BoundingCoundingWeakened}&
  \multicolumn{2}{c|}{
  $\mathcal{O}\left( 2^{L\left( n-\frac{1}{2}+\frac{1}{2}\log_2\left( 1+\tfrac{1}{\sqrt{\pi n}} \right) \right)} \right)$}
\\

	  \hline
	  binomial bound, $n>1$&
	  $ \mathcal{O}\left( (\sum_{j=0}^{n_0}{n\choose j})^L  \right)$
	  &
	  $\mathcal{O}\left( (\sum_{j=0}^{\lfloor n/2\rfloor}{n\choose j} )^{L}\right)$\\
	  \hline
	\end{tabular}
	\caption{For $L\to\infty$, the asymptotic orders of the two bounds are the same for $n_0\in\left\{ 0,\dots,\lfloor n/2\rfloor\right\}$. For $n_0>\lfloor n/2\rfloor$, the order of the binomial bound is strictly better and does not depend on $n_0$. }
	\label{tab:boundOrders}
  \end{table}
  For example, in the case where $n\ge 3$ is an odd number and $n_0\ge n_1=\dots=n_L=n$, we obtain the asymptotic orders
  \begin{equation}
	\label{eq:equalWidthsRate}
	|\mathcal{S}_{\mathbf{h}}|=
	\begin{cases}
	  \mathcal{O}(2^{Ln}) &\quad\textnormal{ for the Mont\'{u}far bound}\\
	  \mathcal{O}\left( 2^{L\left( n-\frac{1}{2}+\frac{1}{2}\log_2\left( 1+\tfrac{1}{\sqrt{\pi n}} \right) \right)} \right) &\quad\textnormal{ for the bound~\eqref{eq:BoundingCoundingWeakened} from~\cite{DBLP:BoundingCounting}}\\
	  \mathcal{O}(2^{L(n-1)}) &\quad\textnormal{ for the binomial bound}
	\end{cases}
  \end{equation}
  This means that in a neural network where all layers have the same width $n$ and $L\to\infty$, the binomial bound on the number of convex regions in the input space $\mathbb{R}^{n_0}$ is of the same order as the Mont\'{u}far bound would be when all layers had width $n-1$, i.e. our new result gains one dimension in each layer compared to the Mont\'{u}far bound and more than a half dimension in each layer compared to the bound~\eqref{eq:BoundingCoundingWeakened} from \cite{DBLP:BoundingCounting}.

\subsubsection{Discussion}
\label{sec:discussion}
In the previous sections, we have derived three bounds as special cases of our framework: The naive bound from equation~\eqref{eq:MostBasicBound}, the Mont\'ufar bound from equation~\eqref{eq:MainRecursiveMatrixDAlternativeForm} and the Binomial bound from equation~\eqref{eq:MainRecursiveMatrixB}. Below we give two results on how they are related. They state that the above enumeration order is increasing in strictness and give precise necessary and sufficient conditions when one bound is strictly better than another. The proofs are given in the Appendix~\ref{sec:discussionProofs}.

\begin{lemma}
  \label{lem:MontufarNaive}
  The Mont\'{u}far bound from equation~\eqref{eq:MainRecursiveMatrixDAlternativeForm} is always at least as good as the naive bound from equation~\eqref{eq:MostBasicBound} and is strictly better if and only if the width is increasing at some layer, i.e. if there exists $l\in\left\{ 1,\dots,L \right\}$ such that $n_{l-1}<n_l$.
\end{lemma}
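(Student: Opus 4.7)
The plan is to compare the two bounds factor by factor. The naive bound from~\eqref{eq:MostBasicBound} factors as $\prod_{l=1}^L 2^{n_l}$, while the Mont\'ufar bound from~\eqref{eq:MainRecursiveMatrixDAlternativeForm} is $\prod_{l=1}^L \sum_{j=0}^{\min(n_0,\dots,n_{l-1})}\binom{n_l}{j}$. The elementary identity
\begin{equation*}
  \sum_{j=0}^{m}\binom{n_l}{j} \le \sum_{j=0}^{n_l}\binom{n_l}{j} = 2^{n_l},
\end{equation*}
with equality if and only if $m \ge n_l$, gives the factor-wise comparison and immediately yields that the Mont\'ufar bound is always at most the naive bound. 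Because both bounds are products of positive integers, strict inequality of the overall products is equivalent to strict inequality of at least one factor, i.e.\ to the existence of some $l \in \{1,\dots,L\}$ with $\min(n_0,\dots,n_{l-1}) < n_l$.

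Thus it remains to show that the condition ``$\exists\, l:\ \min(n_0,\dots,n_{l-1}) < n_l$'' is equivalent to ``$\exists\, l:\ n_{l-1} < n_l$''. The ($\Leftarrow$) direction is immediate from $\min(n_0,\dots,n_{l-1}) \le n_{l-1} < n_l$. For ($\Rightarrow$), suppose $\min(n_0,\dots,n_{l^*-1}) < n_{l^*}$ and let $k \in \{0,\dots,l^*-1\}$ achieve this minimum, so $n_k < n_{l^*}$. If no $l \in \{k+1,\dots,l^*\}$ satisfied $n_{l-1} < n_l$, the chain $n_k \ge n_{k+1} \ge \dots \ge n_{l^*}$ would contradict $n_k < n_{l^*}$; hence some such $l$ exists.

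The argument is straightforward and the only step that requires any care is the last pigeonhole-style reduction from a gap at the min to an adjacent increasing pair, but that is a one-line telescoping observation. No deeper ingredient from the framework (histograms, matrices, $\preceq$) is needed beyond the explicit forms of the two bounds.
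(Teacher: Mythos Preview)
Your proof is correct. The strictness argument you give is essentially identical to the paper's: both reduce the product inequality to the existence of some $l$ with $\min(n_0,\dots,n_{l-1})<n_l$, and then show this is equivalent to the existence of an adjacent increasing pair $n_{l-1}<n_l$.

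The only difference is in how the ``always at least as good'' part is handled. You argue directly from the explicit product formulas via the elementary inequality $\sum_{j=0}^{m}\binom{n_l}{j}\le 2^{n_l}$. The paper instead invokes its framework machinery: it appeals to Lemmas~\ref{lem:sameArgPreceq} and~\ref{lem:monotonicityPhi}, using that the Zaslavsky collection $\gamma$ from~\eqref{eq:zaslavskyGamma} satisfies $\gamma_{n,n'}\preceq$ the naive collection~\eqref{eq:naiveGamma} entrywise, and then propagating this through the $\varphi^{(\gamma)}$ maps. Your route is more self-contained and arguably cleaner for this particular statement; the paper's route illustrates how the general $\preceq$-monotonicity lemmas yield such comparisons automatically, which is the point it wants to emphasize for the harder companion Lemma~\ref{lem:BinomialMontufar}.
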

While the above result is directly obvious from the two involved bounds, a similar result for the Binomial bound and the Mont\'ufar bound is not obvious. With our framework, we are able derive the following lemma.
\begin{lemma}
  \label{lem:BinomialMontufar}
  The Binomial bound from equation~\eqref{eq:MainRecursiveMatrixB} is always at least as good as the Mont\'{u}far bound from equation~\eqref{eq:MainRecursiveMatrixDAlternativeForm} and is strictly better if and only 
  \begin{equation*}
	\exists l\in\left\{ 1,\dots,L-1 \right\}: n_l<\min\left( n_0,\dots,n_{l} \right)+\min\left( n_0,\dots,n_{l+1} \right).
  \end{equation*}
\end{lemma}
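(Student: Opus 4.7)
The plan is first to derive the inequality via $\preceq$-monotonicity of the $\varphi$ maps in their $\gamma$-argument, and then to pin down strictness through a structural description of the iterated Binomial histograms $v^{\text{Bin}}_l := \varphi^{(\gamma^{\text{Bin}})}_{n_l}\circ\cdots\circ\varphi^{(\gamma^{\text{Bin}})}_{n_1}({\rm e}_{n_0})$ and their Mont\'ufar counterparts $v^{\text{Mont}}_l$.

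For the inequality direction, I would first verify $\gamma^{\text{Bin}}_{n,n'} \preceq \gamma^{\text{Mont}}_{n,n'}$ straight from Definition~\ref{def:preceq}: both histograms have total mass $\sum_{j=0}^n \binom{n'}{j}$, but $\gamma^{\text{Mont}}_{n,n'}$ piles it all at index $n'$ whereas $\gamma^{\text{Bin}}_{n,n'}$ distributes it over indices $n'-n, \dots, n'$, so every tail sum $\sum_{j \ge J}$ is at least as large on the Mont\'ufar side. Because $\varphi^{(\gamma)}_{n'}$ is linear in $\gamma$, and both clipping $\textnormal{cl}_{i^*}$ and nonnegative integer linear combinations preserve $\preceq$, this gives $\varphi^{(\gamma^{\text{Bin}})}_{n'}(v) \preceq \varphi^{(\gamma^{\text{Mont}})}_{n'}(v)$ for every $v \in V$. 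Combined with the $v$-monotonicity of $\varphi^{(\gamma)}_{n'}$ from Lemma~\ref{lem:monotonicityPhi}, a straightforward induction yields $v^{\text{Bin}}_l \preceq v^{\text{Mont}}_l$ at each layer, and then Lemma~\ref{lem:preceqNorm} delivers the Binomial $\le$ Mont\'ufar inequality.

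For the strictness characterization, write $m_l = \min(n_0, \dots, n_l)$ and $c_l = \sum_{j=0}^{m_l}\binom{n_l}{j}$, so that $v^{\text{Mont}}_l = c_1 \cdots c_l\,{\rm e}_{m_l}$. The heart of the argument is a structural claim proved by induction on $l$: whenever $n_{l'} \ge m_{l'} + m_{l'+1}$ for every $l' \in \{1, \dots, l-1\}$, one has $v^{\text{Bin}}_l = c_1 \cdots c_{l-1}\,\textnormal{cl}_{m_l}(\gamma^{\text{Bin}}_{m_l, n_l})$, whose support is exactly the interval $[\max(0, n_l - m_l), m_l]$ with strictly positive mass at every integer in it. The inductive step rests on the observation that for any $n \in [m_l, m_{l-1}]$ one has $\min(n, n_l) = m_l$, so every summand in $\varphi^{(\gamma^{\text{Bin}})}_{n_l}(v^{\text{Bin}}_{l-1})$ becomes the \emph{same} clipped histogram $\textnormal{cl}_{m_l}(\gamma^{\text{Bin}}_{m_l, n_l})$ weighted by mass; the hypothesis at level $l-1$ is precisely what forces the support of $v^{\text{Bin}}_{l-1}$ into $[m_l, m_{l-1}]$, making this common-factor pullout valid.

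If the lemma's condition $n_l \ge m_l + m_{l+1}$ holds throughout $\{1, \dots, L-1\}$, the induction extends to $l=L$ and $\|v^{\text{Bin}}_L\|_1 = c_1\cdots c_L$, matching the Mont\'ufar bound from \eqref{eq:MainRecursiveMatrixDAlternativeForm}. Conversely, let $l^*$ be the smallest index where the condition fails. The structural claim still applies at $l^*$, so the support of $v^{\text{Bin}}_{l^*}$ equals $[\max(0, n_{l^*}-m_{l^*}), m_{l^*}]$; the violated inequality forces $\max(0, n_{l^*}-m_{l^*}) < m_{l^*+1}$, so some index $n^\circ < m_{l^*+1}$ carries strictly positive mass. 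Setting $f_{l+1}(n) := \sum_{j=0}^{\min(n, n_{l+1})}\binom{n_{l+1}}{j}$, which is strictly increasing on $[0, n_{l+1}]$, one computes $\|v^{\text{Mont}}_{l^*+1}\|_1 - \|v^{\text{Bin}}_{l^*+1}\|_1 = \sum_n (v^{\text{Bin}}_{l^*})_n \bigl(f_{l^*+1}(m_{l^*}) - f_{l^*+1}(n)\bigr) > 0$, using that $f_{l^*+1}(m_{l^*}) > f_{l^*+1}(n^\circ)$. Strictness then propagates through the remaining layers via the bound $\|v^{\text{Bin}}_{l+1}\|_1 \le \|v^{\text{Bin}}_l\|_1 \cdot f_{l+1}(m_l)$ (which holds because the support of $v^{\text{Bin}}_l$ is always contained in $[0, m_l]$) combined with the exact identity $\|v^{\text{Mont}}_{l+1}\|_1 = \|v^{\text{Mont}}_l\|_1 \cdot f_{l+1}(m_l)$, so the Mont\'ufar$-$Binomial gap is nondecreasing once positive. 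The main obstacle I anticipate is executing the structural induction cleanly, in particular handling the case split $m_l = m_{l-1}$ versus $m_l < m_{l-1}$ and the degeneracy $n_l \le m_l$ (where $\max(0, n_l - m_l) = 0$) without breaking the support-extension step needed for strictness.
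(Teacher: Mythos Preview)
Your argument is correct and takes a genuinely different route from the paper's. The paper never computes $v^{\text{Bin}}_l$ explicitly. Instead it analyzes, for a single point-mass input ${\rm e}_i$, exactly when $\textnormal{cl}_{n''}\bigl(\varphi^{(\tilde\gamma)}_{n'}({\rm e}_i)\bigr)$ equals or is strictly $\prec$ than $\textnormal{cl}_{n''}\bigl(\varphi^{(\gamma)}_{n'}({\rm e}_i)\bigr)$, obtaining the threshold $n'\ge \min(2i,i+n'')$. It then exploits that the pure Mont\'ufar iterate is always the single point mass $c_1\cdots c_l\,{\rm e}_{m_l}$, and swaps $\tilde\gamma$ for $\gamma$ \emph{one layer at a time} (from layer $L-1$ down), feeding each swap a Mont\'ufar argument; in the strict case it gets $\prec$ at the first violated layer and then propagates via the auxiliary Lemmas~\ref{lem:strictNomIneqPhi} and~\ref{lem:strictstrictMonotonicityPhi}. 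By contrast, you keep the Binomial $\gamma$ fixed throughout and prove an explicit closed form for $v^{\text{Bin}}_l$ under the hypothesis. Your approach is more concrete and self-contained: once the structural claim is in hand, the norm comparisons reduce to elementary monotonicity of $n\mapsto f_{l+1}(n)$, and you bypass the paper's strict-propagation lemmas entirely. The paper's layer-swapping device is more modular (it would adapt to comparing any two $\preceq$-ordered collections $\gamma$) and stays within the $\preceq$-order language of the framework.

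One cosmetic point: your support formula $[\max(0,n_l-m_l),\,m_l]$ is literally wrong when $n_l>2m_l$ (the interval becomes empty, whereas the true support is $\{m_l\}$), but this never bites. The only place you use the lower endpoint is at $l=l^*$, where the violated inequality $n_{l^*}<m_{l^*}+m_{l^*+1}\le 2m_{l^*}$ forces $n_{l^*}-m_{l^*}<m_{l^*}$, so the interval is genuine; and for the inductive step you only need the support of $v^{\text{Bin}}_{l-1}$ to lie in $[m_l,m_{l-1}]$, which holds in both cases.
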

In particular, when $n_l/n_0$ is large for all $l\in\left\{ 1,\dots,L \right\}$ then both bounds are equal. Therefore, the quotient limit~\eqref{eq:quotientOfBounds} with the lower bound~\eqref{eq:lowerBound} is not improved (smaller) if the Binomial bound is used instead of the Mont\'ufar bound.  

We want to use the above results to give an overview of how the three bounds behave for different network architectures. Figure~\ref{fig:architectures} gives a qualitative overview of some such architectures.
\begin{figure}[htbp]
  \centering
  \includegraphics[width=0.8\textwidth]{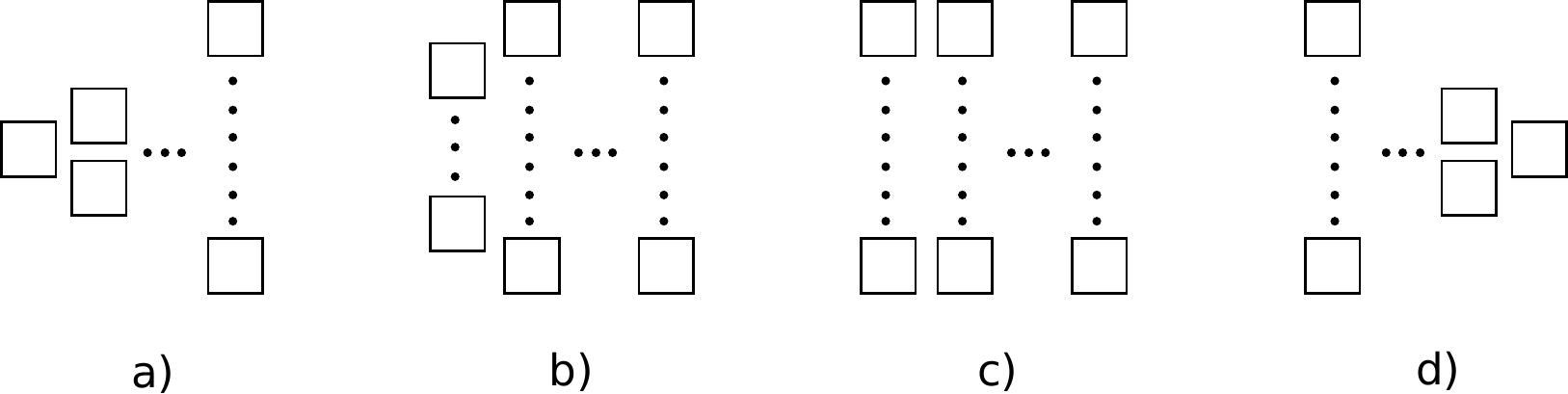}
\caption{Qualitative overview of several architecture types: a) increasing width architecture, b) same number of neurons $n$ in all layers, input width $n_0\le n$, c) same input width and number of neurons in all layers, d) decreasing width architecture.}
  \label{fig:architectures}
\end{figure}

The following four settings are depicted in Figure~\ref{fig:architectures}.
  \begin{enumerate}[a)]
  \item In the increasing width scenario we assume that $n_{l+1}\ge n_l$ for $l\in\left\{ 1,\dots,L-1 \right\}$. The Mont\'ufar bound and the naive bound evaluate to $\prod_{l=1}^L\sum_{j=0}^{n_{0}} {n_l\choose j}$ and $\prod_{l=1}^L\sum_{j=0}^{n_{l}} {n_l\choose j}$ respectively. By Lemma~\ref{lem:BinomialMontufar}, the binomial bound is able to improve on this if and only if $n_1<2n_0$.
  \item The case where $n_0\le n$ and $n_1=\dots=n_L=:n$ was analyzed in Section~\ref{sec:Asymptotic}. The results from Tables~\ref{tab:asymptoticbounds} and \ref{tab:boundOrders} give explicit formulas and asymptotic orders for this setting. Note that the case distinction concerning $n_0$ and $n$ in these tables directly reflects the criterion $n_1<2n_0$ from Lemma~\ref{lem:BinomialMontufar} about when the Binomial bound is strictly better than the Mont\'ufar bound. In practical networks such as wide residual networks~\cite{BMVC2016_87}, regardless of the fact that they are usually not entirely ReLU feed-forward neural networks and do not exactly have the considered architecture, often $n_0$ is much smaller than $n$ such that this criterion is not fulfilled and the improvements of the explicit formulas we contributed for the Binomial bound over the Mont\'ufar bound are not applicable.
  \item The special case of the previous setting where all widths are equal $n_0=\dots=n_L=:n$ causes the Mont\'ufar to coincide with naive bound which simply states that every ReLU activation unit appearing in the neural network doubles the possible number of regions, see Lemma~\ref{lem:MontufarNaive}. In contrast, Lemma~\ref{lem:BinomialMontufar} or the Tables~\ref{tab:asymptoticbounds} and \ref{tab:boundOrders} show that the Binomial bound is strictly better. Despite the rate improvement $n-1$ vs $n$ in equation~\eqref{eq:equalWidthsRate} is in the exponent, it is relatively small for large $n$ which is often the case in practice.
  \item In the setting of decreasing width we assume $n_{l+1}\le n_{l}$ for $l\in\left\{1,\dots,L-1  \right\}$.  An example for this architecture are convolutional neural networks  for image classification where a large number $n_0$ of pixels is used as input and the width decreases in later layers. Unfortunately, again the Mont\'ufar bound does not provide advantage over the naive bound because the condition of Lemma~\ref{lem:MontufarNaive} is not fulfilled. However, the Binomial bound matrix can still be applied in this setting, in fact we show in Lemma~\ref{lem:BinomialNeverBreaks} below that it is always stricter than the naive bound.
  \end{enumerate}
\begin{lemma}
  \label{lem:BinomialNeverBreaks}
  The Binomial bound is always sharper than the naive bound regardless of the number of layers and their widths.
  \begin{proof}
	If the condition of Lemma~\ref{lem:MontufarNaive} is not satisfied then $n_{l-1}\ge n_{l}$ for $l\in\left\{ 1,\dots,L \right\}$. But in this case for all $l\in\left\{ 1,\dots,L-1 \right\}$ it is true that $n_l<\min\left( n_1,\dots,n_l \right)+\min\left( n_1,\dots,n_{l+1} \right)$ such that the condition of Lemma~\ref{lem:BinomialMontufar} is satisfied.
  \end{proof}
\end{lemma}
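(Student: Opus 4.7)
The plan is to reduce everything to the two previous comparison lemmas via a case distinction on whether the widths strictly increase at some layer. From Lemma~\ref{lem:MontufarNaive} and Lemma~\ref{lem:BinomialMontufar} we already have, without any additional argument, that the Binomial bound is no worse than the Mont\'ufar bound, which is in turn no worse than the naive bound; what remains is to guarantee that at least one of these inequalities is strict for every choice of $n_0,\mathbf{n}$.

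First, I would dispose of the case in which there exists some $l\in\left\{1,\dots,L\right\}$ with $n_{l-1}<n_l$. Lemma~\ref{lem:MontufarNaive} directly yields a strict improvement of the Mont\'ufar bound over the naive bound, and Lemma~\ref{lem:BinomialMontufar} then propagates this strictness to the Binomial bound.

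The less immediate case is the complementary one, in which $n_{l-1}\ge n_l$ for every $l\in\left\{1,\dots,L\right\}$. Here the Mont\'ufar bound agrees with the naive bound, so I need the Binomial bound to be \emph{strictly} tighter than the Mont\'ufar bound; this calls for verifying the sufficient condition of Lemma~\ref{lem:BinomialMontufar}. Since $n_0,n_1,\dots,n_L$ is now a non-increasing sequence, the two minima in that condition simplify: $\min(n_0,\dots,n_l)=n_l$ and $\min(n_0,\dots,n_{l+1})=n_{l+1}$ for each $l\in\left\{1,\dots,L-1\right\}$. Consequently the required strict inequality $n_l<\min(n_0,\dots,n_l)+\min(n_0,\dots,n_{l+1})$ collapses to $n_l<n_l+n_{l+1}$, which holds trivially because $n_{l+1}\in\mathbb{N}_+$.

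There is no genuine obstacle here beyond spotting this clean reduction; the only mild subtlety I can foresee is the degenerate case $L=1$, in which the index set $\left\{1,\dots,L-1\right\}$ is empty and the three bounds can coincide when $n_0\ge n_1$. The statement should be read modulo this trivial single-layer edge case, which is consistent with the way the strict-improvement criterion of Lemma~\ref{lem:BinomialMontufar} is itself formulated.
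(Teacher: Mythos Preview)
Your proposal is correct and follows essentially the same route as the paper: a case split on whether some layer strictly increases in width, invoking Lemma~\ref{lem:MontufarNaive} in the first case and verifying the hypothesis of Lemma~\ref{lem:BinomialMontufar} in the second via the observation that for a non-increasing sequence the two minima collapse to $n_l$ and $n_{l+1}$. Your explicit treatment of the $L=1$ edge case is a useful addition, since there the index set $\{1,\dots,L-1\}$ is empty and the existential in Lemma~\ref{lem:BinomialMontufar} is vacuously false; the paper's proof silently passes over this point.
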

 \subsubsection{Further improvements}
 We can set
 \begin{equation}
   \label{eq:optimalbound}
   \gamma_{n,n'}:= \max\left\{ \mathcal{H}_{n'}(\mathcal{S}_h)|h\in\textnormal{RL}(n,n') \right\}\quad \textnormal{ for } n'\in\mathbb{N}_{+}, n\in\left\{ 1,\dots,n' \right\}.
 \end{equation}This would yield the best possible bound obtainable with equation~\eqref{eq:MainRecursiveMatrix} because it satisfies the bound condition from Definition~\ref{def:boundCondition} with equality where the relation $\preceq$ is required. However, to the best of our knowledge, these maxima are not explicitly known. Their computation requires to solve a combinatorial and geometrical problem. This is left for future work.  
 For the corresponding matrices $B^{(\gamma)}_{n'}$, $n'\in\mathbb{N}_+$ there might not exist an easy decomposition that allows to compute arbitrary powers explicitly. However, they will be upper triangular, which makes it easy to compute asymptotic orders similar to the results from Section~\ref{sec:Asymptotic}, see the note below Definition~\ref{def:boundMatrix}.

  \section{Summary}
  \label{sec:summary}
  In this work we presented a formal framework for the construction of upper bounds on the number of connected affine linear regions of feed-forward neural networks with ReLU activation functions. We presented two formal criteria summarized as the bound condition. For a collection $\gamma$ of elements in $V$ that meets these criteria, a corresponding bound can be derived. In this sense, we have presented a whole class of upper bounds. In their matrix form, they can be stated as
  \begin{equation}
	\label{eq:mainSummary}
	 |\mathcal{S}_{\mathbf{h}}|\le \|B^{(\gamma)}_{n_L}M_{n_{L-1},n_L}\dots B^{(\gamma)}_{n_1}M_{n_{0},n_{1}}e_{n_0+1}\|_1,
   \end{equation}where the square matrices $B^{(\gamma)}_{m}, m\in\mathbb{N}_+$ can be easily constructed by using appropriately clipped finite-length versions of the infinite-length vectors $\gamma_{1,m},\dots,\gamma_{m,m}$ as their columns. They are always square and upper triangular matrices such that the eigenvalues, which might be interesting for asymptotic considerations, can be read directly from the diagonal.

  We then have derived three existing bounds from this result by plugging in concrete collections for $\gamma$ that satisfy the bound condition.
  \begin{enumerate}
	\item The first collection $\gamma$ is constructed based on the naive result that $n'$ hyperplanes can partition $\mathbb{R}^{n}$ in at most $2^{n'}$ regions. In this case, our framework yields the very basic result from~\cite{pmlr-v70-raghu17a}: For $L,n_0,\dots,n_L\in\mathbb{N}_{+}$ and ReLU layer functions $h_1,\dots,h_L$ mapping between the spaces $\mathbb{R}^{n_0},\dots,\mathbb{R}^{n_L}$, the neural network represented by their composition $f=h_L\circ\dots\circ h_1$ allows a partition of the input space $\mathbb{R}^{n_0}$ in at most $N_f\le 2^{\sum_{l=1}^{L}n_l}$ connected sets on which it is affine linear.
	\item In the second collection $\gamma$ we use the well known result that a ReLU layer function $h$ mapping from $\mathbb{R}^{n}$ to $\mathbb{R}^{n'}$ with $n'$ neurons partitions the input space $\mathbb{R}^{n}$ in at most $\sum_{j=0}^{n}{n'\choose j}$ convex regions on which $h$ is affine linear. The corresponding bound obtained reestablishes a result from~\cite{Montufar17}: 
	  \begin{equation}
		\label{eq:firstBoundSummary}
		N_f\le\prod_{l=0}^{L}\sum_{j=0}^{\min(n_0,\dots,n_{l-1})}{n_{l}\choose j}.
	  \end{equation}
	\item The third collection $\gamma$ is additionally based on the idea, that for a ReLU layer function $h$ mapping between spaces $\mathbb{R}^{n}$ and $\mathbb{R}^{n'}$, there are at most ${n'\choose i}$ regions in the input space on which $i\in \left\{ 1,\dots,n' \right\}$ neurons are active. In this case, the matrix formulation~\eqref{eq:mainSummary} obtained from our framework turns out to be more useful than the representation~\eqref{eq:BoundingCounting} from~\cite{DBLP:BoundingCounting}. We have given an explicit formula for the corresponding matrices $(B^{(\gamma)}_{n'})_{n'\in\mathbb{N}_{+}}$. In addition, we have found an explicit Jordan-like decomposition. This can be used to derive explicit formulas for arbitrary powers of these matrices, which is useful when we want to analyse equation~\eqref{eq:mainSummary} for varying $L$ because the matrices $(M_{nn'})_{n,n'\in\mathbb{N}_+}$ are the identity matrix for $n=n'$. 
  \end{enumerate}
  If the collection $\gamma$ used in our framework is tighter, so will be the resulting bound. Therefore the above bounds are ordered from weak to strong. In addition, we have given precise necessary and sufficient conditions on the network architecture when one of these bounds is strictly better than another.

  We then have considered an asymptotic scenario where the number of layers $L$ is variable, the dimension of the input space $n_0\in\mathbb{N}_+$ and the widths of the individual layers $n_1=\dots=n_L=:n\in\mathbb{N}_+$ are fixed. The results show that for $n_0\le\lfloor n/2\rfloor$, the second and the strongest third bound are the same, but for $n_0>\lfloor n/2\rfloor$, the third is much better, see Table~\ref{tab:boundOrders}. This new detailed analysis was only possible due to our matrix representation with Jordan-like decomposition and is also part of our contribution. In particular, when also the input dimensionality $n_0=n$ and $n$ is odd, we improved the best known asymptotic order for $L\to\infty$ from~\cite{DBLP:BoundingCounting}, $\mathcal{O}\left( 2^{L\left( n-1/2+\log_2\left( 1+1/\sqrt{\pi n} \right)/2 \right)} \right)$ to $\mathcal{O}\left(  2^{L(n-1)}\right)$. This means that in this case a half dimension in each layer is gained.
  
  Finally, we explained how even stronger bounds can be derived. We state the collection $\gamma\in\Gamma$ that would yield the strongest result that can be obtained with our theory. It involves a geometrical and combinatorial problem that needs to be solved first in order to construct the necessary matrices $B^{(\gamma)}_{n'}$, $n'\in\mathbb{N}_{+}$.
  \newpage
  \appendix
  \section{Proofs and intermediate results}
  \label{sec:appendix}
  \subsection{Basic facts for one layer}
  We assume the definitions and conventions from Section~\ref{sec:AnalysisOneLayer}
\begin{lemma}
  Let $n_0,n_1,n_2\in\mathbb{N}_+$, $g_2\in\text{RL}(n_1,n_2)$ and let $g_1:\mathbb{R}^{n_0}\to\mathbb{R}^{n_1}$ be an affine linear function, i.e. there exist $A\in \mathbb{R}^{n_1\times n_0}$ and $c\in\mathbb{R}^{n_1}$ such that for all $x\in\mathbb{R}^{n_0}:g_1(x)=Ax+c$.
  Then the function $g_2\circ g_1\in \textnormal{RL}(n_0,n_2)$ and it holds that 
  \begin{equation*}
	\forall x\in\mathbb{R}^{n_0}:\quad S_{g_2\circ g_1}(x)=S_{g_2}(g_1(x))  
  \end{equation*} for the signatures $S_{g_2\circ g_1}$ and $S_{g_2}$ as in Definition~\ref{def:Signat}.
  \label{lem:CompositionSignature}
  \begin{proof} For all $x\in\mathbb{R}^{n_0}$, it holds that 
	\begin{eqnarray*}
	  g_2\circ g_1(x)= g_2(g_1(x))&=& 
	  \begin{pmatrix}
		\sigma(\langle Ax+c,w^{(h)}_1\rangle+b^{(h)}_1)\\
		\vdots\\
		\sigma(\langle Ax+c,w^{(h)}_{n_2}\rangle+b^{(h)}_{n_2})
	  \end{pmatrix}= 
	  \begin{pmatrix}
		\sigma(\langle x,A^\intercal w^{(h)}_1\rangle+(\langle c,w^{(h)}_1\rangle+ b^{(h)}_1))\\
	  \vdots\\
	  \sigma(\langle x,A^\intercal w^{(h)}_{n_2}\rangle+(\langle c,w^{(h)}_{n_2}\rangle+b^{(h)}_{n_2}))
	\end{pmatrix},
  \end{eqnarray*}which means that $g_2\circ g_1\in\textnormal{RL}(n_0,n_2)$. By Definition~\ref{def:Signat}, for all $i\in\left\{ 1,\dots,n_2 \right\}$ and $x\in\mathbb{R}^{n_0}$
	\begin{equation*}
	  S_{g_2\circ g_1}(x)_i=1  \iff \langle Ax+c,w^{(h)}_i\rangle+b_i>0\iff S_{g_2}(g_1(x))_i=1\qedhere
	\end{equation*} 
  \end{proof}
\end{lemma}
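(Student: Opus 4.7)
The plan is to expand definitions and exploit the elementary identity $\langle Ax + c, w \rangle = \langle x, A^\top w \rangle + \langle c, w \rangle$ in order to exhibit the composition $g_2 \circ g_1$ explicitly as an element of $\textnormal{RL}(n_0, n_2)$, after which the signature equality will be essentially free.

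First, I would unfold the definition of a ReLU layer function and write out the $i$-th coordinate of $g_2(g_1(x))$ for each $i \in \{1, \dots, n_2\}$ and $x \in \mathbb{R}^{n_0}$. Since $g_1(x) = Ax + c$ by assumption, this coordinate is $\sigma(\langle Ax + c, w_i^{(g_2)} \rangle + b_i^{(g_2)})$, where $w_i^{(g_2)}$ and $b_i^{(g_2)}$ are the $i$-th weight row and bias of $g_2$.

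Second, by bilinearity of the standard inner product, the preactivation above equals $\langle x, \tilde{w}_i \rangle + \tilde{b}_i$ with the explicit identifications $\tilde{w}_i := A^\top w_i^{(g_2)} \in \mathbb{R}^{n_0}$ and $\tilde{b}_i := \langle c, w_i^{(g_2)} \rangle + b_i^{(g_2)} \in \mathbb{R}$. Assembling the vectors $\tilde{w}_i$ as the rows of a matrix $\tilde{W} \in \mathbb{R}^{n_2 \times n_0}$ and collecting the $\tilde{b}_i$ into a bias vector $\tilde{b} \in \mathbb{R}^{n_2}$, the composition $g_2 \circ g_1$ assumes exactly the form required by the definition of $\textnormal{RL}(n_0, n_2)$ from equation~\eqref{eq:RL}. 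This establishes the first claim.

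Third, for the signature identity, I would simply invoke Definition~\ref{def:Signat}: the value of $S_h(x)_i$ depends only on the sign of the $i$-th preactivation. The preactivation of $g_2 \circ g_1$ at $x$ and the preactivation of $g_2$ at $g_1(x)$ are, by the algebraic manipulation above, literally the same real number for every $i$ and every $x$. Hence the two signatures coincide coordinate by coordinate. There is no substantive obstacle in this proof; the only care required is to ensure that the convention at the boundary $=0$ (strict versus non-strict inequality) is the same in both signatures, which is automatic because both are read off from the indicator of the identical condition on the identical quantity.
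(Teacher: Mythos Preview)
Your proposal is correct and follows essentially the same approach as the paper's own proof: both expand the composition coordinate-wise, use the identity $\langle Ax+c,w\rangle=\langle x,A^\top w\rangle+\langle c,w\rangle$ to exhibit $g_2\circ g_1$ as a ReLU layer function with weights $A^\top w_i^{(g_2)}$ and biases $\langle c,w_i^{(g_2)}\rangle+b_i^{(g_2)}$, and then observe that the signature equality is immediate because the preactivations at $x$ for $g_2\circ g_1$ and at $g_1(x)$ for $g_2$ are literally equal.
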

For the rest of this section assume that $n,n'\in \mathbb{N}_{+}$ and $h\in\textnormal{RL}(n,n')$. 
\begin{lemma}
  \label{lem:Convex}
  For any $s\in\left\{ 0,1 \right\}^{n'}$, $R_h(s)$ is a convex set.
  \begin{proof}
	Fix $s\in\left\{ 0,1 \right\}^{n'}$ and assume we have two points $x,x'\in R_h(s)$. Furthermore, let $i\in\left\{ 1,\dots,n' \right\}$. Now we check convexity. For $\alpha\in(0,1)$, define the convex combination $x_\alpha:=\alpha x+(1-\alpha)x'$. It holds that 
	\begin{eqnarray*}
	  S_h\left( x_\alpha\right)_i=1&\iff&  \langle \alpha x+(1-\alpha)x', w^{(h)}_i\rangle +b^{(h)}_i>0\\
	  &\iff&\alpha \left( \langle x,w^{(h)}_i\rangle +b^{(h)}_i \right)+(1-\alpha)\left( \langle x',w^{(h)}_i\rangle+b^{(h)}_i \right)>0
	\end{eqnarray*}If $s_i=1$, then by construction $S_h(x)_i=S_h(x')_i=s_i=1$, such that $\left( \langle x,w^{(h)}_i \rangle +b^{(h)}_i\right)>0$ and $\left( \langle x',w^{(h)}_i\rangle+b^{(h)}_i \right)>0$, which implies $S_h(x_\alpha)_i=1$ by the above formula. Similarly, if $s_i=0$, $S_{h}(x_\alpha)_i=0$. Hence we have shown that for every $i\in\left\{ 1,\dots,n' \right\}$ and every convex combination $x_\alpha$ of $x$ and $x'$, the signature of $x_\alpha$ is equal to those of $x$ and $x'$ in the $i$-th coordinate. This proves the lemma.
  \end{proof}
\end{lemma}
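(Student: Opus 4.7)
The plan is to prove convexity of $R_h(s)$ directly from the definitions, by showing that the signature is preserved under convex combinations. Fix $s \in \{0,1\}^{n'}$ and pick arbitrary points $x, x' \in R_h(s)$, so by Definition~\ref{def:SigSReg} we have $S_h(x) = S_h(x') = s$. For $\alpha \in [0,1]$, set $x_\alpha := \alpha x + (1-\alpha) x'$; the goal is to verify that $S_h(x_\alpha) = s$, so that $x_\alpha \in R_h(s)$.

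The key observation is that for each $i \in \{1,\dots,n'\}$, the pre-activation $\ell_i(x) := \langle x, w_i^{(h)}\rangle + b_i^{(h)}$ is an affine function of its input, so
\begin{equation*}
\ell_i(x_\alpha) = \alpha\, \ell_i(x) + (1-\alpha)\, \ell_i(x').
\end{equation*}
I would then split into cases according to the value of $s_i$. If $s_i = 1$, then by Definition~\ref{def:Signat} both $\ell_i(x) > 0$ and $\ell_i(x') > 0$, and because $\alpha, 1-\alpha \ge 0$ with at least one of them strictly positive, the right-hand side is strictly positive, hence $S_h(x_\alpha)_i = 1 = s_i$. If $s_i = 0$, then $\ell_i(x) \le 0$ and $\ell_i(x') \le 0$, so the convex combination is also $\le 0$, giving $S_h(x_\alpha)_i = 0 = s_i$. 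Taking this across all $i$ yields $S_h(x_\alpha) = s$, which is exactly what is needed.

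Honestly, there is no real obstacle here; the lemma is a direct consequence of the affineness of the pre-activation together with the fact that each defining condition $\ell_i > 0$ or $\ell_i \le 0$ is itself a half-space condition, and $R_h(s)$ is the intersection of $n'$ such half-spaces, each of which is convex. The only mild subtlety is the asymmetry in Definition~\ref{def:Signat} between the strict inequality for active units and the non-strict inequality for inactive units, which one must respect when passing to the convex combination; but in both cases the sign of $\ell_i(x_\alpha)$ is controlled by the signs of $\ell_i(x)$ and $\ell_i(x')$, so the argument goes through uniformly.
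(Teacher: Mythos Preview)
Your proof is correct and follows essentially the same approach as the paper's: both fix two points in $R_h(s)$, use affineness of the pre-activation to express $\ell_i(x_\alpha)$ as a convex combination of $\ell_i(x)$ and $\ell_i(x')$, and then split into the cases $s_i=1$ and $s_i=0$ to conclude $S_h(x_\alpha)=s$. Your closing remark that $R_h(s)$ is an intersection of half-spaces is a nice alternative phrasing of the same idea.
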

\begin{lemma}
  \label{lem:Partition}
The convex sets $\left( R_h(s) \right)_{s\in\left\{ 0,1 \right\}^{n'}}$ form a partition of $\mathbb{R}^{n}$, i.e. the following two conditions hold:
  \begin{enumerate}
	\item $\forall s,s'\in\left\{ 0,1 \right\}^{n'}:\quad R_h(s)\cap R_h(s')=\left\{  \right\}\iff s\neq s'$
	\item $\bigcup_{s\in\left\{ 0,1 \right\}^{n'}}R_h(s)=\mathbb{R}^{n}$.
  \end{enumerate}
  \begin{proof}We prove both conditions individually.
	\begin{enumerate}
	  \item Let $s,s'\in\left\{ 0,1 \right\}^{n'}$. If $R_h(s)\cap R_h(s')=\left\{  \right\}$ then obviously, $s\neq s'$. If $R_h(s)\cap R_h(s')\neq \left\{  \right\}$ then there exists $x\in R_h(s)\cap R_h(s')$, which implies that $s=S_h(x)=s'$.
	  \item Since $R_h(s)\subset\mathbb{R}^{n}$ for all $s\in\left\{ 0,1 \right\}^{n'}$, it holds that $\bigcup_{s\in\left\{ 0,1 \right\}^{n'}}R_h(s)\subset\mathbb{R}^{n}$. But for any $x\in \mathbb{R}^{n}$, it holds by Definition~\ref{def:SigSReg} that $x\in R_h(S_h(x))\subset\bigcup_{s\in\left\{ 0,1 \right\}^{n'}}R_h(s)$.\qedhere
	\end{enumerate}
  \end{proof}
\end{lemma}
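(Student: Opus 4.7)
The plan is to recognize that the family $(R_h(s))_{s\in\{0,1\}^{n'}}$ is exactly the collection of fibers of the signature map $S_h:\mathbb{R}^n\to\{0,1\}^{n'}$ from Definition~\ref{def:Signat}, so the assertion reduces to the general fact that the preimages of singletons under any function form a partition of its domain. I would prove the two conditions separately, in the order stated.

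For the first condition, I would unfold Definition~\ref{def:SigSReg}: suppose $x\in R_h(s)\cap R_h(s')$; then $S_h(x)=s$ and $S_h(x)=s'$, forcing $s=s'$. This already gives the contrapositive of the ``$\Leftarrow$'' implication ($s\ne s'$ implies the intersection is empty). For the ``$\Rightarrow$'' direction, assuming $R_h(s)\cap R_h(s')=\emptyset$, the only way this can fail to imply $s\ne s'$ is if $s=s'$ with $R_h(s)$ empty; under the standard partition convention empty blocks are ignored, but in any case the biconditional is immediate once we note that for distinct signatures the defining conditions $S_h(x)=s$ and $S_h(x)=s'$ are mutually exclusive at every point $x$.

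For the second condition, I would argue by double inclusion. The inclusion $\bigcup_s R_h(s)\subseteq\mathbb{R}^n$ is immediate since each $R_h(s)\subseteq\mathbb{R}^n$ by Definition~\ref{def:SigSReg}. For the reverse inclusion, I would pick any $x\in\mathbb{R}^n$ and observe that $S_h(x)\in\{0,1\}^{n'}$ is well-defined by Definition~\ref{def:Signat} (the sign of $\langle x,w_i^{(h)}\rangle+b_i^{(h)}$ is defined for every $i$), so setting $s:=S_h(x)$ we have $x\in R_h(s)\subseteq\bigcup_{s'\in\{0,1\}^{n'}}R_h(s')$.

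I do not anticipate any real obstacle here: the statement is essentially a restatement of the fact that the fibers of a function partition its domain, and the convexity of each individual $R_h(s)$ is already supplied by Lemma~\ref{lem:Convex}. The only pedantic point is whether the ``$\iff$'' in part~1 is read as comparing distinct labels or as literal set equality when $R_h(s)=\emptyset$, but this does not affect the argument in any substantive way.
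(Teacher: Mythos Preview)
Your proposal is correct and follows essentially the same approach as the paper's proof: both argue that the $R_h(s)$ are the fibers of the signature map $S_h$, prove disjointness via $S_h(x)=s$ and $S_h(x)=s'$ forcing $s=s'$, and prove covering by noting $x\in R_h(S_h(x))$. You are in fact slightly more careful than the paper in flagging the edge case where $R_h(s)=\emptyset$ for the ``$\Rightarrow$'' direction of part~1, which the paper glosses over with ``obviously''.
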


\begin{definition}
  \label{def:Restrict}
  For $s\in\left\{ 0,1 \right\}^{n'}$, denote the restriction of $h$ on the set $R_h(s)$ by $h_s$, i.e.  $
  h_s:
  R_h(s)\to \mathbb{R}^{n'},\;
	  x\mapsto h(x).
	$
\end{definition}
\begin{lemma}
  For any $s\in\left\{ 0,1 \right\}^{n'}$, the function $h_s$ from Definition~\ref{def:Restrict} is an affine linear function of the form
  \begin{equation*}
	h_s:\begin{cases}
	  R_h(s)&\to\mathbb{R}^{n'}\\
	  x&\mapsto\textnormal{diag}(s)\left( W^{(h)}x+b^{(h)} \right).
	\end{cases}
  \end{equation*}
  \begin{proof}
	We prove the lemma coordinate-wise. For every $i\in\left\{ 1,\dots,n' \right\}$ we either have $s_i=0$ or $s_i=1$
	\begin{itemize}
	  \item Assume $s_i=0$. Let $x\in R_h(s)$. By definition $\langle x,w^{(h)}_i\rangle+b^{(h)}_i\le0$, which implies
		\begin{equation*}
		  h_s(x)_i=h(x)_i=\sigma(\langle x,w^{(h)}_i\rangle +b^{(h)}_i)=0=\left( \textnormal{diag}(s)\left( W^{(h)}x+b^{(h)} \right) \right)_i
		\end{equation*}
	  \item Assume $s_i=1$. Let $x\in R_h(s)$. Similarly, $\langle x,w_i\rangle+ b_i>0$, which implies
		\begin{equation*}
		  h_s(x)_i=h(x)_i=\sigma(\langle x,w_i\rangle +b_i)= \langle x,w_i\rangle+b_i=\left( \textnormal{diag}(s)\left( W^{(h)}x+b^{(h)} \right) \right)_i\qedhere
		\end{equation*}
	\end{itemize}
  \end{proof}
  \label{lem:AffineRep}
\end{lemma}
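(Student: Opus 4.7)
The plan is to establish the claimed identity pointwise on $R_h(s)$ by a coordinate-by-coordinate verification, using the fact that both sides are vectors in $\mathbb{R}^{n'}$ so agreement in every coordinate suffices. In each coordinate $i \in \{1,\dots,n'\}$, the only nonlinearity in $h$ comes from the ReLU $\sigma$ applied to the affine pre-activation $\langle w_i^{(h)}, x\rangle + b_i^{(h)}$, and the key observation is that, by the very definitions of $R_h(s)$ (Definition~\ref{def:SigSReg}) and of the signature $S_h$ (Definition~\ref{def:Signat}), the sign of this pre-activation is determined by $s_i$ throughout $R_h(s)$. Hence the branch taken by $\sigma$ is constant on the entire region, which is what collapses the piecewise-defined $h_s$ to a single affine map.

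Concretely, I would split into two cases according to $s_i \in \{0,1\}$. If $s_i = 0$, the defining condition of $R_h(s)$ forces $\langle w_i^{(h)}, x\rangle + b_i^{(h)} \le 0$ for every $x \in R_h(s)$, so $\sigma(\langle w_i^{(h)}, x\rangle + b_i^{(h)}) = 0$; this matches the right-hand side, whose $i$-th coordinate equals $s_i \cdot (\langle w_i^{(h)}, x\rangle + b_i^{(h)}) = 0$. If instead $s_i = 1$, the definition gives $\langle w_i^{(h)}, x\rangle + b_i^{(h)} > 0$, so $\sigma$ acts as the identity and returns $\langle w_i^{(h)}, x\rangle + b_i^{(h)}$, again matching $s_i \cdot (\langle w_i^{(h)}, x\rangle + b_i^{(h)})$ with $s_i = 1$. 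Collecting these over all $i$ and noting that $\textnormal{diag}(s)$ is exactly the operation that multiplies the $i$-th coordinate by $s_i$, the formula $h_s(x) = \textnormal{diag}(s)(W^{(h)} x + b^{(h)})$ follows.

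There is no substantive obstacle here; the argument is essentially bookkeeping that the two regimes of the ReLU correspond precisely to the two possible values of $s_i$. The only point worth flagging is the asymmetric treatment of the boundary $\langle w_i^{(h)}, x\rangle + b_i^{(h)} = 0$: Definition~\ref{def:Signat} bundles this case into $s_i = 0$ via a non-strict inequality, which is consistent with $\sigma(0) = 0$ and with $s_i \cdot (\langle w_i^{(h)}, x\rangle + b_i^{(h)}) = 0$, so no mismatch arises. Since the formula $x \mapsto \textnormal{diag}(s)(W^{(h)} x + b^{(h)})$ is visibly affine linear in $x$, the final conclusion of the lemma is then immediate.
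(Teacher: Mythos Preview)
Your proposal is correct and follows essentially the same approach as the paper: a coordinate-wise case split on $s_i\in\{0,1\}$, using Definitions~\ref{def:Signat} and~\ref{def:SigSReg} to fix the sign of the pre-activation on $R_h(s)$ and hence the branch of $\sigma$. Your added remark about the boundary case $\langle w_i^{(h)},x\rangle+b_i^{(h)}=0$ being consistently handled is a nice clarification but not a departure from the paper's argument.
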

For $s\in\left\{ 0,1 \right\}^{n'}$, the function $h_s$ can be extended affine linearly in a natural way to the whole space $\mathbb{R}^{n}$.
\begin{definition}
  \label{def:AffineExt}
  For $s\in\left\{ 0,1 \right\}^{n'}$, let $\tilde h_s$ be the affine linear extension of $h_s$ on $\mathbb{R}^{n}$, i.e. let
	$\tilde h_s:
	\mathbb{R}^{n}\to\mathbb{R}^{n'},\; 
	x\mapsto \textnormal{diag}(s)\left( W^{(h)}x+b^{(h)} \right).
	$
\end{definition}
\begin{corollary}
  \label{cor:RepOneLayer}
 Let $\tilde h_s$, $s\in\left\{ 0,1 \right\}^{n'}$ as in Definition~\ref{def:AffineExt}. It holds that
  \begin{equation*}
	\forall x \in \mathbb{R}^{n}:\quad h(x)=\sum_{s\in \mathcal{S}_h}^{}\mathds{1}_{R_h(s)}(x)\:\tilde h_s(x).
  \end{equation*}
  \begin{proof}
	For every $x\in \mathbb{R}^{n} $, $x\in R_h(S_h(x))$ by Definition \ref{def:SigSReg}, which implies $h(x)=h_{S_h(x)}(x)$ by Definition \ref{def:Restrict}. Furthermore, $h_{S_h(x)}(x)=\tilde h_{S_h(x)}(x)$ by Lemma~\ref{lem:AffineRep} and Definition~\ref{def:AffineExt}. Now, Lemma~\ref{lem:Partition} and Definition~\ref{def:AttainedSig} justify the equality
	\begin{equation*}
	  h(x)=h_{S_h(x)}(x)=\tilde h_{S_h(x)}(x)=\sum_{s\in\left\{ 0,1 \right\}^{n'}}^{}\mathds{1}_{R_h(s)}(x)\:\tilde h_s(x)=\sum_{s\in \mathcal{S}_h}^{}\mathds{1}_{R_h(s)}(x)\:\tilde h_s(x).\qedhere
	\end{equation*}
  \end{proof}
\end{corollary}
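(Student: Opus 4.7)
The plan is to fix an arbitrary $x \in \mathbb{R}^n$ and show that the sum on the right-hand side collapses to a single term, which then equals $h(x)$ by the local affine representation of Lemma~\ref{lem:AffineRep}.

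First I would invoke Lemma~\ref{lem:Partition}, which asserts that $\left(R_h(s)\right)_{s\in\{0,1\}^{n'}}$ is a partition of $\mathbb{R}^n$. Combined with Definition~\ref{def:SigSReg}, this tells us that $x$ lies in exactly one region, namely $R_h(S_h(x))$, so the indicators satisfy $\mathds{1}_{R_h(s)}(x) = \delta_{s,S_h(x)}$. Consequently, summing over $s \in \{0,1\}^{n'}$ reduces the right-hand side to the single term $\tilde h_{S_h(x)}(x)$. Since $S_h(x) \in \mathcal{S}_h$ by Definition~\ref{def:AttainedSig}, restricting the sum to $\mathcal{S}_h$ as in the corollary statement leaves the value unchanged (the dropped signatures correspond to empty regions whose indicators vanish identically).

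Next I would identify this single surviving term with $h(x)$. Since $x \in R_h(S_h(x))$, Definition~\ref{def:Restrict} gives $h(x) = h_{S_h(x)}(x)$. By Lemma~\ref{lem:AffineRep}, on this region $h_{S_h(x)}$ acts as $y \mapsto \textnormal{diag}(S_h(x))(W^{(h)} y + b^{(h)})$, which is exactly the formula defining $\tilde h_{S_h(x)}$ in Definition~\ref{def:AffineExt}, so the two agree pointwise at $x$. Chaining these equalities yields
\begin{equation*}
h(x) = h_{S_h(x)}(x) = \tilde h_{S_h(x)}(x) = \sum_{s \in \mathcal{S}_h} \mathds{1}_{R_h(s)}(x)\, \tilde h_s(x),
\end{equation*}
which is the claim.

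There is no genuine obstacle here: the result is essentially a bookkeeping consequence of the three earlier facts (partition property, affine restriction formula, extension definition). The only subtlety worth being careful about is ensuring that restricting the index set from $\{0,1\}^{n'}$ to the attained signatures $\mathcal{S}_h$ does not lose anything on the right-hand side, which follows because non-attained signatures index empty regions and therefore contribute zero indicators.
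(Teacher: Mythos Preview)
Your proposal is correct and follows essentially the same approach as the paper's proof: both use the partition property (Lemma~\ref{lem:Partition}) to reduce the sum to the single term $\tilde h_{S_h(x)}(x)$, then identify this with $h(x)$ via Definition~\ref{def:Restrict}, Lemma~\ref{lem:AffineRep}, and Definition~\ref{def:AffineExt}, and finally invoke Definition~\ref{def:AttainedSig} to justify restricting the index set to $\mathcal{S}_h$. The only cosmetic difference is the order in which these steps are presented.
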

The following result is an adaption of a result on hyperplane arrangements by T. Zaslavsky from 1975, see \cite{1975Zaslavsky}. We state it in terms of our notation and give a proof for completeness.
\begin{lemma}
  \label{lem:OneLayer1975}
  For all $n,n'\in\mathbb{N}_+$ the following upper bound on the number of attained signature holds: 
	\begin{equation}
	  \forall h'\in\textnormal{RL}(n,n')\quad	  |\mathcal{S}_{h'}|\le\sum_{j=0}^{n}{n'\choose j}
	\label{eq:1975ToShow}
	\end{equation}
  \begin{proof}
	Obviously
	\begin{equation*}
	  \forall n,n'\in\mathbb{N}\;\forall h'\in \text{RL}(n,n'):\quad |\mathcal{S}_{h'}|\le|\left\{ 0,1 \right\}^{n'}|= 2^{n'}.
	\end{equation*}
	This implies
	\begin{equation}
	  \label{eq:form1smallerm0}
	  \forall n\in\mathbb{N}\;\forall n'\in\left\{ 0,\dots,n \right\}\;\forall h'\in \text{RL}(n,n'):\quad |\mathcal{S}_{h'}|\le \sum_{j=0}^{n}{n'\choose j}.
	\end{equation}
	Furthermore, we know that
	\begin{equation}
	  \label{eq:form0equal1}
	  \forall n'\in\mathbb{N}, \forall h'\in \text{RL}(1,n'):|\mathcal{S}_{h'}|\le n'+1=\sum_{j=0}^{1}{ n'\choose j }
	\end{equation} since the $n'$ hyperplanes $H^{(h')}_1,\dots,H^{(h')}_{n'}$ corresponding to $h'\in\text{RL}(1,n')$ are points on the real line, hence they can induce at most $n'+1$ non-empty $R_{h'}(s)$, $s\in\left\{ 0,1 \right\}^{n'}$ in the sense of Definition~\ref{def:SigSReg}. 
	
	Now we will prove an induction step. Fix $m,m'\in\mathbb{N}_{+}$ assume that equation~\eqref{eq:1975ToShow} is true for $n=m, n'=m'-1$ and for $n=m-1, n'=m'-1$. Then it is also true for $n=m, n'=m'$. To see this take $h'\in\text{RL}(m,m')$ and define the functions
	\begin{align*}
	  \tilde h&:
	  \begin{cases}
		\mathbb{R}^{m}&\to\mathbb{R}^{m'-1}\\
		x&\mapsto \left( h'(x)_1,\dots,h'(x)_{m'-1} \right)
	  \end{cases}\\
	  \hat h&:
	  \begin{cases}
		H^{(h')}_{m'}&\to\mathbb{R}^{m'-1}\\
		x&\mapsto \left( h'(x)_1,\dots,h'(x)_{m'-1} \right)
	  \end{cases}
	\end{align*}
	We know that $|\mathcal{S}_{h'}|=|\mathcal{S}_{\tilde h}|+|\mathcal{S}_{\hat h}|$ because the number of regions $R_{h'}(x), s\in\left\{ 0,1 \right\}^{n_1}$ defined by the hyperplanes $H^{(h')}_1,\dots,H^{(h')}_{m'}$ is equal to the number of regions $R_{\tilde h}(s), s\in\left\{ 0,1 \right\}^{n_1}$ defined by the hyperplanes $H^{(h')}_1,\dots,H^{(h')}_{m'-1}$ plus the number of these regions that are cut into two by the hyperplane $H^{(h')}_{m'}$. Now note that $H^{(h')}_{m'}$ is an affine linear subspace of $\mathbb{R}^{m}$ with dimension $m-1$, i.e. it is homeomorphic to $\mathbb{R}^{m-1}$ such that we can use the two assumptions of the induction step to conclude
	\begin{equation*}
	  |\mathcal{S}_{h'}|=|\mathcal{S}_{\tilde h}|+|\mathcal{S}_{\hat h}|\le \sum_{j=0}^{m}{m'-1\choose j}+\sum_{j=0}^{m-1}{m'-1\choose j}=\sum_{j=0}^{m}{m'\choose j}.
	\end{equation*} Equations~\eqref{eq:form1smallerm0} and~\eqref{eq:form0equal1} provide a suitable anchor for this induction such that equation~\eqref{eq:1975ToShow} holds for all $n,n'\in\mathbb{N}_{0}$.
  \end{proof}
\end{lemma}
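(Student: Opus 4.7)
The plan is to mirror the classical Zaslavsky proof of the hyperplane-arrangement bound, but working directly with the signature regions $R_{h'}(s)$ from Definition~\ref{def:SigSReg}. I would prove the inequality by joint induction on $n$ and $n'$. Two anchor cases are needed. First, when $n'\le n$, the trivial bound $|\mathcal{S}_{h'}|\le 2^{n'}=\sum_{j=0}^{n'}\binom{n'}{j}\le\sum_{j=0}^{n}\binom{n'}{j}$ already suffices. Second, when $n=1$, each hyperplane $H_i^{(h')}$ is a single point on the line, so these $n'$ points cut $\mathbb{R}$ into at most $n'+1$ intervals; by Lemma~\ref{lem:Partition} every such interval, together with any isolated point that might lie on one of the hyperplanes, belongs to a unique region $R_{h'}(s)$, and with the ``$\le 0$ codes $0$'' convention no isolated point produces a signature distinct from the one carried by the adjacent closed interval. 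Hence $|\mathcal{S}_{h'}|\le n'+1=\sum_{j=0}^{1}\binom{n'}{j}$.

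For the inductive step, assume the bound holds for $(n,n'-1)$ and $(n-1,n'-1)$, and let $h'\in\textnormal{RL}(n,n')$. I would define $\tilde h\in\textnormal{RL}(n,n'-1)$ by keeping only the first $n'-1$ rows of $W^{(h')}$ and $b^{(h')}$, and define $\hat h$ as the analogous ReLU layer on the hyperplane $H_{n'}^{(h')}$, identified isometrically with $\mathbb{R}^{n-1}$. Every attained signature of $h'$ has the form $(s,t)$ with $s\in\mathcal{S}_{\tilde h}$ and $t\in\{0,1\}$. For a fixed $s\in\mathcal{S}_{\tilde h}$, the convex region $R_{\tilde h}(s)$ from Lemma~\ref{lem:Convex} either lies entirely on one side of $H_{n'}^{(h')}$, in which case it contributes exactly one signature $(s,t)\in\mathcal{S}_{h'}$, or it is cut by $H_{n'}^{(h')}$, contributing both $(s,0)$ and $(s,1)$. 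Cut regions satisfy $R_{\tilde h}(s)\cap H_{n'}^{(h')}\neq\emptyset$, which is precisely the condition for $s\in\mathcal{S}_{\hat h}$, so the number of cut regions is at most $|\mathcal{S}_{\hat h}|$. This gives the decomposition
\begin{equation*}
|\mathcal{S}_{h'}|\le|\mathcal{S}_{\tilde h}|+|\mathcal{S}_{\hat h}|\le\sum_{j=0}^{n}\binom{n'-1}{j}+\sum_{j=0}^{n-1}\binom{n'-1}{j}=\sum_{j=0}^{n}\binom{n'}{j},
\end{equation*}
where the last equality is Pascal's identity $\binom{n'-1}{j}+\binom{n'-1}{j-1}=\binom{n'}{j}$ (with $\binom{n'-1}{-1}=0$).

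The main obstacle will be justifying the decomposition $|\mathcal{S}_{h'}|\le|\mathcal{S}_{\tilde h}|+|\mathcal{S}_{\hat h}|$ rigorously in the presence of the asymmetric ``$\le 0$ codes $0$'' convention: a region $R_{\tilde h}(s)$ can touch $H_{n'}^{(h')}$ only along its closed boundary, in which case it is not genuinely cut and contributes only one signature, not two. The accounting must therefore be careful to place such regions on the $|\mathcal{S}_{\tilde h}|$ side rather than the $|\mathcal{S}_{\hat h}|$ side. A secondary but routine issue is the handling of degenerate configurations where $H_{n'}^{(h')}$ coincides with, or is parallel to, some earlier $H_i^{(h')}$; such degeneracies only reduce the counts on both sides of the decomposition, so the bound persists, and no general-position hypothesis is needed.
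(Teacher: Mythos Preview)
Your proposal is correct and follows essentially the same approach as the paper: the same two anchor cases ($n'\le n$ via the trivial $2^{n'}$ bound, and $n=1$ via points on the line) and the same inductive step that removes the last hyperplane, restricts to it, and closes with Pascal's identity. If anything, your version is slightly more careful than the paper's, since you write $|\mathcal{S}_{h'}|\le|\mathcal{S}_{\tilde h}|+|\mathcal{S}_{\hat h}|$ and flag the asymmetric sign convention and degeneracies, whereas the paper asserts equality without discussing these edge cases.
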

 It is shown in \cite{1975Zaslavsky} that this bound is sharp for hyperplanes ``in general position'', i.e. for all $n,n'\in\mathbb{N}_+$, there exists a $h'\in\text{RL}(n,n')$ such that equation~\eqref{eq:1975ToShow} becomes an equality.

  \subsection{Basic results for multiple fully connected layers}
  We assume the definitions and conventions from Section~\ref{sec:multipleLayers}.
\begin{lemma}
  Let $i\in\left\{ 1,\dots,L \right\}$ and let $f:\mathbb{R}^{n_0}\to \mathbb{R}^{n_i}$ be an affine linear function, i.e. there exist $A\in\mathbb{R}^{n_i\times n_0}$ and $b\in \mathbb{R}^{n_i}$ such that for all $x\in \mathbb{R}^{n_0}$,  $f(x)=Ax+b$.
  Then for every convex set $C\in\mathbb{R}^{n_i}$, the pre-image 
	$f^{-1}(C):=\left\{ x\in\mathbb{R}^{n_0}|f(x)\in C \right\}$ is convex itself.
  \begin{proof}
	Let $x,y\in f^{-1}(C)$ and $\alpha\in[0,1]$. Then 
	\begin{equation*}
	  f\left( \alpha x+\left( 1-\alpha \right)y \right)=\alpha (Ax+b)+\left( 1-\alpha \right)(Ay+b)=\alpha f(x)+\left( 1-\alpha \right)f(y)\in C
	\end{equation*} since $C$ is convex and $f(x),f(y)\in C$.
  \end{proof}
  \label{lem:AffineConvex}
\end{lemma}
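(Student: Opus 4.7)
The plan is to verify convexity of $f^{-1}(C)$ directly from the definition of a convex set. I would fix two arbitrary points $x,y \in f^{-1}(C)$ and an arbitrary scalar $\alpha \in [0,1]$, and then show that the convex combination $z := \alpha x + (1-\alpha) y$ also lies in $f^{-1}(C)$, i.e.\ that $f(z) \in C$.

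The first step is to use the affine linearity of $f$ to push the convex combination through $f$. Writing $f(u) = Au + b$, I would compute
\begin{equation*}
  f(\alpha x + (1-\alpha) y) = A(\alpha x + (1-\alpha) y) + b = \alpha A x + (1-\alpha) A y + \bigl(\alpha + (1-\alpha)\bigr) b = \alpha f(x) + (1-\alpha) f(y),
\end{equation*}
where the only subtle point is grouping the bias as $\alpha b + (1-\alpha) b$ so that the two copies of $b$ recombine correctly. This identity is exactly the statement that affine maps preserve convex combinations.

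The second step is to invoke convexity of $C$. Because $x, y \in f^{-1}(C)$ we have $f(x), f(y) \in C$, and by convexity of $C$ any convex combination of these two points remains in $C$. In particular $\alpha f(x) + (1-\alpha) f(y) \in C$, which by the identity above means $f(z) \in C$ and hence $z \in f^{-1}(C)$. Since $x, y, \alpha$ were arbitrary, this establishes convexity of $f^{-1}(C)$.

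There is no genuine obstacle here; the lemma is essentially immediate from unpacking the definitions of affine map and convex set. The only place where one must be slightly careful is in handling the bias term $b$ — it is the use of $\alpha + (1-\alpha) = 1$ that makes the argument go through for affine (as opposed to strictly linear) maps, and is the one line where the hypothesis is actually used.
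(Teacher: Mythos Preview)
Your proof is correct and follows exactly the same approach as the paper's own proof: take two points in the preimage, push the convex combination through the affine map, and invoke convexity of $C$. The only difference is that you spell out the handling of the bias term $b$ in slightly more detail than the paper does.
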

\begin{lemma}
  \label{lem:ConvexMulti}
  For any $s\in \left\{ 0,1 \right\}^{n_1}\times\cdots\times\left\{ 0,1 \right\}^{n_L}$, $R_{\mathbf{h}}(s)$ is a convex set.
  \begin{proof}
	First note that 
	\begin{eqnarray*}
	  R_{\mathbf{h}}(s)&=& \left\{ x\in \mathbb{R}^{n_0}\vert\;S_{h_1}(x)=s_1\right\}\cap\bigcap_{i=2}^L\left\{ x\in\mathbb{R}^{n_0}\vert\;S_{h_i}(h_{i-1}\circ\dots\circ h_1(x))=s_i\right\}\\
	  &=&R_{h_1}(s_1)\cap\bigcap_{i=2}^L\left( h_{i-1}\circ\dots\circ h_{1} \right)^{-1}\left( R_{h_i}(s_i) \right),
	\end{eqnarray*}where the sets $R_{h_1}(s_1)\subset \mathbb{R}^{n_0},\dots,R_{h_L}(s_L)\subset\mathbb{R}^{n_{L-1}}$ are convex by Lemma~\ref{lem:Convex}. Now denote by $ \tilde h_{1,s_1},\dots, \tilde h_{L,s_L}$ the affine linear extensions of the restrictions of $h_1,\dots,h_L$ onto the sets $R_{h_1}(s_1), \dots,R_{h_L}(s_L)$ respectively according to Definition~\ref{def:AffineExt}. We will now show by induction that 
	\begin{align}
	  \label{eq:inductionConvex}
	  \begin{split}
	  &R_{h_1}(s_1)\cap\bigcap_{i=2}^{l}\left( h_{i-1}\circ\dots\circ h_{1} \right)^{-1}\left( R_{h_i}(s_i) \right)\\
	=\quad&R_{h_1}(s_1)\cap\bigcap_{i=2}^{l}\left( \tilde h_{i-1,s_{i-1}}\circ\dots\circ \tilde h_{1,s_1} \right)^{-1}\left( R_{h_i}(s_i) \right).
	  \end{split}
	\end{align} for all $l\in\left\{ 1,\dots,L \right\}$. This is clearly true for $l=1$ because in this case the equation states $R_{h_1}(s_1)=R_{h_1}(s_1)$. For the induction step, assume that equation~\eqref{eq:inductionConvex} holds for $l\in\left\{ 1,\dots,L-1 \right\}$. Then
	\begin{align*}
	  &R_{h_1}(s_1)\cap\bigcap_{i=2}^{l+1}\left( h_{i-1}\circ\dots\circ h_{1} \right)^{-1}\left( R_{h_i}(s_i) \right)\\
	  =\quad &\left( R_{h_1}(s_1)\cap\bigcap_{i=2}^{l}\left( h_{i-1}\circ\dots\circ h_{1} \right)^{-1}\left( R_{h_i}(s_i) \right) \right)\cap\left( h_l\circ\dots\circ h_1 \right)^{-1}\left( R_{h_{l+1}}(s_{l+1}) \right)\\
	  =\quad &\left( R_{h_1}(s_1)\cap\bigcap_{i=2}^{l}\left( h_{i-1}\circ\dots\circ h_{1} \right)^{-1}\left( R_{h_i}(s_i) \right) \right)\cap\left( \tilde h_{l,s_l}\circ\dots\circ \tilde h_{1,s_1} \right)^{-1}\left( R_{h_{l+1}}(s_{l+1}) \right)\\
	=\quad&R_{h_1}(s_1)\cap\bigcap_{i=2}^{l+1}\left( \tilde h_{i-1,s_{i-1}}\circ\dots\circ \tilde h_{1,s_1} \right)^{-1}\left( R_{h_i}(s_i) \right)
	\end{align*}
	by our assumption and the fact that $h_l\circ\dots\circ h_1=\tilde h_{l,s_{l}}\circ\dots\circ \tilde h_{1,s_1}$ on the set 
	\begin{equation*}
	  \left( R_{h_1}(s_1)\cap\bigcap_{i=2}^{l}\left( h_{i-1}\circ\dots\circ h_{1} \right)^{-1}\left( R_{h_i}(s_i) \right) \right)\subset \mathbb{R}^{n_0}.
	\end{equation*}By induction, equation~\eqref{eq:inductionConvex} holds in particular for $l=L$ such that 
	\begin{align*}
	  R_{\mathbf{h}}(s)	  \quad =\quad &R_{h_1}(s_1)\cap\bigcap_{i=2}^L\left( h_{i-1}\circ\dots\circ h_{1} \right)^{-1}\left( R_{h_i}(s_i) \right)\\
	   =\quad&R_{h_1}(s_1)\cap\bigcap_{i=2}^{L}\left( \tilde h_{i-1,s_{i-1}}\circ\dots\circ \tilde h_{1,s_1} \right)^{-1}\left( R_{h_i}(s_i) \right).
	\end{align*}
Since the functions $\tilde h_{i-1,s_{i-1}}\circ\dots\circ \tilde h_{1,s_1}$, $i\in\left\{ 2,\dots,L \right\}$ are affine linear, this is an intersection of convex sets by Lemma \ref{lem:AffineConvex}, which is convex itself.
  \end{proof}
\end{lemma}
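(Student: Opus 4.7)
The plan is to express $R_{\mathbf{h}}(s)$ as a finite intersection of preimages of convex sets under affine linear maps, and then invoke convexity of each factor plus closure of convexity under intersection. Unfolding Definition~\ref{def:MultiSignatReg} and Definition~\ref{def:MultiSignat}, I would first rewrite
\begin{equation*}
  R_{\mathbf{h}}(s) = R_{h_1}(s_1)\cap\bigcap_{i=2}^{L}\bigl(h_{i-1}\circ\cdots\circ h_1\bigr)^{-1}\bigl(R_{h_i}(s_i)\bigr).
\end{equation*}
Each $R_{h_i}(s_i)$ is convex by Lemma~\ref{lem:Convex}, so the only obstruction to applying Lemma~\ref{lem:AffineConvex} is that the partial compositions $h_{i-1}\circ\cdots\circ h_1$ are merely piecewise affine linear rather than globally affine.

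To circumvent this, I would replace each $h_j$ by its affine linear extension $\tilde h_{j,s_j}$ from Definition~\ref{def:AffineExt}, and show by induction on $l\in\{1,\dots,L\}$ that
\begin{equation*}
  R_{h_1}(s_1)\cap\bigcap_{i=2}^{l}\bigl(h_{i-1}\circ\cdots\circ h_1\bigr)^{-1}\bigl(R_{h_i}(s_i)\bigr)
  \;=\;
  R_{h_1}(s_1)\cap\bigcap_{i=2}^{l}\bigl(\tilde h_{i-1,s_{i-1}}\circ\cdots\circ \tilde h_{1,s_1}\bigr)^{-1}\bigl(R_{h_i}(s_i)\bigr).
\end{equation*}
The base case $l=1$ is trivial. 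In the inductive step, by the induction hypothesis every point $x$ in the left-hand set has $x\in R_{h_1}(s_1)$ and, for each $i\le l$, lies in a region where the partial composition coincides with the chained affine extensions; Lemma~\ref{lem:AffineRep} guarantees that $h_i$ agrees with $\tilde h_{i,s_i}$ on $R_{h_i}(s_i)$, which lets us swap in $\tilde h_{l,s_l}$ at the outermost step and conclude.

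Once this equality is established for $l=L$, the region $R_{\mathbf{h}}(s)$ is expressed as an intersection of preimages under the genuinely affine linear maps $\tilde h_{i-1,s_{i-1}}\circ\cdots\circ \tilde h_{1,s_1}$ of the convex sets $R_{h_i}(s_i)$. Lemma~\ref{lem:AffineConvex} then yields convexity of each preimage, and a finite intersection of convex subsets of $\mathbb{R}^{n_0}$ is convex. The main obstacle I anticipate is the careful bookkeeping in the inductive step: one must verify that the agreement $h_j=\tilde h_{j,s_j}$ propagates correctly through the composition, so that the input to $h_l$ at $x$ really does equal the input to $\tilde h_{l,s_l}$ at $x$; this follows cleanly only because we are working inside the successively shrinking intersection, which forces all intermediate signatures to match $s_i$.
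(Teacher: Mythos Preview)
Your proposal is correct and follows essentially the same approach as the paper's proof: rewrite $R_{\mathbf{h}}(s)$ as an intersection of preimages, replace the piecewise-affine compositions $h_{i-1}\circ\cdots\circ h_1$ by the affine extensions $\tilde h_{i-1,s_{i-1}}\circ\cdots\circ\tilde h_{1,s_1}$ via an induction on $l$, and then conclude by Lemma~\ref{lem:AffineConvex} and closure of convexity under finite intersection. The inductive step and the key observation that the two compositions agree on the shrinking intersection are exactly what the paper uses.
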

\begin{definition}
  \label{def:RestrictMulti}
  For $s\in \left\{ 0,1 \right\}^{n_1}\times\cdots\times\left\{ 0,1 \right\}^{n_L}$, denote the restriction of $f_{\mathbf{h}}$ from equation~\eqref{eq:FunF} on the set $R_{\mathbf{h}}(s)$ by 
	$f_{\mathbf{h},s}:
	R_{\mathbf{h}}(s)\to\mathbb{R}^{n_L},\;
	  x\mapsto f_{\mathbf{h}}(x)=h_L\circ\dots\circ h_1 (x).
	  $
\end{definition}
\begin{lemma}
  \label{lem:AffineRepMulti}
  For any $s\in \left\{ 0,1 \right\}^{n_1}\times\cdots\times\left\{ 0,1 \right\}^{n_L}$, the function $f_{\mathbf{h},s}$ is affine linear. Furthermore, for $x\in R_{\mathbf{h}}(s)$, $f_{\mathbf{h},s}(x)$ is explicitly given by 
  \begin{equation}
	\label{eq:ExplicitMulti}
	f_{\mathbf{h},s}(x)=\tilde{\left( h_L \right)}_{s_L}\circ\dots\circ \tilde{\left( h_1 \right)}_{s_1}(x),
  \end{equation}where 
  \begin{equation*}
	\tilde{\left( h_i \right)}_{s_i}(z)=\textnormal{diag}(s_i)\left( W^{(h_i)}z+b^{(h_i)} \right)\quad \textnormal{ for }i\in\left\{ 1,\dots,L \right\}, z\in\mathbb{R}^{n_{i-1}}.
  \end{equation*}
  \begin{proof}
	Let $x\in R_{\mathbf{h}}(s)$. By Definitions~\ref{def:MultiSignat} and \ref{def:MultiSignatReg}, $s=(s_1,\dots,s_L)$ where $s_1=S_{h_1}(x)$ and $s_i=S_{h_i}\left( h_{i-1}\circ\dots\circ h_{1}(x) \right)$ for $i\in\left\{ 2,\dots,L \right\}$. Furthermore, $x\in R_{h_1}(s_1)$ and for $i\in\left\{ 2,\dots,L \right\}$, $h_{i-1}\circ\dots\circ h_1(x)\in R_{h_i}(s_i)$. Hence with the notation from Definition~\ref{def:Restrict}, 
	\begin{equation*}
	  f_{\mathbf{h},s}(x)=f_{\mathbf{h}}(x)=h_{L}\circ\dots\circ h_1(x)=\left( h_L \right)_{s_L}\circ\dots\circ \left( h_1 \right)_{s_1}(x),
	\end{equation*}where all the composed functions on the right-hand side are affine linear by Lemma~\ref{lem:AffineRep}. With the notation of Definition~\ref{def:AffineExt} we obtain equation~\eqref{eq:ExplicitMulti}.
  \end{proof}
\end{lemma}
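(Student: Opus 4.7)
The plan is to unfold the composition defining $f_{\mathbf{h},s}$ one layer at a time, replacing each $h_i$ by its affine linear extension $\tilde{(h_i)}_{s_i}$ on the region where this replacement is justified by Lemma~\ref{lem:AffineRep}. Once every layer is replaced, the claimed formula follows automatically and the affine linearity of $f_{\mathbf{h},s}$ is immediate from the fact that a composition of affine linear maps is affine linear.

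First I would fix $x \in R_{\mathbf{h}}(s)$ and rewrite the multi signature condition in component form. Unwinding Definitions~\ref{def:MultiSignat} and~\ref{def:MultiSignatReg} yields $S_{h_1}(x) = s_1$ and $S_{h_i}(h_{i-1}\circ\dots\circ h_1(x)) = s_i$ for $i \in \{2,\dots,L\}$, which by Definition~\ref{def:SigSReg} is the same as $x \in R_{h_1}(s_1)$ and $h_{i-1}\circ\dots\circ h_1(x) \in R_{h_i}(s_i)$ for $i \in \{2,\dots,L\}$. This positions $x$ and all of its partial images so that Lemma~\ref{lem:AffineRep} can be applied at each layer.

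Then I would cascade through the layers. Lemma~\ref{lem:AffineRep} says that on $R_{h_i}(s_i)$ the map $h_i$ agrees with $\tilde{(h_i)}_{s_i}$ from Definition~\ref{def:AffineExt}. Starting with $h_1(x) = \tilde{(h_1)}_{s_1}(x)$ and proceeding inductively, each time noting that the previous intermediate image already lies in $R_{h_i}(s_i)$, I obtain
\[ f_{\mathbf{h},s}(x) = h_L \circ \dots \circ h_1(x) = \tilde{(h_L)}_{s_L} \circ \dots \circ \tilde{(h_1)}_{s_1}(x), \]
which is the claimed identity~\eqref{eq:ExplicitMulti}. Since each $\tilde{(h_i)}_{s_i}$ is affine linear on all of $\mathbb{R}^{n_{i-1}}$ and the composition of affine linear maps is affine linear, $f_{\mathbf{h},s}$ is affine linear on $R_{\mathbf{h}}(s)$.

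The argument is essentially routine and the only subtlety is the bookkeeping required to confirm at each step that the intermediate image of $x$ remains in the region where the affine linear substitution is legal; this is the mild obstacle, but it dissolves once the multi signature condition is decomposed into its $L$ single signature conditions. No geometric or combinatorial input beyond Lemma~\ref{lem:AffineRep} and the definitions is needed.
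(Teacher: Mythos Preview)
Your proposal is correct and follows essentially the same approach as the paper's proof: both unwind the multi signature condition into the $L$ single-layer conditions $h_{i-1}\circ\dots\circ h_1(x)\in R_{h_i}(s_i)$, then invoke Lemma~\ref{lem:AffineRep} at each layer to replace $h_i$ by its affine extension $\tilde{(h_i)}_{s_i}$, and conclude by observing that a composition of affine linear maps is affine linear. The only cosmetic difference is that the paper phrases the replacement in one line via the restrictions $(h_i)_{s_i}$ from Definition~\ref{def:Restrict} before passing to the extensions, whereas you go straight to the extensions; the content is identical.
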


From Lemma \ref{lem:AffineRepMulti}, we know that $f_{\mathbf{h},s}$ is affine linear. We can extend it as follows:
\begin{definition}
  Let $\tilde f_{\mathbf{h},s}$ denote the affine linear extension of $f_{\mathbf{h},s}$ from Definition~\ref{def:RestrictMulti}.
  \label{def:AffineExtMulti}
\end{definition}
\begin{lemma}
  The collection of sets $\left( R_{\mathbf{h}}(s) \right)_{s\in\mathcal{S}_{\mathbf{h}}}$ form a partition of $\mathbb{R}^{n_0}$.
  \begin{proof}
	By Definition~\ref{def:MultiSignatReg}, these sets are disjoint and furthermore, it holds that
	\begin{equation*}
	  \bigcup_{s\in \mathcal S_{\mathbf{h}}}R_{\mathbf{h}}(s)= \bigcup_{s\in \mathcal S_{\mathbf{h}}} \left\{ x\in\mathbb{R}^{n_0}\;\vert\; S_{\mathbf{h}}(x)=s\right\}  = \left\{ x\in\mathbb{R}^{n_0}\;\vert\; S_{\mathbf{h}}(x)\in\mathcal S_{\mathbf{h}}\right\}=\mathbb{R}^{n_0}.\qedhere
	\end{equation*}
  \end{proof}
  \label{lem:PartitionMulti}
\end{lemma}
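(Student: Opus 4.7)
My plan is to prove the partition claim by verifying the two defining properties separately, both of which reduce to elementary observations about the function $S_{\mathbf{h}}$ from Definition~\ref{def:MultiSignat}. Since $S_{\mathbf{h}}$ is an everywhere-defined map $\mathbb{R}^{n_0} \to \{0,1\}^{n_1}\times\cdots\times\{0,1\}^{n_L}$, and the sets $R_{\mathbf{h}}(s)$ are by Definition~\ref{def:MultiSignatReg} precisely the fibres of this map over points in $\mathcal{S}_{\mathbf{h}}$, both properties will essentially be restatements of this.

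For disjointness, I would fix $s,s' \in \mathcal{S}_{\mathbf{h}}$ with $s \neq s'$ and suppose for contradiction that there exists $x \in R_{\mathbf{h}}(s) \cap R_{\mathbf{h}}(s')$. By Definition~\ref{def:MultiSignatReg} we would then have $S_{\mathbf{h}}(x) = s$ and $S_{\mathbf{h}}(x) = s'$ simultaneously, contradicting $s \neq s'$. So pairwise intersections are empty.

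For the covering property, I would take an arbitrary $x \in \mathbb{R}^{n_0}$ and consider its multi signature $s^* := S_{\mathbf{h}}(x)$. By Definition~\ref{def:AttainedMultiSig}, $s^* \in \mathcal{S}_{\mathbf{h}}$, and by Definition~\ref{def:MultiSignatReg}, $x \in R_{\mathbf{h}}(s^*)$. Hence $x \in \bigcup_{s \in \mathcal{S}_{\mathbf{h}}} R_{\mathbf{h}}(s)$, giving $\mathbb{R}^{n_0} \subseteq \bigcup_{s\in \mathcal{S}_{\mathbf{h}}} R_{\mathbf{h}}(s)$; the reverse inclusion is immediate since each $R_{\mathbf{h}}(s) \subseteq \mathbb{R}^{n_0}$.

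There is no real obstacle here — the result is essentially a tautology unwinding the definitions of $S_{\mathbf{h}}$, $R_{\mathbf{h}}(s)$ and $\mathcal{S}_{\mathbf{h}}$. The only subtlety worth noting is that restricting the index set to $\mathcal{S}_{\mathbf{h}}$ (rather than all of $\{0,1\}^{n_1}\times\cdots\times\{0,1\}^{n_L}$) is what makes this a genuine partition into nonempty pieces; convexity (hence connectedness) of each piece is not needed for the partition property itself but is already supplied by Lemma~\ref{lem:ConvexMulti}, which together with this lemma justifies the bound $N_{f_{\mathbf{h}}} \le |\mathcal{S}_{\mathbf{h}}|$ used in the main framework.
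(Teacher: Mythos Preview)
Your proof is correct and follows essentially the same approach as the paper: both arguments verify disjointness and covering directly from Definitions~\ref{def:MultiSignatReg} and~\ref{def:AttainedMultiSig}, with the paper simply compressing the covering step into a single chain of set equalities. Your added remarks on nonemptiness and the role of Lemma~\ref{lem:ConvexMulti} are accurate but not needed for the lemma itself.
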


Similarly to Corollary~\ref{cor:RepOneLayer}, the previous lemma implies the following statement.

\begin{corollary}
  \label{cor:RepMultiLayers}
  With the notation from Definition~\ref{def:AffineExtMulti}, it holds that
  $f_{\mathbf{h}}=\sum_{s\in \mathcal{S}_{\mathbf{h}}}^{}\mathds{1}_{R_{\mathbf{h}}(s)}\tilde f_{\mathbf{h},s}$.
\end{corollary}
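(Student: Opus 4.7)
The plan is to mirror the proof of Corollary~\ref{cor:RepOneLayer}, but now at the multi-layer level, invoking the partition result for multi-signature regions instead of the single-layer one. The argument will be pointwise: I fix an arbitrary $x\in\mathbb{R}^{n_0}$ and show that both sides of the claimed identity evaluate to $f_{\mathbf{h}}(x)$.

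First I would use Lemma~\ref{lem:PartitionMulti}, which says that $\left( R_{\mathbf{h}}(s) \right)_{s\in\mathcal{S}_{\mathbf{h}}}$ forms a partition of $\mathbb{R}^{n_0}$. Hence for my fixed $x$, there exists a unique $s^{*}\in\mathcal{S}_{\mathbf{h}}$ with $x\in R_{\mathbf{h}}(s^{*})$; concretely $s^{*}=S_{\mathbf{h}}(x)$ by Definitions~\ref{def:MultiSignat} and~\ref{def:MultiSignatReg}. Consequently, in the sum $\sum_{s\in\mathcal{S}_{\mathbf{h}}}\mathds{1}_{R_{\mathbf{h}}(s)}(x)\tilde f_{\mathbf{h},s}(x)$ exactly one indicator is non-zero, so the sum collapses to $\tilde f_{\mathbf{h},s^{*}}(x)$.

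Next I would invoke Definition~\ref{def:AffineExtMulti}: $\tilde f_{\mathbf{h},s^{*}}$ is by construction an affine linear extension of $f_{\mathbf{h},s^{*}}$ from $R_{\mathbf{h}}(s^{*})$ to all of $\mathbb{R}^{n_0}$, so the two functions coincide on $R_{\mathbf{h}}(s^{*})$. Since $x\in R_{\mathbf{h}}(s^{*})$, I get $\tilde f_{\mathbf{h},s^{*}}(x)=f_{\mathbf{h},s^{*}}(x)$, and by Definition~\ref{def:RestrictMulti} this equals $f_{\mathbf{h}}(x)$. Combining these equalities yields $\sum_{s\in\mathcal{S}_{\mathbf{h}}}\mathds{1}_{R_{\mathbf{h}}(s)}(x)\tilde f_{\mathbf{h},s}(x)=f_{\mathbf{h}}(x)$, and since $x$ was arbitrary, the functional identity follows.

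There is essentially no hard step here; the result is a bookkeeping consequence of the partition Lemma~\ref{lem:PartitionMulti}, the affine linearity Lemma~\ref{lem:AffineRepMulti}, and the extension in Definition~\ref{def:AffineExtMulti}. The only point worth being explicit about is that one must restrict the sum to $s\in\mathcal{S}_{\mathbf{h}}$ rather than to all $s\in\{0,1\}^{n_1}\times\dots\times\{0,1\}^{n_L}$, so that every indicator appearing in the sum corresponds to a non-empty region and the unique $s^{*}$ containing $x$ is guaranteed to lie in the index set, which is exactly what Lemma~\ref{lem:PartitionMulti} provides.
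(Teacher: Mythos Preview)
Your argument is correct and matches the paper's intended approach: the paper simply states that the result follows ``similarly to Corollary~\ref{cor:RepOneLayer}'' from Lemma~\ref{lem:PartitionMulti}, and your pointwise argument using the partition and Definitions~\ref{def:RestrictMulti} and~\ref{def:AffineExtMulti} is exactly that. There is nothing to add.
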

\subsection{Results for reproving Mont\'ufar's bound}
In this section we prove the statements that are used for the motivation in Section~\ref{sec:Motivation}. Note however, that the resulting bound from Corollary~\ref{cor:firstUpperBound} is a special case of our theory, see Section~\ref{sec:Zaslavsky}.
\label{app:firstBound}
\begin{lemma}
  Let $n_0,n_1,n_2\in\mathbb{N}$.  Let $ g_1:\mathbb{R}^{n_0}\to\mathbb{R}^{n_1}$ be an affine linear function, i.e. there exists $A\in\mathbb{R}^{n_1\times n_0}$ and $b\in\mathbb{R}^{n_1}$ such that for all $x\in \mathbb{R}^{n_0}$,
	$g_1(x)=Ax+b$.
Furthermore, assume that $g_2\in\text{RL}(n_1,n_2)$. Then
  \begin{equation*}
	|\left\{ S_{g_2}(  g_1(x) )\vert x\in \mathbb{R}^{n_0} \right\}| \le \sum_{j=0}^{\textnormal{rank}(g_1)}{n_{2}\choose j}.
  \end{equation*}
  \begin{proof}
	By definition, $\textnormal{rank}(g_1)$ is the dimension of the affine linear space 
	\begin{equation*}
	 U:=\left\{ g_1(x)|\;x\in \mathbb{R}^{n_0} \right\}\subset \mathbb{R}^{n_1}. 
	\end{equation*}
	  There exists a affine linear bijective map 
	$  \Phi:U\to\mathbb{R}^{\textnormal{rank}(g_1)}$. 
	Now, it holds that
	\begin{eqnarray*}
	  &&|\left\{ S_{g_2}(  g_1(x) )\vert x\in \mathbb{R}^{n_0} \right\}| =  |\left\{ S_{g_2}(  \Phi^{-1}\circ\Phi\circ g_1(x) )\vert x\in \mathbb{R}^{n_0} \right\}|\\
	  &=&  |\left\{ S_{g_2\circ\Phi^{-1}}(  \Phi\circ g_1(x) )\vert x\in \mathbb{R}^{n_0} \right\}|=  |\left\{ S_{ g_2\circ\Phi^{-1}}(  z )\vert z\in \mathbb{R}^{\textnormal{rank}(g_1)} \right\}|\\
	  &=& |\mathcal{S}_{g_2\circ\Phi^{-1}}|\le \sum_{j=0}^{\textnormal{rank}(g_1)}{n_2\choose j}
	\end{eqnarray*}by Lemmas~\ref{lem:CompositionSignature} and~\ref{lem:OneLayer1975}.
  \end{proof}
  \label{lem:AffineComposition}
\end{lemma}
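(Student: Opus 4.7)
The plan is to reduce the counting problem to a single-layer arrangement problem on a space of dimension exactly $\textnormal{rank}(g_1)$, at which point Zaslavsky's bound from Lemma~\ref{lem:OneLayer1975} can be invoked directly.

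First I would note that the set $\{S_{g_2}(g_1(x)) \mid x\in\mathbb{R}^{n_0}\}$ depends only on the image $U := g_1(\mathbb{R}^{n_0}) \subset \mathbb{R}^{n_1}$, which is an affine subspace of dimension $r := \textnormal{rank}(g_1)$. Thus the quantity to be bounded equals $|\{S_{g_2}(y) \mid y\in U\}|$. Next I would choose an affine linear bijection $\Phi: U \to \mathbb{R}^r$ (such $\Phi$ exists by elementary linear algebra, since $U$ is affinely isomorphic to $\mathbb{R}^r$). The idea is that $\Phi^{-1}: \mathbb{R}^r \to U \subset \mathbb{R}^{n_1}$ is affine linear, so I can consider the composition $g_2 \circ \Phi^{-1}: \mathbb{R}^r \to \mathbb{R}^{n_2}$.

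Applying Lemma~\ref{lem:CompositionSignature} to $g_2$ and the affine linear map $\Phi^{-1}$ tells us two things: the composition $g_2\circ \Phi^{-1}$ lies in $\textnormal{RL}(r, n_2)$, and $S_{g_2\circ \Phi^{-1}}(z) = S_{g_2}(\Phi^{-1}(z))$ for every $z\in\mathbb{R}^r$. Since $\Phi$ is a bijection $U \to \mathbb{R}^r$, the map $z \mapsto \Phi^{-1}(z)$ sends $\mathbb{R}^r$ onto $U$, giving
\begin{equation*}
  \{S_{g_2}(y) \mid y\in U\} = \{S_{g_2}(\Phi^{-1}(z)) \mid z\in\mathbb{R}^r\} = \{S_{g_2\circ \Phi^{-1}}(z) \mid z\in\mathbb{R}^r\} = \mathcal{S}_{g_2\circ\Phi^{-1}}.
\end{equation*}
Now Lemma~\ref{lem:OneLayer1975} applied to $g_2\circ\Phi^{-1} \in \textnormal{RL}(r,n_2)$ yields $|\mathcal{S}_{g_2\circ\Phi^{-1}}| \le \sum_{j=0}^{r}\binom{n_2}{j}$, which is the desired bound.

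The only subtle point is ensuring that an affine linear bijection $\Phi: U \to \mathbb{R}^r$ genuinely exists and that Lemma~\ref{lem:CompositionSignature} applies — but $U$ is parametrized as $\{g_1(0) + Av \mid v\in\mathbb{R}^{n_0}\}$ where $A$ is the linear part of $g_1$, and by choosing any basis of $\textnormal{Im}(A)$ one obtains such a $\Phi$ explicitly. There is no real analytic difficulty; the lemma is essentially a reparametrization followed by Zaslavsky, and the main conceptual point is recognizing that the number of attained signatures is invariant under affine reparametrization of the input, which is precisely what Lemma~\ref{lem:CompositionSignature} encodes.
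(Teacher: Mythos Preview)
Your proof is correct and follows essentially the same approach as the paper: both identify the image $U=g_1(\mathbb{R}^{n_0})$, choose an affine bijection $\Phi:U\to\mathbb{R}^{\textnormal{rank}(g_1)}$, use Lemma~\ref{lem:CompositionSignature} to recognize $g_2\circ\Phi^{-1}\in\textnormal{RL}(\textnormal{rank}(g_1),n_2)$, and then apply Zaslavsky's bound from Lemma~\ref{lem:OneLayer1975}. Your version is slightly more explicit about why $\Phi$ exists and why the reparametrization preserves the set of attained signatures, but the argument is the same.
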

\begin{lemma}
  \label{lem:UpperboundLemma}
  For $i\in\left\{ 1,\dots,L-1 \right\}$ and given $(s^*_1,\dots,s^*_{i})\in \mathcal{S}_{\mathbf{h}}^{(i)}$, it holds that
  \begin{equation}
	\vert\left\{s_{i+1}\in\left\{ 0,1\right\}^{n_{i+1}} |\;(s^*_1,\dots,s^*_i,s_{i+1})\in\mathcal{S}_{\mathbf{h}}^{(i+1)} \right\} \vert\le \sum_{j=0}^{\min\left( n_0,\vert s^*_1\vert,\dots,\vert s^*_i\vert \right)}{n_{i+1}\choose j}.
	\label{eq:UpperboundLemma}
  \end{equation}
  \begin{proof}
	For $i\in\left\{ 1,\dots,L-1 \right\}$, let $\mathbf{h}^{(i)}=(h_1,\dots,h_i)$ and $R_{\mathbf{h}^{(i)}}$ as in Definition~\ref{def:MultiSignatReg}. Combining this with Definitions~\ref{def:MultiSignat} and~\ref{def:Signatures}, it follows that 
	
	\begin{eqnarray*}
	  &&\vert\left\{s_{i+1}\in\left\{ 0,1 \right\}^{n_{i+1}}|\;(s^*_1,\dots,s^*_i,s_{i+1})\in\mathcal{S}_{\mathbf{h}}^{(i+1)} \right\} \vert\\
	  &=& |\left\{ S_{h_{i+1}}\left( h_{i}\circ\dots\circ h_1(x) \right)\vert x\in R_{\mathbf{h}^{(i)}}(s_1^*,\dots,s_i^*) \right\}|.
	\end{eqnarray*}
	By Lemma~\ref{lem:AffineRepMulti}, the function 
	$  f: R_{\mathbf{h}^{(i)}}((s^*_1,\dots,s^*_i))\to \mathbb{R}^{n_{i+1}},
		x\mapsto h_{i}\circ\dots\circ h_1(x)$
 is affine linear and if we define $\tilde f$ to be the affine linear extension of $f$ to $\mathbb{R}^{n_0}$, it follows that 
	\begin{eqnarray*}
	  && |\left\{ S_{h_{i+1}}\left( h_{i}\circ\dots\circ h_1(x) \right)\vert x\in R_{\mathbf{h}^{(i)}}(s^*_1,\dots,s^*_i) \right\}|= |\left\{ S_{h_{i+1}}( \tilde f(x) )\vert x\in R_{\mathbf{h}^{(i)}}(s^*_1,\dots,s^*_i) \right\}|\\
	  &\le& |\left\{ S_{h_{i+1}}( \tilde f(x) )\vert x\in \mathbb{R}^{n_0} \right\}|\le \sum_{j=0}^{\textnormal{rank}(\tilde f)}{n_{i+1}\choose j}
	\end{eqnarray*}by Lemma~\ref{lem:AffineComposition}. The result now follows from the fact that the rank of $\tilde f$ is bounded by the minimum of the ranks of $h_1,\dots,h_{i+1}$ on the set $R_{\mathbf{h}_i}(s_1^*,\dots,s_i^*)$, which itself is bounded by $\min\left( n_0,|s_1^*|,\dots,|s_i^*| \right)$.
  \end{proof}
\end{lemma}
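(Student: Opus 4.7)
The plan is to express the quantity on the left-hand side of~\eqref{eq:UpperboundLemma} as the number of signatures attained by the layer function $h_{i+1}$ when its input varies over the image of the region $R_{\mathbf{h}^{(i)}}(s_1^*,\dots,s_i^*)$ under the composition $h_i\circ\dots\circ h_1$. Writing $\mathbf{h}^{(i)}=(h_1,\dots,h_i)$, the defining property of $R_{\mathbf{h}^{(i)}}(s_1^*,\dots,s_i^*)$ is that on it the first $i$ multi-signature entries are pinned to $(s_1^*,\dots,s_i^*)$, so the only freedom in determining the $(i{+}1)$-th entry comes from $S_{h_{i+1}}(h_i\circ\dots\circ h_1(x))$ as $x$ ranges over that region. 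Thus the cardinality we must bound equals $\lvert\{S_{h_{i+1}}(h_i\circ\dots\circ h_1(x))\mid x\in R_{\mathbf{h}^{(i)}}(s_1^*,\dots,s_i^*)\}\rvert$.

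Next I would invoke Lemma~\ref{lem:AffineRepMulti}, which guarantees that $f := h_i\circ\dots\circ h_1$ restricted to $R_{\mathbf{h}^{(i)}}(s_1^*,\dots,s_i^*)$ coincides with an affine linear map of the explicit form $\tilde{(h_i)}_{s_i^*}\circ\dots\circ\tilde{(h_1)}_{s_1^*}$. Let $\tilde f:\mathbb{R}^{n_0}\to\mathbb{R}^{n_i}$ denote the affine linear extension of this restriction to the whole input space. Then passing from $x\in R_{\mathbf{h}^{(i)}}(s_1^*,\dots,s_i^*)$ to $x\in\mathbb{R}^{n_0}$ can only enlarge the set of attained signatures, giving
\begin{equation*}
  \lvert\{S_{h_{i+1}}(f(x))\mid x\in R_{\mathbf{h}^{(i)}}(s_1^*,\dots,s_i^*)\}\rvert \le \lvert\{S_{h_{i+1}}(\tilde f(x))\mid x\in\mathbb{R}^{n_0}\}\rvert.
\end{equation*}
Now Lemma~\ref{lem:AffineComposition} applied to the affine map $\tilde f$ and the ReLU layer $h_{i+1}$ upper bounds the right-hand side by $\sum_{j=0}^{\operatorname{rank}(\tilde f)}\binom{n_{i+1}}{j}$.

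The remaining and main obstacle is to verify that $\operatorname{rank}(\tilde f)\le \min(n_0,|s_1^*|,\dots,|s_i^*|)$. From the explicit form $\tilde f = \tilde{(h_i)}_{s_i^*}\circ\dots\circ\tilde{(h_1)}_{s_1^*}$, where each factor has the shape $z\mapsto \operatorname{diag}(s_j^*)(W^{(h_j)}z+b^{(h_j)})$, the rank of the composition is at most the minimum of the ranks of the factors. Each factor $\tilde{(h_j)}_{s_j^*}$ has its image contained in the coordinate subspace picked out by $s_j^*$, so its rank is at most $|s_j^*|$; in addition $\tilde f$ has input domain $\mathbb{R}^{n_0}$, hence rank at most $n_0$. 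Putting these facts together gives the desired bound on $\operatorname{rank}(\tilde f)$, and plugging it into the bound from Lemma~\ref{lem:AffineComposition} yields~\eqref{eq:UpperboundLemma}. The only subtlety to be careful about is the monotonicity $\sum_{j=0}^{r}\binom{n_{i+1}}{j}\le\sum_{j=0}^{r'}\binom{n_{i+1}}{j}$ for $r\le r'$, which lets us replace the (possibly smaller) true rank of $\tilde f$ by the upper bound $\min(n_0,|s_1^*|,\dots,|s_i^*|)$ without loss.
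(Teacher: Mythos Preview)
Your proposal is correct and follows essentially the same approach as the paper's proof: identify the left-hand side with the set of signatures of $h_{i+1}$ attained over the region, extend the affine restriction $f$ to $\tilde f$ on all of $\mathbb{R}^{n_0}$, apply Lemma~\ref{lem:AffineComposition}, and bound $\operatorname{rank}(\tilde f)$ via the factorization $\tilde{(h_i)}_{s_i^*}\circ\dots\circ\tilde{(h_1)}_{s_1^*}$. If anything, you are slightly more explicit than the paper in justifying the rank bound and in noting the monotonicity of the partial binomial sum.
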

\begin{lemma}
  \label{lem:main}
  It holds that $\vert\mathcal{S}_{\mathbf{h}}^{(1)}|\le\sum_{j=0}^{n_0}{n_1\choose j}$.
  \begin{proof}
	This follows from Lemma~\ref{lem:OneLayer1975}.
  \end{proof}
\end{lemma}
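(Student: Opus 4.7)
The plan is to observe that $\mathcal{S}_{\mathbf{h}}^{(1)}$ is nothing more than the set of first-coordinate signatures arising from full multi signatures, and these first-coordinate signatures are exactly a subset of the attained signatures of the single layer $h_1$. So the result should reduce directly to the single-layer bound of Lemma~\ref{lem:OneLayer1975}.

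More concretely, first I would unfold Definition~\ref{def:Signatures}: for any $s_1 \in \mathcal{S}_{\mathbf{h}}^{(1)}$ there exists some $s=(s_1,\dots,s_L) \in \mathcal{S}_{\mathbf{h}}$, which by Definition~\ref{def:AttainedMultiSig} means there is an $x \in \mathbb{R}^{n_0}$ with $S_{\mathbf{h}}(x) = s$. Unpacking Definition~\ref{def:MultiSignat} then yields $S_{h_1}(x) = s_1$, so $s_1 \in \mathcal{S}_{h_1}$ by Definition~\ref{def:AttainedSig}. This establishes the inclusion $\mathcal{S}_{\mathbf{h}}^{(1)} \subseteq \mathcal{S}_{h_1}$ and hence the cardinality bound $|\mathcal{S}_{\mathbf{h}}^{(1)}| \le |\mathcal{S}_{h_1}|$.

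To finish, I would apply Lemma~\ref{lem:OneLayer1975} to $h_1 \in \textnormal{RL}(n_0, n_1)$, which gives $|\mathcal{S}_{h_1}| \le \sum_{j=0}^{n_0}\binom{n_1}{j}$. Chaining the two inequalities produces the claim.

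Since the excerpt's one-line ``proof'' already defers to Lemma~\ref{lem:OneLayer1975}, there is no real obstacle here; the only nontrivial ingredient is Zaslavsky's hyperplane arrangement bound itself, which is already established in Lemma~\ref{lem:OneLayer1975}. The bookkeeping step is just confirming that projecting a multi signature onto its first component cannot produce signatures outside $\mathcal{S}_{h_1}$, which is immediate from the definitions.
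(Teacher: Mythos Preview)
Your proposal is correct and is essentially the same as the paper's proof, which simply invokes Lemma~\ref{lem:OneLayer1975}; you have merely made explicit the identification $\mathcal{S}_{\mathbf{h}}^{(1)}\subseteq\mathcal{S}_{h_1}$ that the paper leaves implicit. (In fact equality $\mathcal{S}_{\mathbf{h}}^{(1)}=\mathcal{S}_{h_1}$ holds, since every $x\in\mathbb{R}^{n_0}$ has a full multi signature whose first component is $S_{h_1}(x)$, but the inclusion you prove suffices.)
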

\begin{theorem}
  \label{thm:main}
  For $i\in \left\{ 1,\dots,L-1 \right\}$, it holds that	
  \begin{equation*}
	|\mathcal{S}^{(i+1)}_{\mathbf{h}}|\le\sum_{\left( s_1,\dots,s_i \right)\in\mathcal{S}^{(i)}_{\mathbf{h}}}^{}\sum_{j=0}^{\min\left( n_0,|s_1|,\dots,|s_i| \right)} {n_{i+1}\choose j}.
  \end{equation*}
  \begin{proof}
	It holds that
	\begin{equation*}
	  \vert \mathcal{S}_{\mathbf{h}}^{(i+1)}\vert=\sum_{\left( s_1,\dots,s_{i+1} \right)\in\mathcal{S}_{\mathbf{h}}^{(i+1)}}1=\sum_{(s_1,\dots,s_i)\in\mathcal{S}_{\mathbf{h}}^{(i)}}\vert\left\{s_{i+1}\in\left\{ 0,1\right\}^{n_i+1}|\;(s_1,\dots,s_{i+1})\in\mathcal{S}_{\mathbf{h}}^{(i+1)} \right\} \vert.
	\end{equation*}
	Now use Lemma~\ref{lem:UpperboundLemma}.
  \end{proof}
\end{theorem}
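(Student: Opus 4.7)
The plan is to partition $\mathcal{S}^{(i+1)}_{\mathbf{h}}$ according to the prefix in $\mathcal{S}^{(i)}_{\mathbf{h}}$ and then apply Lemma~\ref{lem:UpperboundLemma} term by term. By Definition~\ref{def:Signatures}, every $(s_1,\dots,s_{i+1})\in\mathcal{S}^{(i+1)}_{\mathbf{h}}$ has its prefix $(s_1,\dots,s_i)$ in $\mathcal{S}^{(i)}_{\mathbf{h}}$, so I would start from the tautological identity
\begin{equation*}
  |\mathcal{S}^{(i+1)}_{\mathbf{h}}| = \sum_{(s_1,\dots,s_i)\in\mathcal{S}^{(i)}_{\mathbf{h}}} \left|\left\{ s_{i+1}\in\{0,1\}^{n_{i+1}} \;\middle|\; (s_1,\dots,s_i,s_{i+1})\in\mathcal{S}^{(i+1)}_{\mathbf{h}} \right\}\right|,
\end{equation*}
and bound the inner cardinality directly by Lemma~\ref{lem:UpperboundLemma}, which gives exactly $\sum_{j=0}^{\min(n_0,|s_1|,\dots,|s_i|)}\binom{n_{i+1}}{j}$. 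Substituting and summing yields the claim.

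Since the theorem is essentially a repackaging of Lemma~\ref{lem:UpperboundLemma}, the real content lies in that lemma, which would be the main obstacle if it were not already available. To prove it, I would fix $(s_1^*,\dots,s_i^*)\in\mathcal{S}^{(i)}_{\mathbf{h}}$ and use Lemma~\ref{lem:AffineRepMulti} to observe that on $R_{\mathbf{h}^{(i)}}(s_1^*,\dots,s_i^*)$ the composition $h_i\circ\dots\circ h_1$ agrees with the affine linear map $\tilde{(h_i)}_{s_i^*}\circ\dots\circ\tilde{(h_1)}_{s_1^*}$, whose affine extension $\tilde f : \mathbb{R}^{n_0}\to\mathbb{R}^{n_i}$ has rank at most $\min(n_0,|s_1^*|,\dots,|s_i^*|)$. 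The rank bound is factor-by-factor: each $\tilde{(h_\ell)}_{s_\ell^*}(z)=\textnormal{diag}(s_\ell^*)(W^{(h_\ell)}z+b^{(h_\ell)})$ has at most $|s_\ell^*|$ nonzero rows, while the domain dimension $n_0$ caps the rank of any map starting from $\mathbb{R}^{n_0}$.

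Every attained signature of $h_{i+1}$ on the subset $R_{\mathbf{h}^{(i)}}(s_1^*,\dots,s_i^*)$ also appears among signatures attained on the larger affine image $\tilde f(\mathbb{R}^{n_0})$, so applying Lemma~\ref{lem:AffineComposition} (itself a consequence of Zaslavsky's Lemma~\ref{lem:OneLayer1975} after collapsing onto an affine subspace of dimension $\textnormal{rank}(\tilde f)$) bounds the count by $\sum_{j=0}^{\textnormal{rank}(\tilde f)}\binom{n_{i+1}}{j}$, which is dominated by $\sum_{j=0}^{\min(n_0,|s_1^*|,\dots,|s_i^*|)}\binom{n_{i+1}}{j}$ as required. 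The real conceptual step is tracking the rank through the active-neuron counts $|s_\ell^*|$; without this refinement one would only obtain the trivial $2^{n_{i+1}}$ per prefix, losing the fine dimensional information that makes the recursion sharp.
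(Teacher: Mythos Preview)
Your proof is correct and follows exactly the paper's approach: partition $\mathcal{S}^{(i+1)}_{\mathbf{h}}$ by prefix in $\mathcal{S}^{(i)}_{\mathbf{h}}$ and invoke Lemma~\ref{lem:UpperboundLemma} on each fibre. Your supplementary sketch of Lemma~\ref{lem:UpperboundLemma} (affine extension $\tilde f$, rank bounded by $\min(n_0,|s_1^*|,\dots,|s_i^*|)$, then Lemma~\ref{lem:AffineComposition}/Zaslavsky) also matches the paper's proof of that lemma essentially verbatim.
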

  \begin{corollary}It holds that
	\label{cor:firstUpperBound}
  \begin{equation*}
	|\mathcal{S}_{\mathbf{h}}\vert\le\prod_{i=0}^{L}\sum_{j=0}^{\min(n_0,\dots,n_i)}{n_{i+1}\choose j}
  \end{equation*}
  \begin{proof}
	For $i\in\left\{ 1,\dots,L-1 \right\}$, Theorem~\ref{thm:main} implies  
  \begin{equation*}
	|\mathcal{S}^{(i+1)}_{\mathbf{h}}|\le\sum_{\left( s_1,\dots,s_i \right)\in\mathcal{S}^{(i)}_{\mathbf{h}}}^{}\sum_{j=0}^{ \min(n_0,\dots,n_i)} {n_{i+1}\choose j}=|\mathcal{S}_{\mathbf{h}}^{(i)}|\sum_{j=0}^{\min\left( n_0,\dots,n_i \right)}{n_{i+1}\choose j}.
  \end{equation*}
It follows that
  \begin{equation*}
	|\mathcal{S}_{\mathbf{h}}\vert=\vert\mathcal{S}_{\mathbf{h}}^{(L)}\vert\le|\mathcal{S}_{\mathbf{h}}^{(L-1)}|\sum_{j=0}^{\min(n_0,\dots,n_{L-1})}{n_L\choose j}\le\dots\le\prod_{i=0}^{L-1}\sum_{j=0}^{\min(n_0,\dots,n_i)}{n_{i+1}\choose j}.\qedhere
  \end{equation*}
  \end{proof}
\end{corollary}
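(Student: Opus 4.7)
The plan is to combine the recursion inequality of Theorem~\ref{thm:main} with the anchor from Lemma~\ref{lem:main}, after first weakening the recursion so that the inner summation range no longer depends on the outer index. This weakening is exactly the step discussed in Section~\ref{sec:Motivation} as the price one pays in Mont\'ufar's approach.

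First I would fix $i\in\{1,\dots,L-1\}$ and start from the recursion
\begin{equation*}
|\mathcal{S}^{(i+1)}_{\mathbf{h}}|\le\sum_{(s_1,\dots,s_i)\in\mathcal{S}^{(i)}_{\mathbf{h}}}\sum_{j=0}^{\min(n_0,|s_1|,\dots,|s_i|)}{n_{i+1}\choose j}
\end{equation*}
supplied by Theorem~\ref{thm:main}. For every $(s_1,\dots,s_i)\in\mathcal{S}^{(i)}_{\mathbf{h}}$ and every $k\in\{1,\dots,i\}$ we have the trivial bound $|s_k|\le n_k$, hence $\min(n_0,|s_1|,\dots,|s_i|)\le\min(n_0,n_1,\dots,n_i)$. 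Since the summand $\binom{n_{i+1}}{j}$ is nonnegative, extending the summation range can only enlarge the inner sum, so
\begin{equation*}
\sum_{j=0}^{\min(n_0,|s_1|,\dots,|s_i|)}{n_{i+1}\choose j}\le\sum_{j=0}^{\min(n_0,n_1,\dots,n_i)}{n_{i+1}\choose j}.
\end{equation*}
The right-hand side is now independent of $(s_1,\dots,s_i)$, so it factors out of the outer sum, yielding the clean recursive inequality
\begin{equation*}
|\mathcal{S}^{(i+1)}_{\mathbf{h}}|\le|\mathcal{S}^{(i)}_{\mathbf{h}}|\sum_{j=0}^{\min(n_0,n_1,\dots,n_i)}{n_{i+1}\choose j}.
\end{equation*}

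Next I would unpack this recursion by induction on $i$. The anchor is Lemma~\ref{lem:main}, which gives $|\mathcal{S}^{(1)}_{\mathbf{h}}|\le\sum_{j=0}^{n_0}\binom{n_1}{j}$; this matches the $i=0$ factor of the target product. Iterating the recursion from $i=1$ up to $i=L-1$ multiplies in one factor $\sum_{j=0}^{\min(n_0,\dots,n_i)}\binom{n_{i+1}}{j}$ at each step, and since $|\mathcal{S}_{\mathbf{h}}|=|\mathcal{S}^{(L)}_{\mathbf{h}}|$ by Definition~\ref{def:Signatures}, telescoping the resulting chain of inequalities produces the claimed product formula.

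There is essentially no hard step here: the nontrivial content has been absorbed into Lemma~\ref{lem:main} (Zaslavsky) and Theorem~\ref{thm:main} (the fine-grained recursion). The only conceptual issue is the deliberate weakening $|s_k|\le n_k$, which is precisely the looseness our framework later removes via dimension histograms, as explained in Section~\ref{sec:Motivation}; but for the present corollary this weakening is exactly what makes the induction go through.
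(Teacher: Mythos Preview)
Your proposal is correct and follows essentially the same route as the paper: weaken Theorem~\ref{thm:main} via $|s_k|\le n_k$ to decouple the inner sum, factor it out, and telescope using the anchor from Lemma~\ref{lem:main}. You have merely made the weakening step and the use of the anchor more explicit than the paper does.
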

\subsection{Results for the derivation of the framework}
The goal of this section is to prove the main results Theorem~\ref{thm:MainResultPhi} and Corollary~\ref{cor:MainResultMatrix} from Section~\ref{sec:MainResult}. We assume the definitions from Section~\ref{sec:Definitions}.
\label{app:secondBound}
\begin{lemma}
  \label{lem:preceqNorm}
  Assume for $v,w\in V$ that $v\preceq w$. Then $\|v\|_1\le \|w\|_1$.
  \begin{proof}
	By Definition~\ref{def:preceq}, $\|v\|_1=\sum_{j=0}^{\infty}v_j\le\sum_{j=0}^{\infty}w_j=\|w\|_1$.
  \end{proof}
\end{lemma}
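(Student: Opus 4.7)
The plan is essentially a one-line specialization of the defining quantifier in $\preceq$ to the index $J=0$. Recall from Definition~\ref{def:V} that every element of $V$ is a sequence of natural numbers with only finitely many nonzero entries, so for any $v\in V$ the sum $\sum_{j=0}^{\infty}v_j$ is really a finite sum and coincides with $\|v\|_1$. In particular the tail sums $\sum_{j=J}^{\infty}v_j$ appearing in Definition~\ref{def:preceq} are well-defined natural numbers for every $J\in\mathbb{N}$.

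First I would unpack the hypothesis $v\preceq w$ via Definition~\ref{def:preceq}, which asserts the family of inequalities $\sum_{j=J}^{\infty}v_j\le\sum_{j=J}^{\infty}w_j$ for all $J\in\mathbb{N}$. Then I would simply instantiate this family at $J=0$, which gives
\begin{equation*}
\|v\|_1=\sum_{j=0}^{\infty}v_j\;\le\;\sum_{j=0}^{\infty}w_j=\|w\|_1,
\end{equation*}
as desired. No induction, no combinatorics, no rearrangement is required.

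There is no real obstacle. The only thing worth noting is that the lemma is not a surprising refinement but rather a sanity check: it records that the partial order $\preceq$ on $V$ dominates the $L^1$-norm comparison, which is what allows the histogram estimate $\tilde{\mathcal{H}}^{(L)}(\mathcal{S}_\mathbf{h}^{(L)})\preceq\varphi_{n_L}^{(\gamma)}\circ\cdots\circ\varphi_{n_1}^{(\gamma)}({\rm e}_{n_0})$ derived in the main argument to be converted into the scalar bound on $|\mathcal{S}_\mathbf{h}|=\|\tilde{\mathcal{H}}^{(L)}(\mathcal{S}_\mathbf{h}^{(L)})\|_1$ stated in equation~\eqref{eq:mainBound}.
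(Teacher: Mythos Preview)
Your proof is correct and is exactly the paper's argument: specialize the tail-sum inequality in Definition~\ref{def:preceq} to $J=0$ to obtain $\|v\|_1=\sum_{j=0}^\infty v_j\le\sum_{j=0}^\infty w_j=\|w\|_1$. The additional remarks about well-definedness and context are fine but not needed for the proof itself.
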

\begin{lemma}
  \label{lem:partialOrder}
  The order relation $\preceq$ from Definition~\ref{def:preceq} is a partial order on $V$, i.e. for all $u,v,w\in V$, it holds that
  \begin{itemize}
	\item $u\preceq u$ (reflexivity)
	\item $u\preceq v \land v\preceq u\implies u=v$ (antisymmetry)
	\item $u\preceq v \land v\preceq w\implies u\preceq w$ (transitivity)
  \end{itemize}
  \begin{proof}
	This follows immediately from Definitions~\ref{def:V} and~\ref{def:preceq}.
  \end{proof}
\end{lemma}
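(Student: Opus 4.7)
The three properties all follow by unpacking Definition~\ref{def:preceq}, which reduces $v \preceq w$ to the family of tail-sum inequalities $T_J(v) \le T_J(w)$ for all $J \in \mathbb{N}$, where I abbreviate $T_J(x) := \sum_{j=J}^\infty x_j$. Note that these tail sums are all finite by Definition~\ref{def:V}, so no convergence issues arise.

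My plan is to handle the three properties in order. For reflexivity, the inequality $T_J(u) \le T_J(u)$ holds trivially for every $J \in \mathbb{N}$, so $u \preceq u$. For transitivity, assume $u \preceq v$ and $v \preceq w$; then for every $J \in \mathbb{N}$ I chain $T_J(u) \le T_J(v) \le T_J(w)$, which is exactly the condition for $u \preceq w$.

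The antisymmetry property is the only one requiring a small argument beyond directly chaining inequalities. Assuming $u \preceq v$ and $v \preceq u$, I would combine the two families of inequalities to conclude that $T_J(u) = T_J(v)$ for every $J \in \mathbb{N}$. Then, for each fixed index $J$, I would extract the individual entry via the telescoping identity
\begin{equation*}
  u_J \;=\; T_J(u) - T_{J+1}(u) \;=\; T_J(v) - T_{J+1}(v) \;=\; v_J,
\end{equation*}
which is well-defined thanks to the finiteness of the tail sums. Since this holds for every $J$, we conclude $u = v$ as elements of $V$.

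No step here is a real obstacle; the lemma is essentially a bookkeeping check that the defining condition of $\preceq$ behaves like the componentwise order after a triangular change of coordinates (the map $x \mapsto (T_J(x))_{J \in \mathbb{N}}$ is an order-preserving bijection between $V$ and a subset of $\mathbb{N}^{\mathbb{N}}$ equipped with the componentwise order). The only thing worth flagging explicitly in the write-up is the telescoping identity used in antisymmetry, since reflexivity and transitivity are immediate from the definition.
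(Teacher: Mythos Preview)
Your proof is correct and is precisely the routine verification the paper alludes to when it says the result ``follows immediately from Definitions~\ref{def:V} and~\ref{def:preceq}.'' You have simply made explicit the tail-sum reformulation and the telescoping step for antisymmetry that the paper leaves to the reader.
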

\begin{lemma}
  \label{lem:preceqMultipleSummands}
  For $m\in \mathbb{N}_+$ and $a_1,\dots,a_m, b_1,\dots,b_m\in V$ it holds that 
  \begin{equation*}
	\forall i\in \left\{ 1,\dots,m \right\}\;a_i\preceq b_i \implies \sum_{i=1}^{m} a_i\preceq\sum_{i=1}^{m}b_i
  \end{equation*}
  \begin{proof}
	The assumption $\forall i\in \left\{ 1,\dots,m \right\}\;a_i\preceq b_i$ implies
	\begin{equation*}
	\forall J\in \mathbb{N}_0\quad \sum_{j=J}^{\infty}\sum_{i=1}^{m}(a_i)_j=\sum_{i=1}^{m}\sum_{j=J}^{\infty}(a_i)_j\le\sum_{i=1}^{m}\sum_{j=J}^{\infty}(b_i)_j=\sum_{j=J}^{\infty}\sum_{i=1}^{m}(b_i)_j
	\end{equation*}by Definition~\ref{def:preceq}.
  \end{proof}
\end{lemma}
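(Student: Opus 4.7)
The plan is to unpack Definition~\ref{def:preceq} directly, since the statement is essentially a compatibility of the partial order $\preceq$ with finite sums in $V$, and both the left-hand and right-hand sides of the conclusion are themselves well-defined elements of $V$ (each $a_i$ and $b_i$ has only finitely many nonzero entries, and $m$ is finite, so the pointwise sums $\sum_{i=1}^m a_i$ and $\sum_{i=1}^m b_i$ are also in $V$).

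First, I would fix an arbitrary threshold $J\in\mathbb{N}$ and invoke the definition of $\preceq$ on each hypothesis $a_i\preceq b_i$ to obtain
\begin{equation*}
  \sum_{j=J}^{\infty}(a_i)_j \;\le\; \sum_{j=J}^{\infty}(b_i)_j \qquad \text{for every } i\in\{1,\dots,m\}.
\end{equation*}
These are $m$ separate scalar inequalities of nonnegative reals. Summing them over $i\in\{1,\dots,m\}$ yields the aggregated inequality $\sum_{i=1}^m \sum_{j=J}^\infty (a_i)_j \le \sum_{i=1}^m \sum_{j=J}^\infty (b_i)_j$. The key identity is then Fubini/Tonelli for nonnegative terms (which is trivial in our setting because each inner series has only finitely many nonzero terms, so exchanging the order of summation is just a rearrangement of a finite sum):
\begin{equation*}
  \sum_{i=1}^m \sum_{j=J}^\infty (a_i)_j \;=\; \sum_{j=J}^\infty \sum_{i=1}^m (a_i)_j \;=\; \sum_{j=J}^\infty \Bigl(\sum_{i=1}^m a_i\Bigr)_j,
\end{equation*}
and the analogous identity on the $b$ side. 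Combining these three steps gives $\sum_{j=J}^\infty (\sum_{i=1}^m a_i)_j \le \sum_{j=J}^\infty (\sum_{i=1}^m b_i)_j$ for every $J\in\mathbb{N}$, which is exactly the statement $\sum_{i=1}^m a_i \preceq \sum_{i=1}^m b_i$ by Definition~\ref{def:preceq}.

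There is no real obstacle here; the only thing to be careful about is that all the sums involved are finite, so no convergence issues arise and the interchange of summations is unconditional. The argument is essentially a one-line bookkeeping calculation, and I would present it as such.
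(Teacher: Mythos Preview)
Your proof is correct and is essentially identical to the paper's own argument: both unpack Definition~\ref{def:preceq}, sum the $m$ scalar inequalities over $i$, and swap the order of summation to recover the tail-sum condition for $\sum_i a_i$ and $\sum_i b_i$. The only difference is that you spell out the justification for the interchange, which the paper leaves implicit.
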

The following two lemmas are an immediate consequence of the clipping function of Definition~\ref{def:alphafunction}.
\begin{lemma}
  \label{lem:alphapreceq}
  For all $i^*\in\mathbb{N}$ and all $v\in V$, it holds that $\textnormal{cl}_{i^*}(v)\preceq v$.
\end{lemma}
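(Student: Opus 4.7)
The plan is to verify the defining condition of $\preceq$ from Definition~\ref{def:preceq} directly: we must show that for every $J\in\mathbb{N}$, $\sum_{j=J}^{\infty}\textnormal{cl}_{i^*}(v)_j\le\sum_{j=J}^{\infty}v_j$. Since $\textnormal{cl}_{i^*}$ is defined piecewise in terms of whether the index is less than, equal to, or greater than $i^*$, the natural approach is a case split on the position of $J$ relative to $i^*$.

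In the case $J>i^*$, every entry of $\textnormal{cl}_{i^*}(v)$ with index at least $J$ is zero by Definition~\ref{def:alphafunction}, so the left-hand tail sum equals $0$ and the inequality is immediate from the fact that $V\subset\mathbb{N}^{\mathbb{N}}$ has nonnegative entries. In the case $J=i^*$, the only non-vanishing contribution on the left is the single term $\textnormal{cl}_{i^*}(v)_{i^*}=\sum_{j=i^*}^{\infty}v_j$, which coincides exactly with the right-hand side, so equality holds. In the case $J<i^*$, the first $i^*-J$ entries of $\textnormal{cl}_{i^*}(v)$ at indices $J,\dots,i^*-1$ agree with those of $v$, while the single remaining non-zero term $\textnormal{cl}_{i^*}(v)_{i^*}=\sum_{j=i^*}^{\infty}v_j$ reassembles the remaining tail of $v$; telescoping these summands once more yields equality with $\sum_{j=J}^{\infty}v_j$.

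Since the desired inequality (in fact equality for $J\le i^*$) holds in all three cases, the claim follows from Definition~\ref{def:preceq}. There is no genuine obstacle here: the argument is a bookkeeping exercise that unpacks both definitions. The only thing to be careful about is that the infinite sums are well-defined and finite, but this is guaranteed by $v\in V$, which ensures $\|v\|_1<\infty$ and hence absolute convergence of each tail sum of $v$ as well as of $\textnormal{cl}_{i^*}(v)$, whose $L^1$-norm equals that of $v$.
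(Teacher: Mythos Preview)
Your proof is correct and is exactly the natural unpacking of the paper's own treatment: the paper simply states that this lemma is ``an immediate consequence of the clipping function of Definition~\ref{def:alphafunction}'' and gives no further argument. Your case split on $J$ relative to $i^*$ is the obvious way to make that immediacy explicit, and your observation that equality actually holds for $J\le i^*$ is accurate.
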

\begin{lemma}
  \label{lem:keepRelation}
  Let $v_1,v_2\in V$ and $i^*\in\mathbb{N}$. It holds that $v_1\preceq v_2\implies \textnormal{cl}_{i^*}(v_1)\preceq\textnormal{cl}_{i^*}(v_2)$.
\end{lemma}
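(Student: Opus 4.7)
The plan is to unfold both conditions directly from Definition~\ref{def:preceq} and Definition~\ref{def:alphafunction}, and verify the defining inequality of $\preceq$ by a case split on $J$ relative to $i^{*}$. Concretely, I need to show that for every $J\in\mathbb{N}$,
\begin{equation*}
  \sum_{j=J}^{\infty}\textnormal{cl}_{i^{*}}(v_{1})_{j}\le\sum_{j=J}^{\infty}\textnormal{cl}_{i^{*}}(v_{2})_{j},
\end{equation*}
assuming the analogous inequality for $v_{1},v_{2}$ themselves.

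First I would observe that the tail sums of a clipped histogram admit a very simple description. For $J>i^{*}$ both sides are zero, since $\textnormal{cl}_{i^{*}}(v)_{j}=0$ for $j>i^{*}$. For $J=i^{*}$ only the index $j=i^{*}$ contributes and, by the middle branch of Definition~\ref{def:alphafunction},
\begin{equation*}
  \sum_{j=i^{*}}^{\infty}\textnormal{cl}_{i^{*}}(v)_{j}=\textnormal{cl}_{i^{*}}(v)_{i^{*}}=\sum_{j=i^{*}}^{\infty}v_{j}.
\end{equation*}
For $J<i^{*}$, I split the tail into the unchanged part and the clipped entry:
\begin{equation*}
  \sum_{j=J}^{\infty}\textnormal{cl}_{i^{*}}(v)_{j}=\sum_{j=J}^{i^{*}-1}v_{j}+\sum_{j=i^{*}}^{\infty}v_{j}=\sum_{j=J}^{\infty}v_{j}.
\end{equation*}
In all three cases, the tail sum of $\textnormal{cl}_{i^{*}}(v)$ from index $J$ equals the tail sum of $v$ from index $\max(J,i^{*})$.

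With this identity in hand, the conclusion is immediate: applying the hypothesis $v_{1}\preceq v_{2}$ with the index $\max(J,i^{*})\in\mathbb{N}$ yields
\begin{equation*}
  \sum_{j=J}^{\infty}\textnormal{cl}_{i^{*}}(v_{1})_{j}=\sum_{j=\max(J,i^{*})}^{\infty}(v_{1})_{j}\le\sum_{j=\max(J,i^{*})}^{\infty}(v_{2})_{j}=\sum_{j=J}^{\infty}\textnormal{cl}_{i^{*}}(v_{2})_{j},
\end{equation*}
and since this holds for every $J\in\mathbb{N}$, Definition~\ref{def:preceq} gives $\textnormal{cl}_{i^{*}}(v_{1})\preceq\textnormal{cl}_{i^{*}}(v_{2})$. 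There is no real obstacle here: the lemma is a bookkeeping statement, and the only mild subtlety is noticing that clipping preserves every tail sum starting at or below $i^{*}$ (so nothing is lost) while collapsing tails above $i^{*}$ to zero on both sides simultaneously. The proof is therefore essentially the one sentence: tails of $\textnormal{cl}_{i^{*}}(v)$ are tails of $v$ re-indexed to $\max(J,i^{*})$, and the hypothesis is applied at that index.
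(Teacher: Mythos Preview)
Your case-by-case computation of the tail sums is correct, and from it the lemma follows immediately; this is exactly the immediate verification the paper alludes to. However, the summary sentence and the final display contain a slip: the identity
\[
  \sum_{j=J}^{\infty}\textnormal{cl}_{i^{*}}(v)_{j}=\sum_{j=\max(J,i^{*})}^{\infty}v_{j}
\]
is false in general. For $J<i^{*}$ your own computation gives $\sum_{j\ge J}v_{j}$, not $\sum_{j\ge i^{*}}v_{j}$; for $J>i^{*}$ the left side is $0$, not $\sum_{j\ge J}v_{j}$. There is no single $\max$/$\min$ formula here; the correct uniform statement is simply that the tail sum of $\textnormal{cl}_{i^{*}}(v)$ from $J$ equals $\sum_{j\ge J}v_{j}$ when $J\le i^{*}$ and equals $0$ when $J>i^{*}$.

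This does not damage the argument: each of your three cases already yields the required inequality directly (trivial for $J>i^{*}$, and for $J\le i^{*}$ apply the hypothesis at the index $J$). Just drop the $\max(J,i^{*})$ consolidation and conclude from the case split itself.
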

\begin{lemma}
  \label{lem:2totheN}
  For $n,n'\in\mathbb{N}_{+}$ with $n\ge n'$, it holds holds that 
  \begin{align*}
	\max\left\{ \mathcal{H}_{n'}\left( \mathcal{S}_h \right)\mid h\in\textnormal{RL}(n,n') \right\}=\sum_{i=0}^{n'}{n'\choose i}{\rm e}_i.
  \end{align*}
  \begin{proof}Let $n,n'\in\mathbb{N}_{+}$. For $n\ge n'$, there exist $h\in\textnormal{RL}(n,n')$ such that all possible signatures are attained, i.e. $\mathcal{S}_{h}=\left\{ 0,1 \right\}^{n'}$. 
  \end{proof}
\end{lemma}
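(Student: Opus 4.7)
The plan is to combine a simple combinatorial upper bound on each activation histogram with an explicit construction of an $h^* \in \textnormal{RL}(n,n')$ that realizes that bound. Since the set $\{\mathcal{H}_{n'}(\mathcal{S}_h)\mid h\in\textnormal{RL}(n,n')\}$ is finite (every $\mathcal{S}_h$ is a subset of the finite set $\{0,1\}^{n'}$), the discussion following Definition~\ref{def:maxpreceq} characterizes $\max$ as the (unique) smallest dominating element; so as soon as one element of this set dominates all the others in $\preceq$, it must equal the maximum. This reduces the claim to verifying two things: (i) the histogram $\sum_{i=0}^{n'}\binom{n'}{i}{\rm e}_i$ dominates every $\mathcal{H}_{n'}(\mathcal{S}_h)$, and (ii) it is attained for some $h^*$.

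For (i), fix an arbitrary $h\in\textnormal{RL}(n,n')$. By Definition~\ref{def:Hist},
\begin{equation*}
\mathcal{H}_{n'}(\mathcal{S}_h)_j \;=\; |\{s\in\mathcal{S}_h : |s|=j\}| \;\le\; |\{s\in\{0,1\}^{n'} : |s|=j\}| \;=\; \binom{n'}{j}
\end{equation*}
for every $j\in\mathbb{N}$, since $\mathcal{S}_h\subseteq\{0,1\}^{n'}$. This entrywise inequality immediately yields $\sum_{j=J}^{\infty}\mathcal{H}_{n'}(\mathcal{S}_h)_j \le \sum_{j=J}^{\infty}\binom{n'}{j}$ for all $J\in\mathbb{N}$, i.e.\ $\mathcal{H}_{n'}(\mathcal{S}_h)\preceq\sum_{i=0}^{n'}\binom{n'}{i}{\rm e}_i$ in the sense of Definition~\ref{def:preceq}.

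For (ii), I will construct $h^*\in\textnormal{RL}(n,n')$ with $\mathcal{S}_{h^*}=\{0,1\}^{n'}$, which forces $\mathcal{H}_{n'}(\mathcal{S}_{h^*})=\sum_{i=0}^{n'}\binom{n'}{i}{\rm e}_i$ by direct counting. Using $n\ge n'$, take the weight matrix $W^{(h^*)}\in\mathbb{R}^{n'\times n}$ whose $i$-th row is the standard basis vector $e_i\in\mathbb{R}^n$ for $i\in\{1,\dots,n'\}$, and bias $b^{(h^*)}=0$. Then $S_{h^*}(x)_i=1$ iff $x_i>0$, so for any target $s\in\{0,1\}^{n'}$ the point $x\in\mathbb{R}^n$ with $x_i=1$ if $s_i=1$, $x_i=-1$ if $s_i=0$ (and arbitrary coordinates for $i>n'$) satisfies $S_{h^*}(x)=s$. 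Hence every signature is attained.

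Combining (i) and (ii) with the characterization of $\max$ recalled above completes the proof. I do not foresee any serious obstacle: the only things to be careful about are the index shift between the entrywise inequality and Definition~\ref{def:preceq} (handled by summing tails), and that the constructed $h^*$ is a legitimate element of $\textnormal{RL}(n,n')$ in the sense of equation~\eqref{eq:RL}, which is immediate from the explicit $W^{(h^*)}$ and $b^{(h^*)}$.
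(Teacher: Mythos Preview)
Your proposal is correct and follows essentially the same idea as the paper's proof: both hinge on the existence of some $h\in\textnormal{RL}(n,n')$ with $\mathcal{S}_h=\{0,1\}^{n'}$, from which the claimed histogram is immediate. The paper's proof is a one-line sketch that merely asserts this existence, whereas you flesh out the argument by (a) giving an explicit construction of such an $h^*$ and (b) spelling out why the attained histogram is the $\preceq$-maximum via the entrywise bound and the characterization after Definition~\ref{def:maxpreceq}; these are exactly the details the paper leaves implicit.
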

\begin{lemma}
  \label{lem:gammaMin}
Assume that  $\gamma\in\Gamma$ as in equation~\eqref{eq:Gamma}. Then
\begin{equation*}
  \forall n,n'\in\mathbb{N}_{+}\quad \max\left\{ \mathcal{H}_{n'}(\mathcal{S}_h)\mid h\in \textnormal{RL}(n,n') \right\}\preceq \gamma_{\min(n',n),n'}.
\end{equation*}
\begin{proof}
  We only need to consider the case where $n\ge n'$. In this case, Lemma~\ref{lem:2totheN}, shows
  \begin{equation*}
	\max\left\{ \mathcal{H}_{n'}(\mathcal{S}_h)\mid h\in \textnormal{RL}(n,n') \right\}=\sum_{i=0}^{n'}{n'\choose i}{\rm e}_i=\max\left\{ \mathcal{H}_{n}(\mathcal{S}_h)\mid h\in \textnormal{RL}(n',n') \right\}\preceq \gamma_{n',n'}.\qedhere
  \end{equation*}
\end{proof}
\end{lemma}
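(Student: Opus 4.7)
The plan is to reduce the claim to the first criterion of the bound condition (Definition~\ref{def:boundCondition}) by splitting on whether $n \le n'$ or $n > n'$. In the easy sub-case $n \le n'$ we have $\min(n',n) = n \in \{0,\dots,n'\}$, so $\gamma_{\min(n',n),n'}$ is exactly the element $\gamma_{n,n'}$ supplied by $\gamma \in \Gamma$, and the required inequality is literally the first property of Definition~\ref{def:boundCondition}. No further argument is needed in this regime.

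The substantive sub-case is $n > n'$, where $\min(n',n) = n'$ and the goal is
\begin{equation*}
  \max\{\mathcal{H}_{n'}(\mathcal{S}_h) \mid h \in \textnormal{RL}(n,n')\} \preceq \gamma_{n',n'}.
\end{equation*}
Here I would appeal to Lemma~\ref{lem:2totheN}, which says that once $n \ge n'$ the left-hand maximum equals $\sum_{i=0}^{n'}\binom{n'}{i}{\rm e}_i$; in particular it does not depend on $n$ as long as $n \ge n'$, so it coincides with $\max\{\mathcal{H}_{n'}(\mathcal{S}_h) \mid h \in \textnormal{RL}(n',n')\}$. Applying the first property of the bound condition at input dimension $n'$ then gives $\preceq \gamma_{n',n'}$, which closes the case.

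The main difficulty is entirely packaged inside Lemma~\ref{lem:2totheN}: what really needs to be known is that for $n \ge n'$ every signature in $\{0,1\}^{n'}$ is realized by some $h \in \textnormal{RL}(n,n')$, a geometric fact about placing $n'$ hyperplanes in general position in $\mathbb{R}^n$. Once that lemma is in hand, the present statement is pure bookkeeping, and the only thing to double-check is that the case split aligns $\min(n',n)$ with the correct index of $\gamma$ in each regime. A thin alternative, avoiding Lemma~\ref{lem:2totheN}, would be to observe directly that any $h \in \textnormal{RL}(n,n')$ with $n > n'$ can be obtained by precomposing some $h' \in \textnormal{RL}(n',n')$ with a surjective affine map $\mathbb{R}^n \to \mathbb{R}^{n'}$, hence $\mathcal{S}_h \subseteq \mathcal{S}_{h'}$ and the maximum over $\textnormal{RL}(n,n')$ is dominated by the maximum over $\textnormal{RL}(n',n')$; this would reach the same conclusion without using that all signatures are attained.
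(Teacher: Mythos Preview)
Your proposal is correct and follows essentially the same route as the paper: dispose of the case $n\le n'$ by the first bound-condition property, and for $n\ge n'$ invoke Lemma~\ref{lem:2totheN} to identify the maximum with that over $\textnormal{RL}(n',n')$, then apply the bound condition at input dimension $n'$. Your explicit treatment of the easy case and the alternative factorization remark are add-ons, but the core argument matches the paper's proof.
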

\begin{lemma}
  \label{lem:monotonicityPhi}
  For $\gamma\in\Gamma$, $n'\in\mathbb{N}_+$ and $v_1,v_2\in V$ the following monotonicity holds:
  \begin{equation*}
	v_1\preceq v_2\implies \varphi^{(\gamma)}_{n'}(v_1)\preceq \varphi^{(\gamma)}_{n'}(v_2)
  \end{equation*}
  \begin{proof}
	This follows from Definition~\ref{def:PhiFunc}, the second property of the bound condition of Definition~\ref{def:boundCondition}, and Lemmas~\ref{lem:keepRelation} and \ref{lem:preceqMultipleSummands}.
  \end{proof}
\end{lemma}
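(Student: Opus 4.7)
The plan is to reduce the monotonicity of $\varphi^{(\gamma)}_{n'}$ to two ingredients: first, monotonicity in $n$ of the ``column vectors'' $c_n := \textnormal{cl}_{\min(n,n')}(\gamma_{\min(n,n'),n'})$ appearing in Definition \ref{def:PhiFunc}; second, a pairing of multiset elements that converts a comparison of non-negative integer combinations of the $c_n$'s into a term-by-term application of Lemma \ref{lem:preceqMultipleSummands}.

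First I would verify that $n \le \tilde n$ implies $c_n \preceq c_{\tilde n}$. In the regime $n \le \tilde n \le n'$, the second property of the bound condition gives $\gamma_{n,n'} \preceq \gamma_{\tilde n,n'}$, and Lemma \ref{lem:keepRelation} upgrades this to $\textnormal{cl}_n(\gamma_{n,n'}) \preceq \textnormal{cl}_n(\gamma_{\tilde n,n'})$. It then remains to raise the clipping threshold on the right from $n$ to $\tilde n$, for which I would invoke the elementary fact that $\textnormal{cl}_n(v) \preceq \textnormal{cl}_{\tilde n}(v)$ for any $v \in V$ whenever $n \le \tilde n$; this is immediate from Definition \ref{def:alphafunction} by splitting the partial-sum check into the cases $J \le n$, $n < J \le \tilde n$, and $J > \tilde n$. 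The remaining cases $n > n'$ or $\tilde n > n'$ collapse to this one because $\varphi^{(\gamma)}_{n'}$ uses $\min(n,n')$ throughout.

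Next I would deduce $\varphi^{(\gamma)}_{n'}(v_1) \preceq \varphi^{(\gamma)}_{n'}(v_2)$ from monotonicity of $(c_n)_n$. Writing each $v_\ell$ in multiset form $v_\ell = \sum_{k=1}^{K_\ell} {\rm e}_{m^{(\ell)}_k}$ with $m^{(\ell)}_1 \le \dots \le m^{(\ell)}_{K_\ell}$, the observation that $\sum_{j \ge J}(v_\ell)_j$ counts multiset entries at least $J$ converts $v_1 \preceq v_2$ into the equivalent conditions $K_1 \le K_2$ and $m^{(1)}_k \le m^{(2)}_{k + (K_2 - K_1)}$ for $k = 1, \dots, K_1$. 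By the first step, $c_{m^{(1)}_k} \preceq c_{m^{(2)}_{k + (K_2 - K_1)}}$ for each such $k$, so Lemma \ref{lem:preceqMultipleSummands} yields
\begin{equation*}
\varphi^{(\gamma)}_{n'}(v_1) = \sum_{k=1}^{K_1} c_{m^{(1)}_k} \preceq \sum_{k=1}^{K_1} c_{m^{(2)}_{k + (K_2 - K_1)}}.
\end{equation*}
The right side differs from $\varphi^{(\gamma)}_{n'}(v_2) = \sum_{k=1}^{K_2} c_{m^{(2)}_k}$ only by the addition of the non-negative histograms $c_{m^{(2)}_1}, \dots, c_{m^{(2)}_{K_2 - K_1}}$, and a second application of Lemma \ref{lem:preceqMultipleSummands} (comparing the chain pairwise with zeros in the added slots) closes the inequality.

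The only part requiring careful bookkeeping is the multiset characterization of $\preceq$, but this is a straightforward counting exercise and I expect it to pose no genuine obstacle; the substantive step is the monotonicity of $c_n$, which is where the second half of the bound condition is really needed.
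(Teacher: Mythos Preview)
Your proposal is correct and follows essentially the same approach as the paper, which cites exactly the same ingredients (Definition~\ref{def:PhiFunc}, the second bound-condition property, and Lemmas~\ref{lem:keepRelation} and~\ref{lem:preceqMultipleSummands}) without spelling out the argument. Your contribution is to make explicit the two steps the paper leaves implicit---the monotonicity of $c_n=\textnormal{cl}_{\min(n,n')}(\gamma_{\min(n,n'),n'})$ in $n$ (which indeed needs the small auxiliary fact $\textnormal{cl}_n(v)\preceq\textnormal{cl}_{\tilde n}(v)$ for $n\le\tilde n$, in the spirit of Lemma~\ref{lem:alphapreceq}) and the multiset-pairing reduction that lets Lemma~\ref{lem:preceqMultipleSummands} apply term by term.
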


\begin{lemma}
  \label{lem:boundKappaByGamma}
  Assume that  $\gamma\in\Gamma$ as in equation~\eqref{eq:Gamma} and let $l\in\left\{ 1,\dots,L-1 \right\}$, $(s_1^*,\dots,s_{l-1}^*)\in\mathcal{S}_{\mathbf{h}}^{(l-1)}$. Then
\begin{equation*}
  \mathcal{H}_{n_l}\left( \left\{ s_l\in\left\{ 0,1 \right\}^{n_l}\mid (s_1^*,\dots,s_{l-1}^*,s_l)\in\mathcal{S}_{\mathbf{h}}^{(l)} \right\} \right)\preceq \gamma_{\min\left( n_0,|s_1^*|,\dots,|s_{l-1}^*|,n_l \right),n_l}.
\end{equation*}
\begin{proof} Similarly to the proof of Lemma~\ref{lem:UpperboundLemma}, for $\mathbf{h}^{(l-1)}=(h_1,\dots,h_{l-1})$, the function
	\begin{equation*}
	  f:
	  \begin{cases}
		R_{\mathbf{h}^{(l-1)}}((s^*_1,\dots,s^*_{l-1}))&\to \mathbb{R}^{n_{l}}\\
		x&\mapsto h_{l-1}\circ\dots\circ h_1(x)
	  \end{cases}
	\end{equation*} is affine linear and if we define $\tilde f$ to be the affine linear extension of $f$ to $\mathbb{R}^{n_0}$, it holds that 
  \begin{equation*}
	\mathcal{H}_{n_l}\left( \left\{ s_l\in\left\{ 0,1 \right\}^{n_l}\mid (s_1^*,\dots,s_{l-1}^*,s_l)\in\mathcal{S}_{\mathbf{h}}^{(l)} \right\} \right) \preceq \mathcal{H}_{n_l}\left( \left\{ S_{h_l}(\tilde f(x))|x\in\mathbb{R}^{n_0} \right\} \right).\\
\end{equation*}Since $\tilde f$ is affine linear, there exists a bijective affine linear map $\Phi:\tilde f(\mathbb{R}^{n_0})\to\mathbb{R}^{\textnormal{rank}(\tilde f)}$. This implies
  \begin{align*}
  &\mathcal{H}_{n_l}\left( \left\{ s_l\in\left\{ 0,1 \right\}^{n_l}\mid (s_1^*,\dots,s_{l-1}^*,s_l)\in\mathcal{S}_{\mathbf{h}}^{(l)} \right\} \right) \preceq \mathcal{H}_{n_l}\left( \left\{ S_{h_l}(\tilde f(x))|x\in\mathbb{R}^{n_0} \right\} \right)\\
  =\;&\mathcal{H}_{n_l}\left( \left\{ S_{h_l}\left( \Phi^{-1}\circ\Phi\circ \tilde f(x)  \right)\mid x\in\mathbb{R}^{n_0}\right\} \right) =\mathcal{H}_{n_l}\left( \left\{ S_{h_l\circ \Phi^{-1}}\left(z \right)\mid z\in\mathbb{R}^{\textnormal{rank}(\tilde f)}\right\} \right)\\
  =\;&\mathcal{H}_{n_l}\left( \mathcal{S}_{h_l\circ \Phi^{-1}} \right)\preceq\gamma_{\min\left( \textnormal{rank}(\tilde f),n_l \right),n_l}
\end{align*} by the first property of the bound condition from Definition~\ref{def:boundCondition} and by Lemma~\ref{lem:gammaMin} because $h_l\circ\Phi^{-1}\in\textnormal{RL}(\textnormal{rank}(\tilde f),n_l)$ by Lemma~\ref{lem:CompositionSignature}. The statement follows from $\gamma_{\min\left( \textnormal{rank}(\tilde f),n_l \right),n_l}\preceq\gamma_{\min\left( n_0,|s_1^*|,\dots,|s_{l-1}^*|,n_l \right),n_l}$ by the second property of the bound condition since $\textnormal{rank}(\tilde f)$ is bounded by $ \min(n_0,|s^*_1|,\dots,|s^*_{l-1}|)$.
\end{proof}
\end{lemma}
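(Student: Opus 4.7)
I would mirror the strategy of Lemma~\ref{lem:UpperboundLemma}, upgrading its cardinality estimate to a histogram estimate in the order $\preceq$. The first step is a reduction to a purely affine linear setting. By Lemma~\ref{lem:AffineRepMulti}, on the region $R_{\mathbf{h}^{(l-1)}}((s_1^*,\ldots,s_{l-1}^*))$ with $\mathbf{h}^{(l-1)}=(h_1,\ldots,h_{l-1})$ the composition $h_{l-1}\circ\cdots\circ h_1$ agrees with an affine linear map; extend it via Definition~\ref{def:AffineExtMulti} to a global affine linear $\tilde f:\mathbb{R}^{n_0}\to\mathbb{R}^{n_{l-1}}$. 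Because $S_{h_l}$ is well defined on all of $\mathbb{R}^{n_{l-1}}$, enlarging the domain from the region to all of $\mathbb{R}^{n_0}$ via $\tilde f$ can only add attained signatures of $h_l$, so the target set is contained in $\{S_{h_l}(\tilde f(x))\mid x\in\mathbb{R}^{n_0}\}$, and the desired $\preceq$-inequality reduces, by monotonicity of $\mathcal{H}_{n_l}$ under set inclusion, to bounding $\mathcal{H}_{n_l}$ of this latter set.

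Next I would perform a change of variables that turns $h_l\circ\tilde f$ into a stand-alone ReLU layer function on a full Euclidean space. Set $r:=\mathrm{rank}(\tilde f)$ and pick any affine linear bijection $\Phi:\tilde f(\mathbb{R}^{n_0})\to\mathbb{R}^r$. Since $\Phi\circ\tilde f$ maps $\mathbb{R}^{n_0}$ onto $\mathbb{R}^r$, Lemma~\ref{lem:CompositionSignature} gives
\begin{equation*}
  \{S_{h_l}(\tilde f(x))\mid x\in\mathbb{R}^{n_0}\}=\{S_{h_l\circ\Phi^{-1}}(z)\mid z\in\mathbb{R}^r\}=\mathcal{S}_{h_l\circ\Phi^{-1}},
\end{equation*}
with $h_l\circ\Phi^{-1}\in\textnormal{RL}(r,n_l)$. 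The first criterion of the bound condition together with Lemma~\ref{lem:gammaMin} (which handles the case $r>n_l$) then yield $\mathcal{H}_{n_l}(\mathcal{S}_{h_l\circ\Phi^{-1}})\preceq\gamma_{\min(r,n_l),n_l}$.

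To finish I would bound $r$ in terms of $n_0,|s_1^*|,\ldots,|s_{l-1}^*|$. Restricted to its signature region, each layer $h_i$ is the affine linear map $z\mapsto\operatorname{diag}(s_i^*)(W^{(h_i)}z+b^{(h_i)})$ from Lemma~\ref{lem:AffineRep}, whose image lies in the $|s_i^*|$-dimensional coordinate subspace indexed by the active positions of $s_i^*$; iterating down the chain together with the trivial bound $r\le n_0$ gives $r\le\min(n_0,|s_1^*|,\ldots,|s_{l-1}^*|)$. Therefore $\min(r,n_l)\le\min(n_0,|s_1^*|,\ldots,|s_{l-1}^*|,n_l)$, and the second property of the bound condition (monotonicity of $\gamma$ in its first index) upgrades the bound from the previous paragraph to $\gamma_{\min(n_0,|s_1^*|,\ldots,|s_{l-1}^*|,n_l),n_l}$, as desired. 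The main obstacle I anticipate is precisely this rank bookkeeping: one must verify cleanly that the diagonal projection $\operatorname{diag}(s_i^*)$ knocks the rank of each restricted layer down to at most $|s_i^*|$, and that these individual rank bounds survive composition into the rank of $\tilde f$, so that the minimum form emerges rather than merely a sum or product of partial bounds.
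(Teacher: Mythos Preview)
Your proposal is correct and follows essentially the same route as the paper: affine extension $\tilde f$, set-inclusion monotonicity of $\mathcal{H}_{n_l}$, the affine bijection $\Phi$ onto $\mathbb{R}^{r}$ to rewrite the signature set as $\mathcal{S}_{h_l\circ\Phi^{-1}}$, then the bound condition plus Lemma~\ref{lem:gammaMin}, and finally the rank estimate $r\le\min(n_0,|s_1^*|,\ldots,|s_{l-1}^*|)$ together with monotonicity of $\gamma$ in its first index. Your explicit justification of the rank bound via the $\operatorname{diag}(s_i^*)$ projections is a bit more detailed than the paper's one-line remark, but the argument is the same.
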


 \begin{lemma} Assume that  $\gamma\in\Gamma$ as in equation~\eqref{eq:Gamma}.
   \label{lem:main2}
   It holds that
   \begin{equation*}
	 \tilde{\mathcal{H}}^{(1)}\left( \mathcal{S}_{\mathbf{h}}^{(1)} \right)\preceq\varphi^{(\gamma)}_{n_1}({\rm e}_{n_0}).
   \end{equation*}
   \begin{proof}
	 Note that $h_1\in\textnormal{RL}(n_0,n_1)$ such that Lemmas~\ref{lem:keepRelation} and~\ref{lem:gammaMin} imply 
	 \begin{equation*}
	   \tilde{\mathcal{H}}^{(1)}\left( \mathcal{S}_{\mathbf{h}}^{(1)} \right)=\textnormal{cl}_{n_0}\left( \mathcal{H}_{n_1}\left( \mathcal{S}_{h_1} \right) \right)=\textnormal{cl}_{\min(n_0,n_1)}\left( \mathcal{H}_{n_1}\left( \mathcal{S}_{h_1} \right) \right)\preceq\textnormal{cl}_{n_0}\left( \gamma_{\min(n_0,n_1),n_1} \right)=\varphi_{n_1}\left( {\rm e}_{n_0} \right)\qedhere
	 \end{equation*}
   \end{proof}
 \end{lemma}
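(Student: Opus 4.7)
The plan is to unwind the definitions and observe that at level one, the dimension histogram is literally a clipped version of the activation histogram of the first layer, after which the bound condition and monotonicity of clipping finish the argument.

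First I would observe that $\mathcal{S}_{\mathbf{h}}^{(1)}$ (as defined in Definition~\ref{def:Signatures}) coincides with the set of attained signatures $\mathcal{S}_{h_1}\subset\{0,1\}^{n_1}$ of the single layer $h_1\in\textnormal{RL}(n_0,n_1)$, because the first component of a multi-signature is just $S_{h_1}(x)$. Consequently, by unpacking the definitions of $\tilde{\mathcal{H}}^{(1)}$, $\mathcal{H}_{n_1}$, and $\textnormal{cl}_{n_0}$, the map $s_1\mapsto\min(n_0,|s_1|)$ that determines $\tilde{\mathcal{H}}^{(1)}(\mathcal{S}_{\mathbf{h}}^{(1)})$ is exactly what produces the clipping of the histogram of $|s_1|$ at index $n_0$. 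So I would establish the identity
\begin{equation*}
  \tilde{\mathcal{H}}^{(1)}\bigl(\mathcal{S}_{\mathbf{h}}^{(1)}\bigr)=\textnormal{cl}_{n_0}\bigl(\mathcal{H}_{n_1}(\mathcal{S}_{h_1})\bigr).
\end{equation*}

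Next I would appeal to Lemma~\ref{lem:gammaMin} applied to $h_1\in\textnormal{RL}(n_0,n_1)$ to obtain $\mathcal{H}_{n_1}(\mathcal{S}_{h_1})\preceq\gamma_{\min(n_0,n_1),n_1}$; this is the step that uses the bound condition and handles both the case $n_0\le n_1$ (direct from the first property of Definition~\ref{def:boundCondition}) and the case $n_0>n_1$ (where Lemma~\ref{lem:gammaMin} reduces us back to $\gamma_{n_1,n_1}$). Then, since $\preceq$ is preserved under clipping by Lemma~\ref{lem:keepRelation}, applying $\textnormal{cl}_{n_0}$ to both sides gives
\begin{equation*}
  \tilde{\mathcal{H}}^{(1)}\bigl(\mathcal{S}_{\mathbf{h}}^{(1)}\bigr)=\textnormal{cl}_{n_0}\bigl(\mathcal{H}_{n_1}(\mathcal{S}_{h_1})\bigr)\preceq\textnormal{cl}_{n_0}\bigl(\gamma_{\min(n_0,n_1),n_1}\bigr).
\end{equation*}

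Finally, I would match the right-hand side with $\varphi^{(\gamma)}_{n_1}({\rm e}_{n_0})$. By Definition~\ref{def:PhiFunc}, $\varphi^{(\gamma)}_{n_1}({\rm e}_{n_0})=\textnormal{cl}_{\min(n_0,n_1)}(\gamma_{\min(n_0,n_1),n_1})$, so I only need $\textnormal{cl}_{n_0}(\gamma_{\min(n_0,n_1),n_1})=\textnormal{cl}_{\min(n_0,n_1)}(\gamma_{\min(n_0,n_1),n_1})$. This is immediate when $n_0\le n_1$, and when $n_0>n_1$ the histogram $\gamma_{n_1,n_1}$ is supported on indices $\{0,\dots,n_1\}$, so clipping at the larger threshold $n_0$ leaves it unchanged and thus equals the clipping at $n_1$. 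The main (minor) obstacle is exactly this bookkeeping of the two regimes $n_0\le n_1$ versus $n_0>n_1$; everything else is a mechanical application of Lemmas~\ref{lem:gammaMin} and~\ref{lem:keepRelation} together with the definitions.
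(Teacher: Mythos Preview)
Your proposal is correct and follows essentially the same route as the paper's proof: identify $\tilde{\mathcal{H}}^{(1)}(\mathcal{S}_{\mathbf{h}}^{(1)})=\textnormal{cl}_{n_0}(\mathcal{H}_{n_1}(\mathcal{S}_{h_1}))$, apply Lemma~\ref{lem:gammaMin} and Lemma~\ref{lem:keepRelation}, and match with the definition of $\varphi^{(\gamma)}_{n_1}({\rm e}_{n_0})$. One small technical slip: in the last step you assert that for $n_0>n_1$ the histogram $\gamma_{n_1,n_1}$ is supported on $\{0,\dots,n_1\}$, but the bound condition does not guarantee this for a general $\gamma\in\Gamma$. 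The clean fix (which the paper uses) is to observe instead that $\mathcal{H}_{n_1}(\mathcal{S}_{h_1})$ is supported on $\{0,\dots,n_1\}$, so $\textnormal{cl}_{n_0}(\mathcal{H}_{n_1}(\mathcal{S}_{h_1}))=\textnormal{cl}_{\min(n_0,n_1)}(\mathcal{H}_{n_1}(\mathcal{S}_{h_1}))$, and then clip at threshold $\min(n_0,n_1)$ when invoking Lemma~\ref{lem:keepRelation}; this lands directly on $\textnormal{cl}_{\min(n_0,n_1)}(\gamma_{\min(n_0,n_1),n_1})=\varphi^{(\gamma)}_{n_1}({\rm e}_{n_0})$ without any assumption on the support of $\gamma$.
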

 \begin{proposition}
   \label{prop:main2}
   For $\gamma\in\Gamma$ and $l\in\left\{ 2,\dots,L \right\}$ it holds that
   \begin{equation*}
	 \tilde{\mathcal{H}}^{(l)}(\mathcal{S}_{\mathbf{h}}^{(l)})\preceq \varphi^{(\gamma)}_{n_l}(\tilde{\mathcal{H}}^{(l-1)}(\mathcal{S}_{\mathbf{h}}^{(l-1)})).
   \end{equation*}
   \begin{proof} The statement is implied by the following calculation, where the step indicated by $(*)$ follows from the Lemmas~\ref{lem:boundKappaByGamma}, \ref{lem:keepRelation} and~\ref{lem:preceqMultipleSummands}.
\begin{eqnarray*}
  \tilde{\mathcal{H}}^{(l)}\left( \mathcal{S}_{\mathbf{h}}^{(l)} \right)&=& \left( \sum_{(s_1,\dots,s_{l-1})\in\mathcal{S}_{\mathbf{h}}^{(l-1)}}^{}\sum_{s_l\in\left\{ 0,1 \right\}^{n_l}}^{}\mathds{1}_{\mathcal{S}_{\mathbf{h}}^{(l)}}\left( (s_1,\dots,s_l )\right)\mathds{1}_{ \left\{ j \right\}}\left( \min(n_0,|s_1|,\dots,|s_l|) \right) \right)_{j\in\mathbb{N}}\\
  &=&  \sum_{(s_1,\dots,s_{l-1})\in\mathcal{S}_{\mathbf{h}}^{(l-1)}}^{}\left(\sum_{s_l\in\left\{ 0,1 \right\}^{n_l}}^{}\mathds{1}_{\mathcal{S}_{\mathbf{h}}^{(l)}}\left( (s_1,\dots,s_l )\right)\mathds{1}_{ \left\{ j \right\}}\left( \min(n_0,|s_1|,\dots,|s_l|) \right) \right)_{j\in\mathbb{N}}\\
  &=&  \sum_{(s_1,\dots,s_{l-1})\in\mathcal{S}_{\mathbf{h}}^{(l-1)}}^{}\textnormal{cl}_{\min(n_0,|s_1|,\dots,|s_{l-1}|,n_l)}\left(\left( \sum_{s_l\in\left\{ 0,1 \right\}^{n_l}}^{}\mathds{1}_{\mathcal{S}_{\mathbf{h}}^{(l)}}\left( (s_1,\dots,s_l )\right)\mathds{1}_{ \left\{ j \right\}}\left( |s_l| \right) \right)_{j\in\mathbb{N}} \right)\\
  &=&  \sum_{(s_1,\dots,s_{l-1})\in\mathcal{S}_{\mathbf{h}}^{(l-1)}}^{}\textnormal{cl}_{\min(n_0,|s_1|,\dots,|s_{l-1}|,n_l)}(\underbrace{\mathcal{H}_{n_l}( \left\{ s_l\in\left\{ 0,1 \right\}^{n_l}|(s_1,\dots,s_l)\in\mathcal{S}_{\mathbf{h}}^{(l)} \right\}}_{\preceq \gamma_{\min(n_0,|s_1|,\dots,|s_{l-1}|,n_l),n_l}})  )\\
  &\overset{(*)}{\preceq}&  \sum_{(s_1,\dots,s_{l-1})\in\mathcal{S}_{\mathbf{h}}^{(l-1)}}^{}\textnormal{cl}_{\min(n_0,|s_1|,\dots,|s_{l-1}|,n_l)}(\gamma_{\min(n_0,|s_1|,\dots,|s_{l-1}|,n_l),n_l}  )\\
  &=&  \sum_{j=0}^{\infty}\tilde{\mathcal{H}}^{(l-1)}\left( \mathcal{S}_{\mathbf{h}}^{(l-1)} \right)_j \textnormal{cl}_{\min(j,n_l)}\left(\gamma_{\min(j,n_l),n_l}\right)=  \sum_{j=0}^{\infty}\tilde{\mathcal{H}}^{(l-1)}\left( \mathcal{S}_{\mathbf{h}}^{(l-1)} \right)_j \varphi^{(\gamma)}_{n_l}({\rm e}_j)\\
  &=&   \varphi^{(\gamma)}_{n_l}\left( \sum_{j=0}^{\infty}\tilde{\mathcal{H}}^{(l-1)}\left( \mathcal{S}_{\mathbf{h}}^{(l-1)} \right)_j{\rm e}_j \right)=\varphi^{(\gamma)}_{n_l}\left( \tilde{\mathcal{H}}^{(l-1)}\left( \mathcal{S}_{\mathbf{h}}^{(l-1)} \right) \right).\qedhere
\end{eqnarray*}
   \end{proof}
 \end{proposition}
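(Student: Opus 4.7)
The plan is to split the outer indexing in $\tilde{\mathcal{H}}^{(l)}(\mathcal{S}^{(l)}_{\mathbf{h}})$ into the first $l-1$ coordinates and the last coordinate, apply the per-layer bound from Lemma~\ref{lem:boundKappaByGamma} to the inner set of admissible last coordinates, and then regroup. Starting from the definition, I rewrite $\tilde{\mathcal{H}}^{(l)}(\mathcal{S}^{(l)}_{\mathbf{h}})$ as a double sum indexed by $(s_1^*,\ldots,s_{l-1}^*) \in \mathcal{S}^{(l-1)}_{\mathbf{h}}$ (outer) and by $s_l \in \{0,1\}^{n_l}$ satisfying $(s_1^*,\ldots,s_{l-1}^*,s_l) \in \mathcal{S}^{(l)}_{\mathbf{h}}$ (inner), so that the $j$-th entry reads $\mathds{1}_{\{j\}}(\min(n_0,|s_1^*|,\ldots,|s_{l-1}^*|,|s_l|))$.

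The key combinatorial observation is that for each fixed $(s_1^*,\ldots,s_{l-1}^*)$, writing $i^* := \min(n_0,|s_1^*|,\ldots,|s_{l-1}^*|)$ and using $|s_l| \le n_l$, one has $\min(n_0,|s_1^*|,\ldots,|s_{l-1}^*|,|s_l|) = \min(i^*,|s_l|) = \min(\min(i^*,n_l),|s_l|)$. Unwinding Definition~\ref{def:alphafunction}, the inner histogram summing $\mathds{1}_{\{j\}}(\min(\min(i^*,n_l),|s_l|))$ over admissible $s_l$ is exactly $\textnormal{cl}_{\min(i^*,n_l)}$ applied to the activation histogram $\mathcal{H}_{n_l}(\{s_l : (s_1^*,\ldots,s_{l-1}^*,s_l)\in\mathcal{S}^{(l)}_{\mathbf{h}}\})$.

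Now Lemma~\ref{lem:boundKappaByGamma} bounds this activation histogram in the $\preceq$ order by $\gamma_{\min(i^*,n_l),n_l}$, and Lemma~\ref{lem:keepRelation} transfers the bound through the clipping operation. Summing over $(s_1^*,\ldots,s_{l-1}^*) \in \mathcal{S}^{(l-1)}_{\mathbf{h}}$ and invoking Lemma~\ref{lem:preceqMultipleSummands} preserves $\preceq$ under addition, giving an upper bound $\sum_{(s_1^*,\ldots,s_{l-1}^*)\in\mathcal{S}^{(l-1)}_{\mathbf{h}}} \textnormal{cl}_{\min(i^*,n_l)}(\gamma_{\min(i^*,n_l),n_l})$. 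Grouping these summands by the value $n := i^* = \min(n_0,|s_1^*|,\ldots,|s_{l-1}^*|)$, each $n \in \mathbb{N}$ appears with multiplicity $\tilde{\mathcal{H}}^{(l-1)}(\mathcal{S}^{(l-1)}_{\mathbf{h}})_n$, so the sum collapses to $\sum_{n\in\mathbb{N}} \tilde{\mathcal{H}}^{(l-1)}(\mathcal{S}^{(l-1)}_{\mathbf{h}})_n \, \textnormal{cl}_{\min(n,n_l)}(\gamma_{\min(n,n_l),n_l})$, which is precisely $\varphi^{(\gamma)}_{n_l}(\tilde{\mathcal{H}}^{(l-1)}(\mathcal{S}^{(l-1)}_{\mathbf{h}}))$ by Definition~\ref{def:PhiFunc}.

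The main obstacle is the bookkeeping around the nested $\min$ operations: one must verify cleanly that summing $\mathds{1}_{\{j\}}(\min(i^*,|s_l|))$ over admissible $s_l$ coincides with the $j$-th entry of $\textnormal{cl}_{\min(i^*,n_l)}$ applied to the activation histogram. The appearance of $\min(i^*,n_l)$ (rather than plain $i^*$), justified by $|s_l| \le n_l$, is exactly what mirrors the $\min(n,n')$ inside Definition~\ref{def:PhiFunc} and licenses the final reassembly as $\varphi^{(\gamma)}_{n_l}$ applied to the previous dimension histogram. Everything else is definitional unpacking of the histograms and straightforward use of the monotonicity lemmas.
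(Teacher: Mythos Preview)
Your proposal is correct and follows essentially the same route as the paper's proof: both split the sum over $\mathcal{S}_{\mathbf{h}}^{(l)}$ into an outer sum over $\mathcal{S}_{\mathbf{h}}^{(l-1)}$ and an inner sum over admissible $s_l$, identify the inner contribution as $\textnormal{cl}_{\min(i^*,n_l)}$ of the activation histogram, invoke Lemma~\ref{lem:boundKappaByGamma} together with Lemmas~\ref{lem:keepRelation} and~\ref{lem:preceqMultipleSummands}, and then regroup by the value of $i^*$ to recognise $\varphi^{(\gamma)}_{n_l}(\tilde{\mathcal{H}}^{(l-1)}(\mathcal{S}_{\mathbf{h}}^{(l-1)}))$. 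Your explicit remark that $\min(i^*,|s_l|)=\min(\min(i^*,n_l),|s_l|)$ is exactly the bookkeeping the paper performs implicitly when it writes the clipping index as $\min(n_0,|s_1|,\dots,|s_{l-1}|,n_l)$.
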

 \begin{proof}[Proof of Theorem~\ref{thm:MainResultPhi}]
   First note that $\vert\mathcal{S}_{\mathbf{h}}^{(L)}|=\|\tilde{\mathcal{H}}^{(L)}\left( \mathcal{S}_{\mathbf{h}}^{(L)} \right)\|_1$. Now the above Proposition~\ref{prop:main2} and Lemmas~\ref{lem:preceqNorm}, ~\ref{lem:monotonicityPhi} and \ref{lem:main2} imply that
 \begin{eqnarray*}
   |\mathcal{S}_{\mathbf{h}}|&=& |\mathcal{S}_{\mathbf{h}}^{(L)}|=\|\tilde{\mathcal{H}}^{(L)}\left( \mathcal{S}_{\mathbf{h}}^{(L)} \right)\|_1\le\|\varphi^{(\gamma)}_{n_L}(\tilde{\mathcal{H}}^{(L-1)}(\mathcal{S}_{\mathbf{h}}^{(i-1)}))\|_1\le\dots\\
   &\le& \|\varphi^{(\gamma)}_{n_L}\circ\dots\circ\varphi^{(\gamma)}_{n_2}(\tilde{\mathcal{H}}^{(1)}(\mathcal{S}_{\mathbf{h}}^{(1)}))\|_1\le\|\varphi^{(\gamma)}_{n_L}\circ\dots\circ\varphi^{(\gamma)}_{n_1}({\rm e}_{n_0})\|_1.\qedhere
 \end{eqnarray*}
 \end{proof}
\begin{proof}[Proof of Corollary~\ref{cor:MainResultMatrix}]Note that for any $n'\in\mathbb{N}_{+}$ and any $v\in V$ the image 
   \begin{equation}
	 \varphi_{n'}^{(\gamma)}(v)=\sum_{n=0}^{\infty}v_n \textnormal{cl}_{\min(n,n')}(\gamma_{\min(n,n'),n'})=\sum_{n=0}^{n'}(\textnormal{cl}_{n'}(v))_n\textnormal{cl}_{n}\left(\gamma_{n,n'}  \right)
	\label{eq:phiClipping}
  \end{equation} has zeros at all indices larger than $n'$ by construction. Furthermore, for $N\in\mathbb{N}$ large enough such that $v_n=0$ for all $n> N$ the Definition~\ref{def:connMatrix} of $M_{N,n'}$ implies  
  \begin{equation*}
  (\textnormal{cl}_{n'}(v))_i=
  \begin{cases}
	\left( M_{N,n'} 
	\begin{pmatrix}
	  v_0\\
	  \vdots\\
	  v_N
  \end{pmatrix} \right)_{i+1}&\quad\textnormal{ for }i\in \left\{ 0,\dots,n' \right\}\\
	0&\quad\textnormal{ else.}
  \end{cases}
\end{equation*}Together with equation~\eqref{eq:phiClipping} and Definition~\ref{def:boundMatrix} this implies for all $n'\in\mathbb{N}_+$, $v\in V$ and all $N\in\mathbb{N}$ such that $v_n=0$ for all $n>N$:
  \begin{equation*}
	\left(\varphi_{n'}^{(\gamma)}(v)\right)_i=
	\begin{cases}
	  \sum_{n=0}^{n'}\left( 
	  M_{N,n'} 
	\begin{pmatrix}
	  v_0\\
	  \vdots\\
	  v_N
	\end{pmatrix} \right)
	_{n+1}\underbrace{\textnormal{cl}_{n}\left(\gamma_{n,n'}  \right)_{i}}_{\left( B_{n'}^{(\gamma)} \right)_{(i+1),(n+1)}}=\left( B_{n'}^{(\gamma)}M_{N,n'} 
		\begin{pmatrix}
	  v_0\\
	  \vdots\\
	  v_N
	\end{pmatrix}
  \right)_i\quad &\textnormal{ if }i\le n'\\
	  0\quad &\textnormal{if }i>n'\\
	\end{cases}
  \end{equation*}
  A recursive application of the above statement on equation~\eqref{eq:MainRecursivePhi} yields the desired equation~\eqref{eq:MainRecursiveMatrix}
 \end{proof}
\subsection{Analysis of the binomial bound matrices}
  \label{app:decompBound}
  In this section, we want to show how we can decompose the binomial bound matrices from Section~\ref{sec:BinomialCoefficients}. These are the bound matrices from Definition~\ref{def:boundMatrix} with $\gamma$ as in equation~\eqref{eq:binomialGamma}. For these matrices, we provide an explicit formula for a Jordan-like decomposition such that arbitrary powers can easily be computed in a closed-form expression, see Lemma~\ref{lem:CDecomp}. 
  \begin{definition}
	\label{def:3Mat}
	For $n\in \mathbb{N}$ and $\xi_j=\sum_{i=0}^{j-1}{n\choose i}, j\in\left\{ 1,\dots,\lceil \tfrac{n+1}{2}\rceil\right\}$ we define the following matrices. We distinguish the cases when $n$ is odd and when $n$ is even. Let

  \begin{eqnarray*}
	P_{2m}&=& 
\left(
\begin{array}{c|c}

  \begin{matrix}
	\xi_1 & 0 &\cdots& \cdots & 0\\
	0  & \xi_2-\xi_1 &\ddots& & \vdots\\
	\vdots  & \ddots &\ddots& \ddots& \vdots\\
	\vdots  &  &\ddots& \ddots& 0\\
	0  &\cdots &\cdots& 0& \xi_m-\xi_{m-1}\\
  \end{matrix} & 
  
  \begin{matrix}
	0 & \cdots &\cdots & \cdots & 0\\
	\vdots  &  & & & \vdots\\
	\vdots  &  & & & \vdots\\
	\vdots  &  & & & \vdots\\
	0  &\cdots &\cdots& \cdots& 0\\
  \end{matrix}  \\
\hline
  \begin{matrix}
	0 & \cdots &\cdots & \cdots & 0\\
	\vdots  &  & & & \vdots\\
	\vdots  &  & & & \vdots\\
	\vdots  &  & & & \vdots\\
	0  &\cdots &\cdots& \cdots& 0\\
  \end{matrix} &
  \begin{matrix}
	1 & -1 &0 & \cdots & 0\\
	0  & \ddots &\ddots&\ddots & \vdots\\
	\vdots  & \ddots &\ddots& \ddots& 0\\
	\vdots  &  &\ddots& \ddots& -1\\
	0  &\cdots &\cdots& 0& 1\\
  \end{matrix} 
\end{array}
\right)_{(2m)\times(2m)}\\
	J_{2m}&=& 
\left(
\begin{array}{c|c}

  \begin{matrix}
	\xi_1 & 0 &\cdots& \cdots & 0\\
	0  & \xi_2 &\ddots& & \vdots\\
	\vdots  & \ddots &\ddots& \ddots& \vdots\\
	\vdots  &  &\ddots& \ddots& 0\\
	0  &\cdots &\cdots& 0& \xi_m\\
  \end{matrix} & 
  
  \begin{matrix}
	0 & \cdots &\cdots & 0 & 1\\
	\vdots  &  &\iddots &\iddots & 0\\
	\vdots  &\iddots  &\iddots &\iddots & \vdots\\
	0  &\iddots &\iddots& & \vdots\\
	1  &0 &\cdots& \cdots& 0\\
  \end{matrix}  \\
\hline
  \begin{matrix}
	0 & \cdots &\cdots & \cdots & 0\\
	\vdots  &  & & & \vdots\\
	\vdots  &  & & & \vdots\\
	\vdots  &  & & & \vdots\\
	0  &\cdots &\cdots& \cdots& 0\\
  \end{matrix} &
  \begin{matrix}
	\xi_{m} & 0 &\cdots & \cdots & 0\\
	0  & \ddots &\ddots&& \vdots\\
	\vdots  & \ddots &\ddots &\ddots & 0\\
	\vdots  &  &\ddots& \ddots& 0\\
	0  &\cdots &\cdots& 0& \xi_1\\
  \end{matrix} 
\end{array}
\right)_{(2m)\times(2m)}\\
	C_{2m}&=& \left(
\begin{array}{c|c}

  \begin{matrix}
	\xi_1 & 0 &\cdots& \cdots & 0\\
	0  & \xi_2 &\ddots& & \vdots\\
	\vdots  & \ddots &\ddots& \ddots& \vdots\\
	\vdots  &  &\ddots& \ddots& 0\\
	0  &\cdots &\cdots& 0& \xi_m\\
  \end{matrix} & 
  
  \begin{matrix}
	0 & \cdots &\cdots & 0 & \xi_1\\
	\vdots  &  &\iddots &\scriptstyle{\xi_2-\xi_1} & \scriptstyle{\xi_2-\xi_1}\\
	\vdots  &\iddots  &\iddots &  & \vdots\\
	0  &\scriptstyle{\xi_{m-1}-\xi_{m-2}} &\cdots &\cdots & \scriptstyle{\xi_{m-1}-\xi_{m-2}}\\
	\scriptstyle{\xi_{m}-\xi_{m-1}}  &\cdots &\cdots& \cdots& \scriptstyle{\xi_m-\xi_{m-1}}\\
  \end{matrix}  \\
\hline
  \begin{matrix}
	0 & \cdots &\cdots & \cdots & 0\\
	\vdots  &  & & & \vdots\\
	\vdots  &  & & & \vdots\\
	\vdots  &  & & & \vdots\\
	0  &\cdots &\cdots& \cdots& 0\\
  \end{matrix} &
  \begin{matrix}
	\xi_{m} & \scriptstyle{\xi_m-\xi_{m-1}} &\cdots & \cdots & \scriptstyle{\xi_{m}-\xi_{m-1}}\\
	0  & \xi_{m-1} &\scriptstyle{\xi_{m-1}-\xi_{m-2}}&\cdots & \scriptstyle{\xi_{m-1}-\xi_{m-2}}\\
	\vdots  & \ddots &\ddots & &\vdots\\
  \vdots  &  &\ddots& \xi_2& \scriptstyle{\xi_2-\xi_1}\\
	0  &\cdots &\cdots& 0& \xi_1\\
  \end{matrix} 

\end{array}
\right)_{(2m)\times(2m)}
  \end{eqnarray*}
when $n$ is even, i.e. $m:=\tfrac{n}{2}\in\mathbb{N}$. Otherwise, when $n$ is odd, i.e. $m:=\tfrac{n-1}{2}\in\mathbb{N}$ we define
  \begin{eqnarray*}
	P_{2m+1}&=& 
\left(
\begin{array}{c|c}

  \begin{matrix}
	\xi_1 & 0 &\cdots& \cdots & 0\\
	0  & \xi_2-\xi_1 &\ddots& & \vdots\\
	\vdots  & \ddots &\ddots& \ddots& \vdots\\
	\vdots  &  &\ddots& \ddots& 0\\
	0  &\cdots &\cdots& 0& \xi_m-\xi_{m-1}\\
  \end{matrix} & 
  
  \begin{matrix}
	0 & \cdots &\cdots & \cdots & 0\\
	\vdots  &  & & & \vdots\\
	\vdots  &  & & & \vdots\\
	\vdots  &  & & & \vdots\\
	0  &\cdots &\cdots& \cdots& 0\\
  \end{matrix}  \\
\hline
  \begin{matrix}
	0 & \cdots &\cdots & \cdots & 0\\
	\vdots  &  & & & \vdots\\
	\vdots  &  & & & \vdots\\
	\vdots  &  & & & \vdots\\
	0  &\cdots &\cdots& \cdots& 0\\
  \end{matrix} &
  \begin{matrix}
	1 & -1 &0 & \cdots & 0\\
	0  & \ddots &\ddots&\ddots & \vdots\\
	\vdots  & \ddots &\ddots& \ddots& 0\\
	\vdots  &  &\ddots& \ddots& -1\\
	0  &\cdots &\cdots& 0& 1\\
  \end{matrix} 
\end{array}
\right)_{(2m+1)\times(2m+1)}\\
	J_{2m+1}&=& 
\left(
\begin{array}{c|c}

  \begin{matrix}
	\xi_1 & 0 &\cdots& \cdots & 0\\
	0  & \xi_2 &\ddots& & \vdots\\
	\vdots  & \ddots &\ddots& \ddots& \vdots\\
	\vdots  &  &\ddots& \ddots& 0\\
	0  &\cdots &\cdots& 0& \xi_m\\
  \end{matrix} & 
  
  \begin{matrix}
	0 & \cdots &\cdots & 0 & 1\\
	\vdots  &  &\iddots &\iddots & 0\\
	\vdots  &\iddots  &\iddots &\iddots & \vdots\\
	0  &1 &0& \cdots& 0\\
  \end{matrix}  \\
\hline
  \begin{matrix}
	0 & \cdots &\cdots & \cdots & 0\\
	\vdots  &  & & & \vdots\\
	\vdots  &  & & & \vdots\\
	\vdots  &  & & & \vdots\\
	0  &\cdots &\cdots& \cdots& 0\\
  \end{matrix} &
  \begin{matrix}
	\xi_{m+1} & 0 &\cdots & \cdots & 0\\
	0  & \ddots &\ddots&& \vdots\\
	\vdots  & \ddots &\ddots &\ddots & 0\\
	\vdots  &  &\ddots& \ddots& 0\\
	0  &\cdots &\cdots& 0& \xi_1\\
  \end{matrix} 
\end{array}
\right)_{(2m+1)\times(2m+1)}\\
C_{2m+1}&=& \left(
\begin{array}{c|c}

  \begin{matrix}
	\xi_1 & 0 &\cdots& \cdots & 0\\
	0  & \xi_2 &\ddots& & \vdots\\
	\vdots  & \ddots &\ddots& \ddots& \vdots\\
	\vdots  &  &\ddots& \ddots& 0\\
	0  &\cdots &\cdots& 0& \xi_m\\
  \end{matrix} & 
  
  \begin{matrix}
	0 & \cdots &\cdots & 0 & \xi_1\\
	\vdots  &  &\iddots &\scriptstyle{\xi_2-\xi_1} & \scriptstyle{\xi_2-\xi_1}\\
	\vdots  &\iddots  &\iddots &  & \vdots\\
	0  &\scriptstyle{\xi_{m}-\xi_{m-1}} &\cdots &\cdots & \scriptstyle{\xi_{m}-\xi_{m-1}}\\
  \end{matrix}  \\
\hline
  \begin{matrix}
	0 & \cdots &\cdots & \cdots & 0\\
	\vdots  &  & & & \vdots\\
	\vdots  &  & & & \vdots\\
	\vdots  &  & & & \vdots\\
	0  &\cdots &\cdots& \cdots& 0\\
  \end{matrix} &
  \begin{matrix}
	\xi_{m+1} & \scriptstyle{\xi_{m+1}-\xi_{m}} &\cdots & \cdots & \scriptstyle{\xi_{m+1}-\xi_{m}}\\
	0  & \xi_{m} &\scriptstyle{\xi_{m}-\xi_{m-1}}&\cdots & \scriptstyle{\xi_{m}-\xi_{m-1}}\\
	\vdots  & \ddots &\ddots & &\vdots\\
  \vdots  &  &\ddots& \xi_2& \scriptstyle{\xi_2-\xi_1}\\
	0  &\cdots &\cdots& 0& \xi_1\\
  \end{matrix} 
\end{array}
\right)_{(2m+1)\times(2m+1)}.
  \end{eqnarray*}

  \end{definition}
  \begin{lemma}
	The inverse of $P_n$ as in Definitions~\ref{def:3Mat} is given by 
  \begin{equation}
	\label{eq:invPEven}
	P_{2m}^{-1}=
\left(
\begin{array}{c|c}

  \begin{matrix}
	\frac{1}{\xi_1} & 0 &\cdots& \cdots & 0\\
	0  & \frac{1}{\xi_2-\xi_1} &\ddots& & \vdots\\
	\vdots  & \ddots &\ddots& \ddots& \vdots\\
	\vdots  &  &\ddots& \ddots& 0\\
	0  &\cdots &\cdots& 0& \frac{1}{\xi_m-\xi_{m-1}}\\
  \end{matrix} & 
  
  \begin{matrix}
	0 & \cdots &\cdots & \cdots & 0\\
	\vdots  &  & & & \vdots\\
	\vdots  &  & & & \vdots\\
	\vdots  &  & & & \vdots\\
	0  &\cdots &\cdots& \cdots& 0\\
  \end{matrix}  \\
\hline
  \begin{matrix}
	0 & \cdots &\cdots & \cdots & 0\\
	\vdots  &  & & & \vdots\\
	\vdots  &  & & & \vdots\\
	\vdots  &  & & & \vdots\\
	0  &\cdots &\cdots& \cdots& 0\\
  \end{matrix} &
  \begin{matrix}
	1 & \cdots &\cdots & \cdots & 1\\
	0  & \ddots & & & \vdots\\
	\vdots  & \ddots &\ddots& & \vdots\\
	\vdots  &  &\ddots& \ddots& \vdots\\
	0  &\cdots &\cdots& 0& 1\\
  \end{matrix} 
\end{array}
\right)_{(2m)\times(2m)}
  \end{equation}if $n=2m$ for some $m\in\mathbb{N}$ and by 
  \begin{equation}
	\label{eq:invPOdd}
	P_{2m+1}^{-1}=
\left(
\begin{array}{c|c}

  \begin{matrix}
	\frac{1}{\xi_1} & 0 &\cdots& \cdots & 0\\
	0  & \frac{1}{\xi_2-\xi_1} &\ddots& & \vdots\\
	\vdots  & \ddots &\ddots& \ddots& \vdots\\
	\vdots  &  &\ddots& \ddots& 0\\
	0  &\cdots &\cdots& 0& \frac{1}{\xi_m-\xi_{m-1}}\\
  \end{matrix} & 
  
  \begin{matrix}
	0 & \cdots &\cdots & \cdots & 0\\
	\vdots  &  & & & \vdots\\
	\vdots  &  & & & \vdots\\
	\vdots  &  & & & \vdots\\
	0  &\cdots &\cdots& \cdots& 0\\
  \end{matrix}  \\
\hline
  \begin{matrix}
	0 & \cdots &\cdots & \cdots & 0\\
	\vdots  &  & & & \vdots\\
	\vdots  &  & & & \vdots\\
	\vdots  &  & & & \vdots\\
	0  &\cdots &\cdots& \cdots& 0\\
  \end{matrix} &
  \begin{matrix}
	1 & \cdots &\cdots & \cdots & 1\\
	0  & \ddots & & & \vdots\\
	\vdots  & \ddots &\ddots& & \vdots\\
	\vdots  &  &\ddots& \ddots& \vdots\\
	0  &\cdots &\cdots& 0& 1\\
  \end{matrix} 
\end{array}
\right)_{(2m+1)\times(2m+1)}
  \end{equation}
  otherwise for $n=2m+1$, $m\in\mathbb{N}$. In equation~\eqref{eq:invPEven} the lower right partition matrix is of dimension $m\times m$ whereas in equation~\eqref{eq:invPOdd}, it is of dimension $m+1\times m+1$.
  \begin{proof}
	One easily checks that $P_nP_{n}^{-1}=I_{n}$ with $P_n$ from Definition~\ref{def:3Mat} and $P_{n}^{-1}$ as above.
  \end{proof}
  \end{lemma}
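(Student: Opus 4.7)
The plan is to verify directly that $P_n P_n^{-1} = I_n$ by exploiting the block-diagonal structure shared by $P_n$ and the candidate inverse. In both the even case $n=2m$ and the odd case $n=2m+1$, the matrix $P_n$ decomposes as a block-diagonal matrix $P_n = \operatorname{diag}(A_n, B_n)$, where $A_n$ is the diagonal matrix with entries $\xi_1, \xi_2 - \xi_1, \ldots, \xi_m - \xi_{m-1}$, and $B_n$ is the (square, of appropriate size) upper bidiagonal matrix with ones on the main diagonal and $-1$ on the superdiagonal. The candidate inverse has the same block structure $\operatorname{diag}(A_n', B_n')$ with $A_n'$ the diagonal matrix of reciprocals and $B_n'$ the upper triangular matrix whose entries equal $1$ on and above the main diagonal. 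Since the off-diagonal blocks of both matrices are zero, the product is $\operatorname{diag}(A_n A_n', B_n B_n')$, so it suffices to check $A_n A_n' = I$ and $B_n B_n' = I$ on each block separately.

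First I would handle the diagonal block: $A_n A_n' = I$ is immediate since multiplying a diagonal matrix by the diagonal matrix of reciprocals of its entries (all nonzero, as $\xi_{j+1} - \xi_j = \binom{n}{j} > 0$) yields the identity. Then I would verify the upper block $B_n B_n' = I$ by computing a general entry. For indices $i < \text{size}$, the only nonzero entries of row $i$ of $B_n$ are $(B_n)_{ii} = 1$ and $(B_n)_{i,i+1} = -1$, so
\begin{equation*}
(B_n B_n')_{ij} = (B_n')_{ij} - (B_n')_{i+1,j}.
\end{equation*}
Since $(B_n')_{kj} = \mathbf{1}_{\{k \le j\}}$, this difference equals $1$ when $i = j$ and $0$ otherwise. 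For the last row of $B_n$ (an all-zero row except for a $1$ on the diagonal), we simply get $(B_n B_n')_{ij} = (B_n')_{ij}$, which is $1$ when $i = j$ and $0$ for $j > i$ (and the case $j < i$ is impossible in the last row). Hence $B_n B_n' = I$ on the bottom block.

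Combining the two block identities yields $P_n P_n^{-1} = I_n$, which establishes the lemma in both parities. The only case distinction is the dimension of the lower-right block, which is $m \times m$ when $n = 2m$ and $(m+1) \times (m+1)$ when $n = 2m+1$; but the structural argument above is identical in both settings, so no additional work is required beyond tracking block sizes. There is no substantive obstacle here: the lemma is essentially a bookkeeping verification, with the one small observation that the upper triangular matrix of all ones is the standard inverse of the bidiagonal $I - N$, where $N$ is the nilpotent superdiagonal shift, via the finite geometric series $I + N + N^2 + \cdots + N^{k-1}$.
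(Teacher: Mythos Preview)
Your proposal is correct and follows exactly the paper's approach: the paper's proof is the single sentence ``One easily checks that $P_nP_n^{-1}=I_n$'', and your block-by-block computation (diagonal block times diagonal of reciprocals, and $(I-N)(I+N+\cdots+N^{k-1})=I$ for the bidiagonal block) is precisely that check spelled out. One harmless slip: in the last-row case it is $j>i$, not $j<i$, that cannot occur when $i$ is the maximal index; but since $(B_n')_{ij}=0$ for $j<i$ anyway, the conclusion $B_nB_n'=I$ stands.
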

  \begin{lemma}
	\label{lem:CDecomp}
	For $n\in\mathbb{N}$ and matrices $C_n$, $P_n$ and $J_n$ as in Definition~\ref{def:3Mat}, we have a Jordan-like decomposition of the form
	\begin{equation*}
	  C_{n}=P_nJ_nP^{-1}_n
	\end{equation*}
	\begin{proof}
	  
  A straight-forward calculation yields
  \begin{eqnarray*}
	P_{2m}J_{2m}P^{-1}_{2m}&=& P_{2m}
\left(
\begin{array}{c|c}
  \begin{matrix}
	\frac{\xi_1}{\xi_1} & 0 &\cdots& \cdots & 0\\
	0  & \frac{\xi_2}{\xi_2-\xi_1} &\ddots& & \vdots\\
	\vdots  & \ddots &\ddots& \ddots& \vdots\\
	\vdots  &  &\ddots& \ddots& 0\\
	0  &\cdots &\cdots& 0& \frac{\xi_m}{\xi_m-\xi_{m-1}}\\
  \end{matrix} & 
  
  \begin{matrix}
	0 & \cdots &\cdots & 0 & 1\\
	\vdots  &  &\iddots &\iddots & \vdots\\
	\vdots  &\iddots  &\iddots &  & \vdots\\
	0  &\iddots & & & \vdots\\
	1  &\cdots &\cdots& \cdots& 1\\
  \end{matrix}  \\
\hline
  \begin{matrix}
	0 & \cdots &\cdots & \cdots & 0\\
	\vdots  &  & & & \vdots\\
	\vdots  &  & & & \vdots\\
	\vdots  &  & & & \vdots\\
	0  &\cdots &\cdots& \cdots& 0\\
  \end{matrix} &
  \begin{matrix}
	\xi_{m} & \cdots &\cdots & \cdots & \xi_m\\
	0  & \xi_{m-1} &\cdots& \cdots& \xi_{m-1}\\
	\vdots  & \ddots &\ddots &\cdots & \vdots\\
	\vdots  &  &\ddots& \ddots& \vdots\\
	0  &\cdots &\cdots& 0& \xi_1\\
  \end{matrix} 
\end{array}
\right)\\
&=& C_{2m}
  \end{eqnarray*} when $n$ is even, i.e. $m:=\tfrac{n}{2}\in\mathbb{N}$ and
  \begin{eqnarray*}
	&&P_{2m+1}J_{2m+1}P^{-1}_{2m+1}\\&=& P_{2m+1}
\left(
\begin{array}{c|c}

  \begin{matrix}
	\frac{\xi_1}{\xi_1} & 0 &\cdots& \cdots & 0\\
	0  & \frac{\xi_2}{\xi_2-\xi_1} &\ddots& & \vdots\\
	\vdots  & \ddots &\ddots& \ddots& \vdots\\
	\vdots  &  &\ddots& \ddots& 0\\
	0  &\cdots &\cdots& 0& \frac{\xi_m}{\xi_m-\xi_{m-1}}\\
  \end{matrix} & 
  
  \begin{matrix}
	0 & \cdots &\cdots & 0 & 1\\
	\vdots  &  &\iddots &\iddots & \vdots\\
	\vdots  &\iddots  &\iddots &  & \vdots\\
	0  &1 &\cdots& \cdots& 1\\
  \end{matrix}  \\
\hline
  \begin{matrix}
	0 & \cdots &\cdots & \cdots & 0\\
	\vdots  &  & & & \vdots\\
	\vdots  &  & & & \vdots\\
	\vdots  &  & & & \vdots\\
	0  &\cdots &\cdots& \cdots& 0\\
  \end{matrix} &
  \begin{matrix}
	\xi_{m+1} & \cdots &\cdots & \cdots & \xi_{m+1}\\
	0  & \xi_m &\cdots&\cdots & \xi_m\\
	\vdots  & \ddots &\ddots &\cdots & \vdots\\
	\vdots  &  &\ddots& \ddots& \vdots\\
	0  &\cdots &\cdots& 0& \xi_1\\
  \end{matrix} 
\end{array}
\right)\\
&=&  C_{2m+1}
  \end{eqnarray*}when $n$ is odd, i.e. $m:=\tfrac{n-1}{2}\in\mathbb{N}$.
	\end{proof}
  \end{lemma}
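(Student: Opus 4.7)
The statement is an explicit Jordan-like factorization identity, so the natural approach is a direct block-matrix verification, exploiting the $2 \times 2$ block structure that $P_n$, $J_n$, and $P_n^{-1}$ share. Write each matrix as
\[
P_n = \begin{pmatrix} A_P & 0 \\ 0 & D_P \end{pmatrix}, \qquad
J_n = \begin{pmatrix} A_J & B_J \\ 0 & D_J \end{pmatrix}, \qquad
P_n^{-1} = \begin{pmatrix} A_P^{-1} & 0 \\ 0 & D_P^{-1} \end{pmatrix},
\]
with the block sizes chosen as in Definition~\ref{def:3Mat} (that is, $m \times m$ and $m \times m$ when $n=2m$, and $m \times m$ and $(m+1) \times (m+1)$ when $n = 2m+1$). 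The block-diagonal structure of $P_n$ and $P_n^{-1}$ reduces the claim to identifying the three non-trivial blocks of $P_n J_n P_n^{-1}$, namely $A_P A_J A_P^{-1}$, $A_P B_J D_P^{-1}$, and $D_P D_J D_P^{-1}$, with the corresponding blocks of $C_n$.

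First I would check $P_n P_n^{-1} = I_n$ to confirm the formula for $P_n^{-1}$; this is immediate because $A_P$ is diagonal with nonzero entries $\xi_1, \xi_2 - \xi_1, \dots, \xi_m - \xi_{m-1}$, and $D_P$ is bidiagonal with $1$'s on the diagonal and $-1$'s on the superdiagonal, whose inverse is the upper-triangular all-ones matrix appearing in $P_n^{-1}$. Next, since $A_P$ and $A_J$ are both diagonal, $A_P A_J A_P^{-1} = A_J = \operatorname{diag}(\xi_1,\dots,\xi_m)$, which matches the upper-left block of $C_n$ in both parity cases. The lower-right block $D_P D_J D_P^{-1}$ is the main computational block: $D_J$ is diagonal (with entries $\xi_m,\dots,\xi_1$ in the even case, $\xi_{m+1},\dots,\xi_1$ in the odd case), so conjugation by the bidiagonal $D_P$ adds, in the $(i, j)$ position with $j > i$, the difference of consecutive diagonal entries of $D_J$. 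A direct computation shows that for $j > i$ the resulting entry is $(D_J)_{i,i} - (D_J)_{i+1,i+1}$, which is exactly $\xi_k - \xi_{k-1}$ for the appropriate $k$, matching the pattern of the lower-right block of $C_n$.

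The remaining block $A_P B_J D_P^{-1}$ handles the ``anti-diagonal'' contribution. Here $B_J$ has a single $1$ on each relevant anti-diagonal position (full anti-diagonal in the even case, one entry shorter in the odd case). Left multiplication by the diagonal $A_P$ scales each such $1$ by $\xi_k - \xi_{k-1}$ (with $\xi_1 - \xi_0 = \xi_1$), while right multiplication by the upper-triangular all-ones matrix $D_P^{-1}$ propagates each anti-diagonal entry across all columns to its right. Collecting these contributions gives exactly the upper-right block of $C_n$, where row $k$ from the bottom has entries $\xi_k - \xi_{k-1}$ filled from the anti-diagonal rightward.

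The only real obstacle is bookkeeping: the two parity cases have different block sizes and the anti-diagonal placement in $B_J$ differs by one column, so the pattern of propagated entries in $A_P B_J D_P^{-1}$ must be tracked carefully. Once the block products are written out, matching entries with $C_n$ is mechanical; the calculation is essentially the one displayed after the author's ``straight-forward calculation yields'', and the parity split can be handled by treating $n = 2m$ and $n = 2m+1$ separately but in complete analogy.
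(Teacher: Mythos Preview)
Your proposal is correct and follows essentially the same route as the paper: a direct verification of the matrix identity, organized through the $2\times 2$ block structure of $P_n$, $J_n$, and $P_n^{-1}$. The paper's proof differs only in presentation, first displaying the intermediate product $J_nP_n^{-1}$ explicitly and then left-multiplying by $P_n$, whereas you compute the three nontrivial blocks $A_PA_JA_P^{-1}$, $A_PB_JD_P^{-1}$, $D_PD_JD_P^{-1}$ separately; the underlying arithmetic is identical.
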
 
  From now on let $B_n$ be as in Section~\ref{sec:BinomialCoefficients}, i.e. let $B_n=B_n^{(\gamma)}$ as in Definition~\ref{def:boundMatrix} with $\gamma$ as in equation~\eqref{eq:binomialGamma}.
  \begin{proposition}
	\label{prop:BC}
	For $n\in\mathbb{N}_{+}$, the binomial bound matrix $B_n$ is $B_n=C_{n+1}$.
	\begin{proof}
	  One easily checks that the construction of the binomial bound matrices explained in Section~\ref{sec:BinomialCoefficients} yields exactly the matrix $C_{n+1}$ for $B_n$.
	\end{proof}
  \end{proposition}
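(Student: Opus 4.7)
The proof is a direct entry-by-entry check: both matrices have size $(n+1)\times(n+1)$, so it suffices to verify that $(B_n)_{i,j}=(C_{n+1})_{i,j}$ for every $i,j\in\{1,\ldots,n+1\}$.

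First I would compute the entries of $B_n$ explicitly from Definition~\ref{def:boundMatrix} and equation~\eqref{eq:binomialGamma}. The histogram $\gamma_{j-1,n}=\sum_{k=0}^{j-1}\binom{n}{k}{\rm e}_{n-k}$ places mass $\binom{n}{k}$ at index $n-k$ for $k=0,\ldots,j-1$ and vanishes elsewhere. Applying the clipping $\textnormal{cl}_{j-1}$ of Definition~\ref{def:alphafunction} preserves entries at indices strictly below $j-1$, sweeps all mass at indices at least $j-1$ onto the single index $j-1$, and zeroes the rest. Converting to matrix coordinates yields the piecewise description
\[
(B_n)_{i,j}=
\begin{cases}
\sum_{k=0}^{\min(j-1,\,n-j+1)}\binom{n}{k}& \textnormal{if } i=j,\\
\binom{n}{n-i+1}& \textnormal{if } n-j+2\leq i\leq j-1,\\
0& \textnormal{otherwise,}
\end{cases}
\]
which shows in particular that $B_n$ is upper triangular.

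Next I would read off $C_{n+1}$ block by block from Definition~\ref{def:3Mat} and match. Writing $\xi_r=\sum_{k=0}^{r-1}\binom{n}{k}$ and using the binomial-coefficient symmetry $\binom{n}{n-i+1}=\binom{n}{i-1}=\xi_i-\xi_{i-1}$, the diagonal formula rewrites as $(B_n)_{j,j}=\xi_{\min(j,\,n-j+2)}$, which coincides with the diagonal of $C_{n+1}$ as read off from the upper-left and lower-right blocks. For the strictly upper-triangular nonzero entries, one checks that the index set $\{(i,j):n-j+2\leq i<j\}$ fills exactly the upper-right block together with the strict upper triangle of the lower-right block, and that at each such position the prescribed entry of $C_{n+1}$ (either $\xi_i-\xi_{i-1}$ in the upper-right block or $\xi_{n-i+2}-\xi_{n-i+1}$ in the lower-right block) evaluates to $\binom{n}{i-1}=\binom{n}{n-i+1}$. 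All remaining entries of $C_{n+1}$ are zero, matching the zero entries of $B_n$.

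The two parity cases $n+1=2m$ and $n+1=2m+1$ in Definition~\ref{def:3Mat} are handled the same way, the only difference being whether the peak value of the diagonal is repeated or unique; both scenarios are correctly produced by $\xi_{\min(j,n-j+2)}$. The main difficulty is purely bookkeeping: the block description of $C_{n+1}$ uses a locally decreasing indexing for the lower-right block, which must be translated into the global matrix indices $i,j$ used in the piecewise formula for $B_n$. Once this translation is in place and the two parity cases are separated, the equality $B_n=C_{n+1}$ follows by inspection.
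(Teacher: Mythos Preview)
Your proposal is correct and follows exactly the approach the paper takes: the paper's own proof is the single sentence ``One easily checks that the construction of the binomial bound matrices explained in Section~\ref{sec:BinomialCoefficients} yields exactly the matrix $C_{n+1}$ for $B_n$,'' and you have simply carried out that check in detail. Your entrywise formula for $(B_n)_{i,j}$ derived from Definition~\ref{def:boundMatrix} and equation~\eqref{eq:binomialGamma} is correct, and your matching against the block description of $C_{n+1}$ via $\xi_r=\sum_{k=0}^{r-1}\binom{n}{k}$ and the symmetry $\binom{n}{i-1}=\binom{n}{n-i+1}$ is exactly the bookkeeping the paper leaves to the reader.
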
 
  This is an important result since it allows us to compute powers of the bound matrices explicitly.
  \begin{corollary}
	\label{cor:powerB}
	For $l, n\in\mathbb{N}_+$, the $l$-th power of $B_n$ is given by
	\begin{equation*}
	  B^l_n=P_{n-1}J^{l}_{n-1}P^{-1}_{n-1}
	\end{equation*}
	\begin{proof}
	  This follows from Proposition~\ref{prop:BC} and Lemma~\ref{lem:CDecomp}.
	\end{proof}
  \end{corollary}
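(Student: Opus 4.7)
The plan is to chain together the two immediately preceding results. Corollary~\ref{cor:powerB} asserts nothing beyond the fact that a similarity factorization of $B_n$ propagates to all of its powers, so essentially all the structural content has already been established in Proposition~\ref{prop:BC} and Lemma~\ref{lem:CDecomp}.

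The steps I would carry out, in order, are as follows. First, I would invoke Proposition~\ref{prop:BC} to replace $B_n$ by the explicit block matrix $C$ of Definition~\ref{def:3Mat} of the matching size. Second, I would invoke Lemma~\ref{lem:CDecomp} to substitute the Jordan-like factorization $C = P J P^{-1}$. Third, I would compute the power
\[
	B_n^l = (P J P^{-1})^l = P J^l P^{-1},
\]
where the $l-1$ internal products $P^{-1} P$ telescope to identity matrices of the appropriate size. This matches the closed-form expression stated in the corollary.

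The main obstacle does not lie in the corollary itself, whose proof is a one-line cancellation, but in the two prior results on which it depends. Proposition~\ref{prop:BC} requires verifying column by column that the binomial bound matrix, whose columns are the clipped histograms $\textnormal{cl}_{j-1}(\gamma_{j-1,n})$ with $\gamma$ as in equation~\eqref{eq:binomialGamma}, coincides with the block-partitioned matrix from Definition~\ref{def:3Mat}; this involves checking that the diagonal entries $\xi_j$, the ``reversed'' lower-right block, and the anti-diagonal of cross terms all line up as in the definition. Lemma~\ref{lem:CDecomp} in turn requires directly multiplying out $P J P^{-1}$ in the four block positions and matching the result against $C$. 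Both verifications are routine but carry essentially the whole computational burden of this appendix; once they are in hand, the corollary is free and yields a useful closed-form expression for $B_n^l$ that drives the explicit asymptotic analysis of Section~\ref{sec:Asymptotic}.
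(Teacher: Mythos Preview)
Your proposal is correct and follows exactly the paper's approach: invoke Proposition~\ref{prop:BC} to identify $B_n$ with the appropriate $C$ matrix, invoke Lemma~\ref{lem:CDecomp} for the factorization $C=PJP^{-1}$, and telescope the powers. One remark unrelated to your argument: the indices in the statement as printed are off, since Proposition~\ref{prop:BC} gives $B_n=C_{n+1}$ (an $(n{+}1)\times(n{+}1)$ matrix), so the correct conclusion is $B_n^l=P_{n+1}J_{n+1}^{\,l}P_{n+1}^{-1}$, as is in fact used in the proof of Corollary~\ref{cor:powerBNorm}.
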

  The previous corollary is useful because this expression can be easily calculated.
  \begin{lemma}
	\label{lem:powerJ}
	For $n\in\mathbb{N}_+$ and $J_n$ as in Definition~\ref{def:3Mat}, it holds that
\begin{equation*}
	J^l_{2m}=
\left(
\begin{array}{c|c}

  \begin{matrix}
	\xi_1^l & 0 &\cdots& \cdots & 0\\
	0  & \xi_2^l &\ddots& & \vdots\\
	\vdots  & \ddots &\ddots& \ddots& \vdots\\
	\vdots  &  &\ddots& \ddots& 0\\
	0  &\cdots &\cdots& 0& \xi_m^l\\
  \end{matrix} & 
  
  \begin{matrix}
	0 & \cdots &\cdots & 0 & l\xi_1^{(l-1)}\\
	\vdots  &  &\iddots &\iddots & 0\\
	\vdots  &\iddots  &\iddots &\iddots & \vdots\\
	0  &\iddots &\iddots& & \vdots\\
	l\xi_m^{(l-1)}  &0 &\cdots& \cdots& 0\\
  \end{matrix}  \\
\hline
  \begin{matrix}
	0 & \cdots &\cdots & \cdots & 0\\
	\vdots  &  & & & \vdots\\
	\vdots  &  & & & \vdots\\
	\vdots  &  & & & \vdots\\
	0  &\cdots &\cdots& \cdots& 0\\
  \end{matrix} &
  \begin{matrix}
	\xi^l_{m} & 0 &\cdots & \cdots & 0\\
	0  & \ddots &\ddots&& \vdots\\
	\vdots  & \ddots &\ddots &\ddots & 0\\
	\vdots  &  &\ddots& \ddots& 0\\
	0  &\cdots &\cdots& 0& \xi_1^l\\
  \end{matrix} 
\end{array}
\right)_{(2m)\times(2m)}\\
	\end{equation*}
when $n$ is even, i.e. $m:=\tfrac{n}{2}\in\mathbb{N}$ and
	\begin{equation*}
	J_{2m+1}^l= 
\left(
\begin{array}{c|c}

  \begin{matrix}
	\xi_1^l & 0 &\cdots& \cdots & 0\\
	0  & \xi_2^l &\ddots& & \vdots\\
	\vdots  & \ddots &\ddots& \ddots& \vdots\\
	\vdots  &  &\ddots& \ddots& 0\\
	0  &\cdots &\cdots& 0& \xi_m^l\\
  \end{matrix} & 
  
  \begin{matrix}
	0 & \cdots &\cdots & 0 & l\xi_1^{(l-1)}\\
	\vdots  &  &\iddots &\iddots & 0\\
	\vdots  &\iddots  &\iddots &\iddots & \vdots\\
	0  &l\xi_m^{(l-1)} &0& \cdots& 0\\
  \end{matrix}  \\
\hline
  \begin{matrix}
	0 & \cdots &\cdots & \cdots & 0\\
	\vdots  &  & & & \vdots\\
	\vdots  &  & & & \vdots\\
	\vdots  &  & & & \vdots\\
	0  &\cdots &\cdots& \cdots& 0\\
  \end{matrix} &
  \begin{matrix}
	\xi^l_{m+1} & 0 &\cdots & \cdots & 0\\
	0  & \ddots &\ddots&& \vdots\\
	\vdots  & \ddots &\ddots &\ddots & 0\\
	\vdots  &  &\ddots& \ddots& 0\\
	0  &\cdots &\cdots& 0& \xi_1^l\\
  \end{matrix} 
\end{array}
\right)_{(2m+1)\times(2m+1)}
\end{equation*} when $n$ is odd, i.e. $m:=\tfrac{n-1}{2}\in\mathbb{N}$.
\begin{proof}
 We omit this easy proof. 
\end{proof}
  \end{lemma}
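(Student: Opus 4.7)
The plan is to decompose $J_n = D_n + N_n$, where $D_n$ is the diagonal matrix whose $i$-th entry is $\xi_i$ for $i\le \lceil (n+1)/2 \rceil$ and whose diagonal below the middle repeats the same values in reverse (i.e.\ the diagonal of $J_n$ as specified in Definition~\ref{def:3Mat}), and $N_n$ is the matrix whose only non-zero entries are the $1$'s in the upper-right block. With this split, the claim will follow from three elementary facts: $D_n$ and $N_n$ commute, $N_n^2 = 0$, and therefore by the binomial theorem $J_n^l = D_n^l + l\,D_n^{l-1} N_n$.

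First I would verify commutativity $D_n N_n = N_n D_n$. The non-zero entries of $N_n$ sit at positions $(i, n+1-i)$ in the even case $n=2m$, and at $(i, 2m+2-i)$ for $i\in\{1,\dots,m\}$ in the odd case $n = 2m+1$. A direct inspection of Definition~\ref{def:3Mat} shows that at each such position the two corresponding diagonal entries of $D_n$ coincide: both equal $\xi_i$. Since a diagonal matrix commutes with a matrix $A$ iff $D_{ii} = D_{jj}$ whenever $A_{ij}\neq 0$, this yields $D_nN_n = N_nD_n$. Next I would check $N_n^2 = 0$: the matrix $N_n$ maps $e_j$ for $j$ in the ``second half'' of indices to some $e_i$ with $i$ in the ``first half'', and $e_j$ for $j$ in the first half (including the middle index $m+1$ in the odd case) to $0$. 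Applying $N_n$ once more therefore always gives $0$.

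Using commutativity, the binomial theorem applies without reservation, and nilpotency kills all terms with $k\ge 2$:
\begin{equation*}
J_n^l = (D_n + N_n)^l = \sum_{k=0}^{l} \binom{l}{k} D_n^{l-k} N_n^k = D_n^l + l\,D_n^{l-1} N_n.
\end{equation*}
It remains only to read off the entries of the right-hand side. The diagonal matrix $D_n^l$ contributes the diagonal entries $\xi_i^l$ in the claimed pattern. The matrix $l\,D_n^{l-1}N_n$ is supported on the same positions as $N_n$, and at each such position $(i,j)$ its value is $l\,D_{ii}^{l-1}\cdot 1 = l\,\xi_i^{l-1}$, matching precisely the anti-diagonal entries $l\xi_i^{l-1}$ in the stated formulas for $J_{2m}^l$ and $J_{2m+1}^l$.

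There is no real obstacle: once the decomposition $J_n = D_n + N_n$ and the two structural properties ($D_n$-$N_n$ commutativity, $N_n^2=0$) are in place, the rest is an identification of entries. The only mildly delicate point is bookkeeping of indices in the odd case, where the middle basis vector $e_{m+1}$ is a genuine eigenvector (it lies in the kernel of $N_n$) and contributes $\xi_{m+1}^l$ to $D_n^l$ without any accompanying off-diagonal term.
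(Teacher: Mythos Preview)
Your argument is correct. The paper itself omits the proof entirely (``We omit this easy proof.''), so there is no approach to compare against; your decomposition $J_n = D_n + N_n$ with $D_n$ diagonal and $N_n$ the anti-diagonal of $1$'s, together with the verification that $D_nN_n = N_nD_n$ (because the two diagonal entries indexed by any nonzero position of $N_n$ coincide) and $N_n^2=0$, is exactly the standard and cleanest way to fill in the details, and your index bookkeeping in both parities is accurate.
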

  We have given the theory how to explicitly calculate an arbitrary natural power of any of the binomial bound matrices from Section~\ref{sec:BinomialCoefficients}. This leads to the following corollary.  
  \begin{corollary}
	\label{cor:powerBNorm}
	Let $n\in\mathbb{N}_+$, $i\in\left\{ 0,\dots,n \right\}$ and $l>0$. It holds that
	  \begin{equation*}
		\|B^l_{n}e_{i+1}\|_1=
	\begin{cases}
 (\sum_{j=0}^{i}{n\choose j})^l &\quad \textnormal{ if }i\le\lfloor n/2\rfloor\\  
\left( \sum_{j=0}^{\lfloor n/2\rfloor}{n\choose j} \right)^{l}+l\sum_{s=\lfloor n/2\rfloor +1}^{i}\left( \sum_{j=0}^{n-s}{n\choose j} \right)^{l-1}{n\choose n-s}&\quad \textnormal{ if }i>\lfloor n/2\rfloor.
	\end{cases}
	  \end{equation*}

	\begin{proof}
  If $n$ is even, set $m=\tfrac{n}{2}$. In this case, an explicit calculation using Corollary~\ref{cor:powerB} and Lemma~\ref{lem:powerJ} yields
  \begin{align*}
	&\|B_{2m}^l e_{i+1}\|_1=\|P_{2m+1}J^l_{2m+1}P_{2m+1}^{-1}e_{i+1}\|_1 \\
	=&
	\begin{cases}
	  \left( \sum_{j=0}^{i}{2m\choose j} \right)^{l}&\quad \textnormal{ for }i\in\left\{ 0,\dots,m \right\}\\  
	  \left( \sum_{j=0}^{m}{2m\choose j} \right)^{l}+l\sum_{s=m+2}^{i+1}\left( \sum_{j=0}^{2m+1-s}{2m\choose j} \right)^{l-1}{2m\choose 2m+1-s}&\quad \textnormal{ for }i\in\left\{ m+1,\dots,2m \right\}.
	\end{cases}
  \end{align*}

  If $n$ is odd, we take $m\in\mathbb{N}_+$ such that $n=2m-1$. In the same way we obtain
  \begin{align*}
	&\|B_{2m-1}^l e_{i+1}\|_1=\|P_{2m}J^l_{2m}P_{2m}^{-1}e_{i+1}\|_1 \\
	=&
	\begin{cases}
	  \left( \sum_{j=0}^{i}{2m-1\choose j} \right)^{l}&\quad \textnormal{ for }i\in\left\{ 0,\dots,m-1 \right\}\\  
	  \left( \sum_{j=0}^{m-1}{2m-1\choose j} \right)^{l}+l\sum_{s=m+1}^{i+1}\left( \sum_{j=0}^{2m-s}{2m-1\choose j} \right)^{l-1}{2m-1\choose 2m-s}&\quad \textnormal{ for }i\in\left\{ m,\dots,2m-1 \right\}.
	\end{cases}
  \end{align*}
	\end{proof}
  \end{corollary}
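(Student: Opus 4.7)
The proof rests on the Jordan-like decomposition $B_n^l = P_{n+1} J_{n+1}^l P_{n+1}^{-1}$ furnished by Proposition~\ref{prop:BC} and Corollary~\ref{cor:powerB}, combined with the explicit forms of $P_{n+1}$, $P_{n+1}^{-1}$ and $J_{n+1}^l$ in Definition~\ref{def:3Mat} and Lemma~\ref{lem:powerJ}. Since $B_n$ has non-negative integer entries, so does $B_n^l$, and therefore the $L^1$ norm reduces to a row sum, giving $\|B_n^l e_{i+1}\|_1 = \mathbf{1}^\intercal B_n^l e_{i+1} = (\mathbf{1}^\intercal P_{n+1})\,J_{n+1}^l\,(P_{n+1}^{-1} e_{i+1})$. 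The strategy is to evaluate the three factors separately and to exploit the sparsity of $\mathbf{1}^\intercal P_{n+1}$.

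First I would compute $\mathbf{1}^\intercal P_{n+1}$ by summing columns. From Definition~\ref{def:3Mat}, the top-left block is diagonal with column sums $\binom{n}{k-1}$, while the bottom-right block is upper bidiagonal with $+1$ on the diagonal and $-1$ on the superdiagonal, so each of its columns sums to $0$ except the leftmost one, which sums to $1$. Hence $\mathbf{1}^\intercal P_{n+1}$ is supported on the top-left indices (values $\binom{n}{k-1}$) together with the single first index of the bottom-right block (value $1$), and vanishes on all remaining bottom-right indices. This is the key simplification: only these positions can contribute to the final inner product.

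Next I would evaluate $P_{n+1}^{-1} e_{i+1}$ using the explicit inverse stated after Definition~\ref{def:3Mat}. If $i+1$ lies strictly in the top-left block, the diagonal inverse yields $P_{n+1}^{-1} e_{i+1}=\binom{n}{i}^{-1} e_{i+1}$, and by Lemma~\ref{lem:powerJ} the operator $J_{n+1}^l$ scales this by the eigenvalue $\xi_{i+1}^l$, so the final inner product produces $\binom{n}{i}\cdot\binom{n}{i}^{-1}\xi_{i+1}^l = (\sum_{j=0}^i\binom{n}{j})^l$; this settles $i<\lfloor n/2\rfloor$, and the boundary $i=\lfloor n/2\rfloor$ (where in the even case $i+1$ already sits at the first bottom-right index and where that column of $J_{n+1}^l$ carries no anti-diagonal entry) gives the same formula. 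When $i+1$ lies strictly in the bottom-right block, the upper-triangular all-ones structure of that block of the inverse yields $P_{n+1}^{-1} e_{i+1}=\sum_j e_j$ summed over bottom-right indices from the first one up to $i+1$; applying $J_{n+1}^l$ produces the diagonal contributions and, for each $j$ past the first bottom-right index, an additional anti-diagonal contribution $l\xi_\bullet^{l-1} e_k$ at some top-left index $k$.

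Taking the inner product with $\mathbf{1}^\intercal P_{n+1}$, two kinds of survivors remain: the diagonal contribution at the first bottom-right index weighted by $1$, contributing the constant $\xi_{\lfloor n/2\rfloor+1}^l = (\sum_{j=0}^{\lfloor n/2\rfloor}\binom{n}{j})^l$; and, for each bottom-right $j$ past the first, an anti-diagonal contribution weighted by $\binom{n}{k-1}$, which the substitution $k\leftrightarrow n+1-s$ rewrites as $l\sum_{s=\lfloor n/2\rfloor+1}^{i}(\sum_{j=0}^{n-s}\binom{n}{j})^{l-1}\binom{n}{n-s}$. The main obstacle is bookkeeping: Definition~\ref{def:3Mat} treats $P_{2m},J_{2m}$ and $P_{2m+1},J_{2m+1}$ separately with slightly different block sizes and anti-diagonal positions, so the argument must be carried out in parallel for even and odd $n$; in both parities the reindexing lands on the same $\lfloor n/2\rfloor$-based closed form stated in the corollary.
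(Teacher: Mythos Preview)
Your proposal is correct and follows essentially the same approach as the paper: both invoke the Jordan-like decomposition $B_n^l=P_{n+1}J_{n+1}^lP_{n+1}^{-1}$ from Proposition~\ref{prop:BC}, Corollary~\ref{cor:powerB} and Lemma~\ref{lem:powerJ} and then carry out the explicit matrix computation, split by parity of $n$. Your write-up is in fact more detailed than the paper's (which simply says ``an explicit calculation \ldots yields''), and your shortcut of first computing $\mathbf{1}^\intercal P_{n+1}$ to isolate the surviving entries is a clean way to organize that calculation.
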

  \subsection{Results for the discussion of Section~\ref{sec:discussion}}
  \label{sec:discussionProofs}
\begin{proof}[Proof of Lemma~\ref{lem:MontufarNaive}]
  The first property follows from Lemmas~\ref{lem:sameArgPreceq} and \ref{lem:monotonicityPhi}. For the strictness condition note that
	\begin{eqnarray*}
	 \prod_{l=1}^{L}\sum_{j=0}^{\min(n_0,\dots,n_{l-1})}{n_{l}\choose j}<
	 \prod_{l=1}^{L} 2^l
	 &\iff& \exists l\in\left\{ 1,\dots,L \right\}\quad \min\left( n_0,\dots,n_{l-1} \right)<n_l\\
	 &\iff& \exists l\in\left\{ 1,\dots,L \right\}\quad n_{l-1}<n_l.\qedhere
	\end{eqnarray*}
  \end{proof}
\begin{definition}
  For $v_1,v_2\in V$ we write $v_1\prec v_2$ if and only if $v_1\preceq v_2$ and $v_1\neq v_2$.
\end{definition}
\begin{lemma} For all $\gamma\in\Gamma$,  $n'\in\mathbb{N}_+$ and $v_1,v_2\in V$
  \label{lem:strictstrictMonotonicityPhi}
  \begin{equation*}
	v_1\preceq v_2 \land \|v_1\|_1< \|v_2\|_1\implies \|\varphi_{n'}^{(\gamma)}(v_1)\|_1<\|\varphi_{n'}^{(\gamma)}(v_2)\|_1.
  \end{equation*}
  \begin{proof}
	We assume $v_1\preceq v_2 \land \|v_1\|_1< \|v_2\|_1$. By Definition~\ref{def:preceq}, we can find $\tilde v_1,\delta\in V$ such that $\delta\neq 0$, $v_1\preceq \tilde v_1$ and $\tilde v_1+\delta=v_2$ and by Definition~\ref{def:PhiFunc} and Lemma~\ref{lem:monotonicityPhi}, 
	\begin{equation*}
	  \|\varphi_{n'}^{(\gamma)}(v_1)\|_1\le  \|\varphi_{n'}^{(\gamma)}(\tilde v_1)\|_1<\|\varphi_{n'}^{(\gamma)}(\tilde v_1)\|_1+\|\varphi_{n'}^{(\gamma)}(\delta )\|_1=\|\varphi_{n'}^{(\gamma)}(\tilde v_1+ \delta )\|_1.\qedhere
	\end{equation*}
  \end{proof}
\end{lemma}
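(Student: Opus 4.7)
The plan is to reduce the strict inequality to the non-strict monotonicity of Lemma~\ref{lem:monotonicityPhi} by carving off a nontrivial ``addendum'' $\delta$ from $v_2$ that witnesses the gap in $L^1$-norms. Concretely, I aim to construct $\tilde v_1,\delta\in V$ with $\delta\neq 0$, $v_1\preceq \tilde v_1$ and $\tilde v_1+\delta=v_2$. The defining sum in Definition~\ref{def:PhiFunc} makes $\varphi^{(\gamma)}_{n'}$ additive in its argument, and $\|\cdot\|_1$ is additive on $V$ because all entries are nonnegative; combining these facts gives $\|\varphi^{(\gamma)}_{n'}(v_2)\|_1=\|\varphi^{(\gamma)}_{n'}(\tilde v_1)\|_1+\|\varphi^{(\gamma)}_{n'}(\delta)\|_1$, while Lemma~\ref{lem:monotonicityPhi} together with Lemma~\ref{lem:preceqNorm} delivers $\|\varphi^{(\gamma)}_{n'}(v_1)\|_1\le\|\varphi^{(\gamma)}_{n'}(\tilde v_1)\|_1$. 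Strictness of the target inequality will then reduce to the single positivity claim $\|\varphi^{(\gamma)}_{n'}(\delta)\|_1>0$.

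For the decomposition, I would let $K=\min\{J\in\mathbb{N}:\sum_{j\ge J}(v_1)_j=\sum_{j\ge J}(v_2)_j\}$. The assumption $\|v_1\|_1<\|v_2\|_1$ forbids $J=0$, so $K\ge 1$, and $K<\infty$ because both tail sums eventually vanish by Definition~\ref{def:V}. Since $v_1\preceq v_2$ always yields the non-strict tail-sum inequality, minimality of $K$ upgrades it to $\sum_{j\ge J}(v_1)_j<\sum_{j\ge J}(v_2)_j$ for every $J<K$; subtracting the equality at $J=K$ from the strict inequality at $J=K-1$ then yields $(v_2)_{K-1}\ge(v_1)_{K-1}+1\ge 1$. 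Now set $\delta={\rm e}_{K-1}$ and $\tilde v_1=v_2-{\rm e}_{K-1}$. Nonnegativity of $\tilde v_1$ follows from $(v_2)_{K-1}\ge 1$, and $v_1\preceq\tilde v_1$ is checked by a case split on $J$: for $J\ge K$ the tail sum of $\tilde v_1$ equals that of $v_2$, and for $J\le K-1$ it drops by exactly one, which is absorbed by the strict inequality just established.

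The final ingredient $\|\varphi^{(\gamma)}_{n'}(\delta)\|_1>0$ goes as follows. From Definition~\ref{def:PhiFunc}, $\varphi^{(\gamma)}_{n'}({\rm e}_{K-1})=\textnormal{cl}_{\min(K-1,n')}(\gamma_{\min(K-1,n'),n'})$, and the clipping function of Definition~\ref{def:alphafunction} preserves total mass, so its $L^1$-norm equals $\|\gamma_{\min(K-1,n'),n'}\|_1$. The first property of the bound condition in Definition~\ref{def:boundCondition} gives $\mathcal{H}_{n'}(\mathcal{S}_h)\preceq\gamma_{n,n'}$ for every $h\in\textnormal{RL}(n,n')$; since $\mathcal{S}_h\neq\emptyset$ for any such $h$ (and the convention even forces $\mathcal{H}_{n'}(\mathcal{S}_h)={\rm e}_0$ when $n=0$), one has $\|\mathcal{H}_{n'}(\mathcal{S}_h)\|_1\ge 1$, which by Lemma~\ref{lem:preceqNorm} transfers to $\|\gamma_{n,n'}\|_1\ge 1$ for all admissible $n,n'$. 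This supplies the needed positivity and closes the chain of inequalities.

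The main obstacle is the construction step: one cannot naively take $\delta=v_2-v_1$, since $V$ consists of nonnegative-integer sequences and coordinate-wise $(v_1)_j>(v_2)_j$ is perfectly compatible with $v_1\preceq v_2$. The role of the index $K-1$ is precisely to pick the last position where the cumulative slack between $v_1$ and $v_2$ is still positive, so that removing a single unit mass at that index keeps $\tilde v_1$ inside $V$ without destroying domination by $v_2$ of $v_1$. Everything else is routine bookkeeping with the additivity of $\varphi^{(\gamma)}_{n'}$ and of $\|\cdot\|_1$, combined with monotonicity and positivity facts already available in the appendix.
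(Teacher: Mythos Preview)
Your proof is correct and follows exactly the same decomposition strategy as the paper: write $v_2=\tilde v_1+\delta$ with $v_1\preceq\tilde v_1$ and $\delta\neq 0$, then use additivity of $\varphi^{(\gamma)}_{n'}$ and Lemma~\ref{lem:monotonicityPhi}. You are in fact more thorough than the paper, which merely asserts the existence of $\tilde v_1,\delta$ ``by Definition~\ref{def:preceq}'' and leaves the strict-positivity step $\|\varphi^{(\gamma)}_{n'}(\delta)\|_1>0$ implicit, whereas you supply an explicit construction via the index $K$ and justify positivity from the bound condition.
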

\begin{lemma} Assume that $\gamma^{(1)},\gamma^{(2)}\in\Gamma$ and that for any $n\in\left\{ 0,\dots,n' \right\}$, $\tilde \gamma^{(1)}_{n',n}\preceq\gamma^{(2)}_{n',n}$. For $n'\in\mathbb{N}_+$ and $v\in V$, it holds that
  \label{lem:sameArgPreceq}
  \begin{equation*}
	\varphi_{n'}^{(\gamma^{(1)})}(v)\preceq\varphi_{n'}^{(\gamma^{(2)})}(v).
  \end{equation*}
  \begin{proof}
	The result follows directly from Definition~\ref{def:PhiFunc}.
  \end{proof}
\end{lemma}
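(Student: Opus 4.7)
The plan is to verify Lemma~\ref{lem:sameArgPreceq} by a direct unfolding of Definition~\ref{def:PhiFunc} and three successive ``$\preceq$-preservation'' steps: clipping, scalar multiplication by a nonnegative integer, and finite summation. I read the hypothesis as $\gamma^{(1)}_{n,n'}\preceq\gamma^{(2)}_{n,n'}$ for every $n\in\{0,\dots,n'\}$ (the tilde and index order in the statement appear to be a typo); under that reading the argument is essentially a term-by-term comparison of the two sums defining $\varphi^{(\gamma^{(1)})}_{n'}(v)$ and $\varphi^{(\gamma^{(2)})}_{n'}(v)$.

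First I would recall that by Definition~\ref{def:PhiFunc},
\begin{equation*}
  \varphi^{(\gamma)}_{n'}(v)=\sum_{n=0}^{\infty}v_n\,\textnormal{cl}_{\min(n,n')}\bigl(\gamma_{\min(n,n'),n'}\bigr),
\end{equation*}
and that for every $n\in\mathbb{N}$ the index $\min(n,n')$ lies in $\{0,\dots,n'\}$, so the hypothesis supplies $\gamma^{(1)}_{\min(n,n'),n'}\preceq\gamma^{(2)}_{\min(n,n'),n'}$ for every such $n$. Applying Lemma~\ref{lem:keepRelation} to this comparison at clipping level $\min(n,n')$ yields
\begin{equation*}
  \textnormal{cl}_{\min(n,n')}\bigl(\gamma^{(1)}_{\min(n,n'),n'}\bigr)\;\preceq\;\textnormal{cl}_{\min(n,n')}\bigl(\gamma^{(2)}_{\min(n,n'),n'}\bigr)
\end{equation*}
for every $n\in\mathbb{N}$.

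Next I would observe that multiplication by the nonnegative integer $v_n$ preserves $\preceq$: this is immediate from Definition~\ref{def:preceq}, because if $a\preceq b$ then $\sum_{j\ge J}(v_n a)_j=v_n\sum_{j\ge J}a_j\le v_n\sum_{j\ge J}b_j=\sum_{j\ge J}(v_n b)_j$ for every $J$. (I would state this as a one-line observation rather than a separate lemma.) Therefore for every $n\in\mathbb{N}$,
\begin{equation*}
  v_n\,\textnormal{cl}_{\min(n,n')}\bigl(\gamma^{(1)}_{\min(n,n'),n'}\bigr)\;\preceq\;v_n\,\textnormal{cl}_{\min(n,n')}\bigl(\gamma^{(2)}_{\min(n,n'),n'}\bigr).
\end{equation*}

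Finally, since $v\in V$ has only finitely many nonzero entries (Definition~\ref{def:V}), the two sums in $\varphi^{(\gamma^{(1)})}_{n'}(v)$ and $\varphi^{(\gamma^{(2)})}_{n'}(v)$ are finite sums of $\preceq$-comparable summands, so Lemma~\ref{lem:preceqMultipleSummands} closes the argument:
\begin{equation*}
  \varphi^{(\gamma^{(1)})}_{n'}(v)=\sum_{n}v_n\,\textnormal{cl}_{\min(n,n')}\bigl(\gamma^{(1)}_{\min(n,n'),n'}\bigr)\;\preceq\;\sum_{n}v_n\,\textnormal{cl}_{\min(n,n')}\bigl(\gamma^{(2)}_{\min(n,n'),n'}\bigr)=\varphi^{(\gamma^{(2)})}_{n'}(v).
\end{equation*}
There is no real obstacle here: the lemma is a compatibility statement and every ingredient (clipping-monotonicity, scalar-multiplication-monotonicity, summation-monotonicity) is already isolated in earlier lemmas. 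The only thing worth flagging in writing is that the hypothesis covers precisely the indices $\min(n,n')\in\{0,\dots,n'\}$ that actually appear in $\varphi^{(\gamma)}_{n'}$, so no additional assumption on the ``$n>n'$'' regime is needed.
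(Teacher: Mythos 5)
Your proof is correct and is exactly the argument the paper leaves implicit: its proof of Lemma~\ref{lem:sameArgPreceq} simply says the result follows from Definition~\ref{def:PhiFunc}, and your unfolding via Lemma~\ref{lem:keepRelation}, the one-line scalar-multiplication observation, and Lemma~\ref{lem:preceqMultipleSummands} is the same machinery the authors invoke explicitly in the parallel Lemma~\ref{lem:monotonicityPhi}. You are also right that the hypothesis as printed (the tilde and the index order $\tilde\gamma^{(1)}_{n',n}$) is a typo for $\gamma^{(1)}_{n,n'}\preceq\gamma^{(2)}_{n,n'}$, which is how the lemma is used in the proofs of Lemmas~\ref{lem:MontufarNaive} and~\ref{lem:BinomialMontufar}.
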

From now on assume that $\gamma,\tilde \gamma$ are as in equations~\eqref{eq:zaslavskyGamma} and~\eqref{eq:binomialGamma}, i.e. for $n'\in\mathbb{N}_+,n\in\left\{ 0,\dots,n' \right\}$ assume
\begin{equation}
  \label{eq:gammadefComparison}
  \gamma_{n,n'}=\sum_{j=0}^{n}{n'\choose j}{\rm e}_{n'},\quad \tilde \gamma_{n,n'}=\sum_{j=0}^{n}{n'\choose j}{\rm e}_{n'-j}.
\end{equation}
\begin{lemma} When $\tilde \gamma$ is as above, then for all $n'\in\mathbb{N}_+$, $v_1,v_2\in V$
  \label{lem:strictNomIneqPhi}
  \begin{equation*}
	\textnormal{cl}_{n'}(v_1)\prec\textnormal{cl}_{n'}(v_2) \implies \|\varphi_{n'}^{(\tilde \gamma)}(v_1)\|_1<\|\varphi_{n'}^{(\tilde\gamma)}(v_2)\|_1.
  \end{equation*}
  \begin{proof}
	This follows directly from the definition of $\tilde\gamma$ in equation~\eqref{eq:gammadefComparison} and from Definition~\ref{def:PhiFunc}.
  \end{proof}
\end{lemma}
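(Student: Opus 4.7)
The plan is to reduce the strict $L^1$-norm inequality to a discrete Abel summation argument, exploiting the fact that every binomial coefficient $\binom{n'}{k}$ with $0 \le k \le n'$ is strictly positive.

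First I would compute $\|\varphi_{n'}^{(\tilde\gamma)}(v)\|_1$ explicitly. Since the clipping operator $\textnormal{cl}_k$ preserves the $L^1$ norm and $\|\tilde\gamma_{k,n'}\|_1 = \sum_{j=0}^{k}\binom{n'}{j} =: c_k$, Definition~\ref{def:PhiFunc} yields
\begin{equation*}
\|\varphi_{n'}^{(\tilde\gamma)}(v)\|_1 \;=\; \sum_{n=0}^{\infty} v_n\, c_{\min(n,n')} \;=\; \sum_{n=0}^{n'} \bigl(\textnormal{cl}_{n'}(v)\bigr)_n\, c_n,
\end{equation*}
so the norm depends on $v$ only through $\textnormal{cl}_{n'}(v)$. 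In particular the hypothesis that $\textnormal{cl}_{n'}(v_1)$ and $\textnormal{cl}_{n'}(v_2)$ are strictly ordered is exactly what is needed to feed into the right-hand side.

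Next I would rewrite the above weighted sum by Abel summation. Writing $c_n = \sum_{k=0}^{n}\binom{n'}{k}$ and swapping the order of summation, I obtain for any $w\in V$ supported in $\{0,\dots,n'\}$
\begin{equation*}
\sum_{n=0}^{n'} w_n\, c_n \;=\; \sum_{k=0}^{n'} \binom{n'}{k}\, T_k(w), \qquad T_k(w) := \sum_{n=k}^{n'} w_n.
\end{equation*}
Applying this with $w = \textnormal{cl}_{n'}(v_2) - \textnormal{cl}_{n'}(v_1)$ (as a formal signed difference of finite sequences, which still admits the Abel rearrangement), I get
\begin{equation*}
\|\varphi_{n'}^{(\tilde\gamma)}(v_2)\|_1 - \|\varphi_{n'}^{(\tilde\gamma)}(v_1)\|_1 \;=\; \sum_{k=0}^{n'} \binom{n'}{k}\bigl(T_k(\textnormal{cl}_{n'}(v_2)) - T_k(\textnormal{cl}_{n'}(v_1))\bigr).
\end{equation*}

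To finish I would invoke Definition~\ref{def:preceq}: the assumption $\textnormal{cl}_{n'}(v_1)\preceq \textnormal{cl}_{n'}(v_2)$ means precisely that every tail-sum difference $T_k(\textnormal{cl}_{n'}(v_2)) - T_k(\textnormal{cl}_{n'}(v_1))$ is non-negative, while $\textnormal{cl}_{n'}(v_1)\neq \textnormal{cl}_{n'}(v_2)$ forces at least one tail-sum difference to be strictly positive (otherwise all tail sums would coincide and the two sequences would be equal). Since both clipped histograms are supported in $\{0,\dots,n'\}$, this strict index $k^{*}$ satisfies $k^{*}\le n'$, so $\binom{n'}{k^{*}}>0$; combined with the non-negativity of every other summand this gives the strict inequality. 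The main potential hazard is bookkeeping, namely that $\preceq$ is not the entrywise order and $v_2-v_1$ need not be non-negative componentwise; the cleanest way around this is to phrase everything in terms of tail sums from the start, which is exactly what the Abel rearrangement does for me.
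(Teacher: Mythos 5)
Your proof is correct and is essentially the argument the paper has in mind: the paper's one-line proof simply asserts that the claim follows from equation~\eqref{eq:gammadefComparison} and Definition~\ref{def:PhiFunc}, and your computation---that $\|\varphi_{n'}^{(\tilde\gamma)}(v)\|_1=\sum_{n=0}^{n'}(\textnormal{cl}_{n'}(v))_n\,c_n$ with $c_n=\sum_{j=0}^{n}\binom{n'}{j}$, followed by Abel summation to convert the tail-sum characterization of $\prec$ into a strictly positive difference---is exactly the omitted unpacking, correctly handling the subtlety that $\preceq$ is a tail-sum order rather than an entrywise one. No gaps; the only implicit facts used (clipping preserves the $L^1$ norm of non-negative histograms, and $\binom{n'}{k}>0$ for $k\le n'$) are immediate.
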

\begin{lemma} For $n'\in\mathbb{N}_+$ and $v\in V$, it holds that
  \label{lem:sameArg}
  \begin{equation*}
	\|\varphi_{n'}^{(\tilde\gamma)}(v)\|_1=\|\varphi_{n'}^{(\gamma)}(v)\|_1.
  \end{equation*}
  \begin{proof}
	This directly follows from equation~\eqref{eq:gammadefComparison} and Definition~\ref{def:PhiFunc} because for $n'\in\mathbb{N}_+$, $n\in\left\{ 0,\dots,n' \right\}$ $\|\gamma_{n,n'}\|_1=\|\tilde\gamma_{n,n'}\|_1$. 
  \end{proof}
\end{lemma}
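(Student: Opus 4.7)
The plan is to reduce the claimed identity to a trivial pointwise statement about $L^1$ norms of the ``column vectors'' $\gamma_{m,n'}$ and $\tilde\gamma_{m,n'}$, at which point it follows directly from the explicit definitions. First, unpacking Definition~\ref{def:PhiFunc} yields
\[
\varphi^{(\gamma)}_{n'}(v)=\sum_{n=0}^{\infty} v_n\, \textnormal{cl}_{\min(n,n')}\!\left(\gamma_{\min(n,n'),n'}\right), \quad \varphi^{(\tilde\gamma)}_{n'}(v)=\sum_{n=0}^{\infty} v_n\, \textnormal{cl}_{\min(n,n')}\!\left(\tilde\gamma_{\min(n,n'),n'}\right),
\]
and both are finite sums by Definition~\ref{def:V}. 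Since every vector appearing in these sums has nonnegative entries, the $L^1$ norm distributes over the summation and over scalar multiplication by $v_n\in\mathbb{N}$. It therefore suffices to show, for every $m\in\{0,\dots,n'\}$, that $\|\textnormal{cl}_m(\gamma_{m,n'})\|_1=\|\textnormal{cl}_m(\tilde\gamma_{m,n'})\|_1$.

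Second, I will observe that the clipping function from Definition~\ref{def:alphafunction} preserves the $L^1$ norm: for any $i^*\in\mathbb{N}$ and any $w\in V$, clipping merely transfers all mass from indices strictly above $i^*$ onto index $i^*$, so $\|\textnormal{cl}_{i^*}(w)\|_1=\|w\|_1$. Applying this observation on both sides, the required equality reduces further to checking that $\|\gamma_{m,n'}\|_1=\|\tilde\gamma_{m,n'}\|_1$ for each $m\in\{0,\dots,n'\}$.

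Finally, this last identity is immediate from equation~\eqref{eq:gammadefComparison}: the vector $\gamma_{m,n'}$ places its entire mass on the single index $n'$ with weight $\sum_{j=0}^m {n'\choose j}$, whereas $\tilde\gamma_{m,n'}$ distributes the same total mass across the distinct indices $n',n'-1,\dots,n'-m$ with weights ${n'\choose 0},\dots,{n'\choose m}$; in either case the $L^1$ norm equals $\sum_{j=0}^m {n'\choose j}$. There is no genuine obstacle in this argument; the only point that deserves a moment's attention is the interchange of $\|\cdot\|_1$ with the finite sum and with the clipping, both of which are valid precisely because all quantities involved are nonnegative and because clipping redistributes, rather than discards, mass.
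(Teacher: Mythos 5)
Your proof is correct and takes essentially the same route as the paper's: the paper's one-line argument rests on exactly the observation you isolate, namely $\|\gamma_{n,n'}\|_1=\|\tilde\gamma_{n,n'}\|_1$ for all $n\in\{0,\dots,n'\}$ by equation~\eqref{eq:gammadefComparison}. Your additional steps --- that $\textnormal{cl}_{i^*}$ preserves the $\ell^1$ norm because it redistributes rather than discards mass, and that $\|\cdot\|_1$ distributes over the finite nonnegative sum in Definition~\ref{def:PhiFunc} --- simply make explicit what the paper leaves implicit.
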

\begin{proof}[Proof of Lemma~\ref{lem:BinomialMontufar}]
  The first property follows from Lemmas~\ref{lem:sameArgPreceq} and \ref{lem:monotonicityPhi}. For the strictness condition assume $n'\in\mathbb{N}_+$ and $i\in\left\{0,\dots,n' \right\}$. We distinguish two cases:
  \begin{itemize}
	\item If $2i\le n'$ then $\varphi_{n'}^{(\tilde\gamma)}({\rm e}_i)=\textnormal{cl}_i\left( \tilde\gamma_{i,n'} \right)=\textnormal{cl}_i\left(\gamma_{i,n'} \right)=\varphi_{n'}^{(\gamma)}({\rm e}_i)$.
	\item If $2i> n'$ then the following holds
	  \begin{eqnarray}
		&\varphi_{n'}^{(\tilde\gamma)}({\rm e}_i)= \textnormal{cl}_i\left( \tilde\gamma_{i,n'} \right)\preceq\textnormal{cl}_i\left(\gamma_{i,n'} \right)= \varphi_{n'}^{(\gamma)}({\rm e}_i)&\\
	  &\|\varphi_{n'}^{(\tilde\gamma)}({\rm e}_i)\|_1= \|\textnormal{cl}_i\left( \tilde\gamma_{i,n'} \right)\|_1=\|\textnormal{cl}_i\left(\gamma_{i,n'} \right)\|_1= \|\varphi_{n'}^{(\gamma)}({\rm e}_i)\|_1&\\
	  &\min\{j\in\mathbb{N}|(\tilde\gamma_{i,n'})_j\neq 0\}=n'-i&\\
	  &\min\{j\in\mathbb{N}|(\gamma_{i,n'})_j\neq 0\}=i>n-i&
	  \end{eqnarray}
	  From the above properties, it follows that for $n''\in\mathbb{N}_{+}$
	  \begin{equation*}
		\textnormal{cl}_{n''}\left( \varphi_{n'}^{(\tilde\gamma)}({\rm e}_i) \right) \prec\textnormal{cl}_{n''}\left( \varphi_{n'}^{(\gamma)}({\rm e}_i) \right) \iff n'-i<n''
	  \end{equation*}
  \end{itemize}
  The above case analysis shows that for $n',n''\in\mathbb{N}_{+}$ and $i\in\left\{ 0,\dots,n' \right\}$
  \begin{align*}
	\textnormal{cl}_{n''}\left( \varphi_{n'}^{(\tilde\gamma)}({\rm e}_i) \right) \prec\textnormal{cl}_{n''}\left( \varphi_{n'}^{(\gamma)}({\rm e}_i) \right) & \textnormal{ if } n'<\min(2i,i+n''),\\
	\textnormal{cl}_{n''}\left( \varphi_{n'}^{(\tilde\gamma)}({\rm e}_i) \right) =\textnormal{cl}_{n''}\left( \varphi_{n'}^{(\gamma)}({\rm e}_i) \right) & \textnormal{ if } n'\ge\min(2i,i+n'').
  \end{align*} 
  This implies for $n',n\in\mathbb{N}_+$ and $v\in V$
  \begin{align}
	\label{eq:clipPreceq}
	\textnormal{cl}_{n''}\left( \varphi_{n'}^{(\tilde\gamma)}(\textnormal{cl}_{n'}(v)) \right) \prec\textnormal{cl}_{n''}\left( \varphi_{n'}^{(\gamma)}(\textnormal{cl}_{n'}(v)) \right) &\textnormal{ if } n'<\min(2i^*,i^*+n''),\\
	\label{eq:clipEquality}
	\textnormal{cl}_{n''}\left( \varphi_{n'}^{(\tilde\gamma)}(\textnormal{cl}_{n'}(v)) \right) =\textnormal{cl}_{n''}\left( \varphi_{n'}^{(\gamma)}(\textnormal{cl}_{n'}(v)) \right) &\textnormal{ if } n'\ge\min(2i^*,i^*+n'')
  \end{align} with $i^*=\min(n',\max\left\{ i\in\mathbb{N}|v_i\neq 0 \right\})$.

  Note that for all $l\in\left\{ 1,\dots,L-1 \right\}$
  \begin{equation}
	\label{eq:maxNonzero}
	\max\left\{ i\in\mathbb{N}\middle\vert\left(\varphi_{l-1}^{(\gamma)}\circ\cdots\circ\varphi^{(\gamma)}_{1}({\rm e}_0) \right)_i\neq 0 \right\}=\min_{j=0,\dots,l-1} n_j
  \end{equation}since neither $\left( \textnormal{cl}_{n}(\gamma_{n,n'}) \right)_n=0$ nor $\left( \textnormal{cl}_n(\tilde \gamma_{n,n'}) \right)_n=0$ for any $n'\in\mathbb{N}_{+},n\in\left\{ 0,\dots,n' \right\}$ by equation~\eqref{eq:gammadefComparison}.  Now note that
  \begin{align}
  &n_l<\min\left( n_0,\dots,n_{l} \right)+\min\left( n_0,\dots,n_{l+1} \right)\nonumber\\
	  \label{eq:reformulationMin}
	  \iff& n_l<\min\left(2\min\left( n_0,\dots,n_{l} \right),\min\left( n_0,\dots,n_{l}\right) +n_{l+1}\right)
  \end{align}
  \begin{itemize}
	\item If there exists no $l\in\left\{ 1,\dots,L-1 \right\}$ such that $n_l<\min\left( n_0,\dots,n_{l} \right)+\min\left( n_0,\dots,n_{l+1} \right)$ then for all $l\in \left\{ 1,\dots,L-1 \right\}$ 
	  \begin{equation*}
		n_l\ge\min(2\min\left( n_0,\dots,n_{l} \right),\min\left( n_0,\dots,n_{l} \right)+n_{l+1})
	  \end{equation*} by equation~\eqref{eq:reformulationMin}
	 such that by equations~\eqref{eq:maxNonzero} and~\eqref{eq:clipEquality}, 
	  \begin{equation*}
		\textnormal{cl}_{n_{l+1}}\left( \varphi_{n_l}^{(\tilde\gamma)}(\textnormal{cl}_{n_l}(\varphi^{(\gamma)}_{n_{l-1}}\circ\dots\circ\varphi^{(\gamma)}_{n_1}({\rm e}_{n_0}))) \right) =\textnormal{cl}_{n_{l+1}}\left( \varphi_{n_l}^{(\gamma)}(\textnormal{cl}_{n_l}(\varphi^{(\gamma)}_{n_{l-1}}\circ\dots\circ\varphi^{(\gamma)}_{n_1}({\rm e}_{n_0}))) \right).
		\label{eq:exchangeTilde}
	  \end{equation*} Successive application of the above result for $l=L-1,\dots,1$ yields
	  \begin{equation*}
		\varphi^{(\gamma)}_{n_L}\circ\dots\circ\varphi^{(\gamma)}_{n_1}({\rm e}_{n_0})=\varphi^{(\gamma)}_{n_L}\circ\varphi^{(\tilde\gamma)}_{n_{L-1}}\circ\dots\circ\varphi^{(\tilde \gamma)}_{n_1}({\rm e}_{n_0})
	  \end{equation*} such that Lemma~\ref{lem:sameArg} implies 
	  $\|\varphi^{(\gamma)}_{n_L}\circ\dots\circ\varphi^{(\gamma)}_{n_1}({\rm e}_{n_0})\|_1=\|\varphi^{(\tilde\gamma)}_{n_L}\circ\dots\circ\varphi^{(\tilde \gamma)}_{n_1}({\rm e}_{n_0})\|_1$, hence the bounds are equal for $\gamma$ and $\tilde \gamma$.
	\item If there is a $l\in\left\{ 1,\dots,L-1 \right\}$ such that $n_l<\min\left( n_0,\dots,n_{l} \right)+\min\left( n_0,\dots,n_{l+1} \right)$, then equations~\eqref{eq:clipPreceq}, \eqref{eq:reformulationMin} and \eqref{eq:maxNonzero} imply
	  \begin{equation*}
		\textnormal{cl}_{n_{l+1}}\left( \varphi_{n_l}^{(\tilde\gamma)}(\textnormal{cl}_{n_l}(\varphi^{(\gamma)}_{n_{l-1}}\circ\dots\circ\varphi^{(\gamma)}_{n_1}({\rm e}_{n_0}))) \right) \prec\textnormal{cl}_{n_{l+1}}\left( \varphi_{n_l}^{(\gamma)}(\textnormal{cl}_{n_l}(\varphi^{(\gamma)}_{n_{l-1}}\circ\dots\circ\varphi^{(\gamma)}_{n_1}({\rm e}_{n_0}))) \right).
	  \end{equation*}If we combine this with Lemmas~\ref{lem:sameArgPreceq}, \ref{lem:alphapreceq} and \ref{lem:monotonicityPhi}, we obtain
	  \begin{equation*}
		\textnormal{cl}_{n_{l+1}}\left( \varphi_{n_l}^{(\tilde\gamma)}(\textnormal{cl}_{n_l}(\varphi^{(\tilde\gamma)}_{n_{l-1}}\circ\dots\circ\varphi^{(\tilde\gamma)}_{n_1}({\rm e}_{n_0}))) \right) \prec\textnormal{cl}_{n_{l+1}}\left( \varphi_{n_l}^{(\gamma)}(\textnormal{cl}_{n_l}(\varphi^{(\gamma)}_{n_{l-1}}\circ\dots\circ\varphi^{(\gamma)}_{n_1}({\rm e}_{n_0}))) \right).
	  \end{equation*}Using Lemma~\ref{lem:strictNomIneqPhi} for $n'=n_{l+1}$ once and then Lemmas~\ref{lem:monotonicityPhi}, \ref{lem:strictstrictMonotonicityPhi} and \ref{lem:sameArgPreceq} inductively yields $\|\varphi^{(\tilde\gamma)}_{n_L}\circ\dots\circ\varphi^{(\tilde\gamma)}_{n_1}({\rm e}_{n_0})\|_1<\|\varphi^{(\gamma)}_{n_L}\circ\dots\circ\varphi^{(\gamma)}_{n_1}({\rm e}_{n_0})\|_1$ hence the binomial bound is strictly better than Mont\'ufar's bound.
  \end{itemize}
  \end{proof}
\newpage

\bibliography{obereSchranke} 
\bibliographystyle{ieeetr}
  \end{document}